\documentclass[11pt]{article}
\usepackage{vitercik}
\usepackage{subfigure}
\usepackage{hyperref}

\newcommand{\score}{\texttt{score}}
\newcommand{\tree}{\pazocal{T}}
\newcommand{\cV}{\pazocal{V}}
\newcommand{\rad}{\mathcal{R}}
\newcommand{\wcrad}{\overline{\rad}}

\begin{document}
	\title{Refined bounds for algorithm configuration:\\
		The knife-edge of dual class approximability}
	\author{
		Maria-Florina Balcan \\ \small Carnegie Mellon University \\ \small \texttt{ninamf@cs.cmu.edu}
		\and 
		Tuomas Sandholm \\ \small Carnegie Mellon University \\
		\small Optimized Markets, Inc.\\
		\small Strategic Machine, Inc.\\
		\small Strategy Robot, Inc.\\
		\small \texttt{sandholm@cs.cmu.edu}
		\and 
		Ellen Vitercik \\ \small Carnegie Mellon University \\ \small \texttt{vitercik@cs.cmu.edu}}
	\date{\today}
	
	\maketitle
	
	\begin{abstract}
		Automating algorithm configuration is growing increasingly necessary as algorithms come with more and more tunable parameters. It is common to tune parameters using machine learning, optimizing performance metrics such as runtime and solution quality. The training set consists of problem instances from the specific domain at hand. We investigate a fundamental question about these techniques: how large should the training set be to ensure that a parameter's average empirical performance over the training set is close to its expected, future performance? We answer this question for algorithm configuration problems that exhibit a widely-applicable structure: the algorithm's performance as a function of its parameters can be approximated by a ``simple'' function.
We show that if this approximation holds under the $L^{\infty}$-norm, we can provide strong sample complexity bounds. On the flip side, if the approximation holds only under the $L^p$-norm for $p < \infty$, it is not possible to provide meaningful sample complexity bounds in the worst case. We empirically evaluate our bounds in the context of integer programming, one of the most powerful tools in computer science. Via experiments, we obtain sample complexity bounds that are up to 700 times smaller than the previously best-known bounds~\citep{Balcan18:Learning}.
	\end{abstract}
\section{Introduction}
Algorithms typically have tunable parameters that significantly impact their performance, measured in terms of runtime, solution quality, and so on. Machine learning is often used to automate parameter tuning~\citep{Horvitz01:Bayesian,Hutter09:ParamILS,Kadioglu10:ISAC,Sandholm13:Very-Large-Scale}: given a \emph{training set} of problem instances from the application domain at hand, this automated procedure returns a parameter setting that will ideally perform well on future, unseen instances.

It is important to be careful when using this automated approach: if the training set is too small, a parameter setting with strong average empirical performance over the training set may have poor future performance on unseen instances.
\emph{Generalization bounds} provide guidance when it comes to selecting the training set size. They bound the difference between an algorithm's performance on average over the training set (drawn from an unknown, application-specific distribution) and its expected performance on unseen instances.
 These bounds can be used to evaluate a parameter setting returned by any black-box procedure: they bound the difference between that parameter's average performance on the training set and its expected performance.

At a high level, we provide generalization bounds that hold when an algorithm's performance as a function of its parameters exhibits a widely-applicable structure: it can be approximated by a ``simple'' function. We prove that it is possible to provide strong generalization bounds when the approximation holds under the $L^{\infty}$-norm. Meanwhile, it is not possible to provide strong guarantees in the worst-case if the approximation only holds under the $L^p$-norm for $p < \infty$. Therefore, this connection between learnability and approximability is balanced on a knife-edge.

Our analysis is based on structure exhibited by \emph{primal} and \emph{dual} functions~\citep{Assouad83:Densite}, which we now describe at a high level. To provide generalization bounds, a common strategy is to bound the \emph{intrinsic complexity} of the following function class $\cF$: for every parameter vector $\vec{r}$ (such as a CPLEX parameter setting) there is a function $f_{\vec{r}} \in \cF$ that takes as input a problem instance $x$ (such as an integer program) and returns $f_{\vec{r}}(x)$, the algorithm's \emph{performance} on input $x$ when parameterized by $\vec{r}$. Performance is measured by runtime, solution quality, or some other metric. The functions $f_{\vec{r}}$ are called \emph{primal functions}.

The class $\cF$ is gnarly: in the case of integer programming algorithm configuration, the domain of every function in $\cF$ consists of integer programs, so it is unclear how to visualize or plot these functions, and there are no obvious notions of Lipschitzness or smoothness to rely on. Rather than fixing a parameter setting $\vec{r}$ and varying the input $x$ (as under the function $f_{\vec{r}}$), it can be enlightening to instead fix the input $x$ and analyze the algorithm's performance as a function of $\vec{r}$. This \emph{dual function} is denoted $f_x^*(\vec{r})$.
The dual functions have a simple, Euclidean domain, they are typically easy to plot, and they often have ample structure we can use to bound the intrinsic complexity of the class $\cF$.

\begin{figure}
	\centering
	\subfigure[\label{fig:apx_pwc_pwc}]{\includegraphics{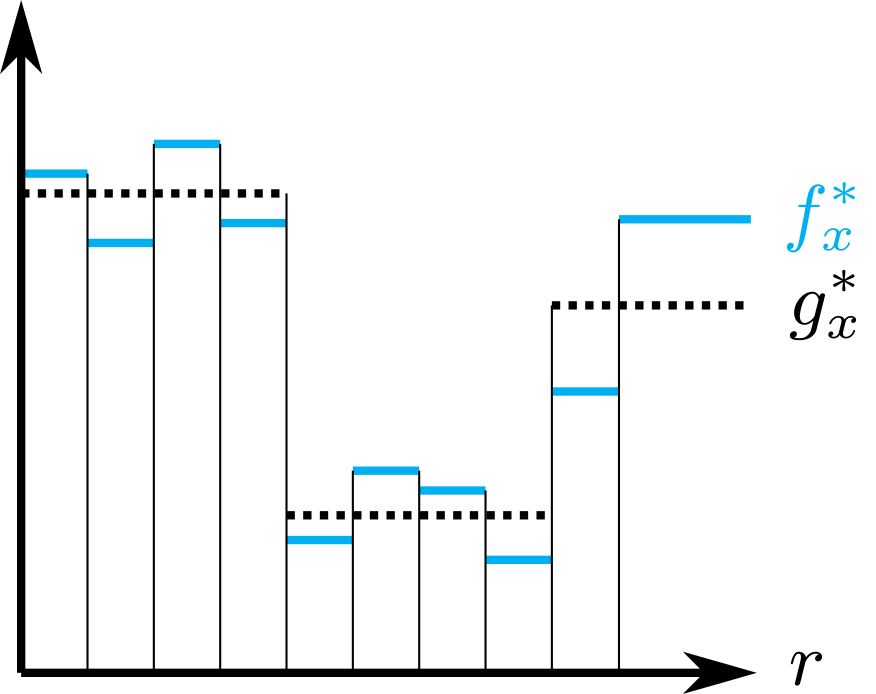}}\qquad
	\subfigure[]{\includegraphics{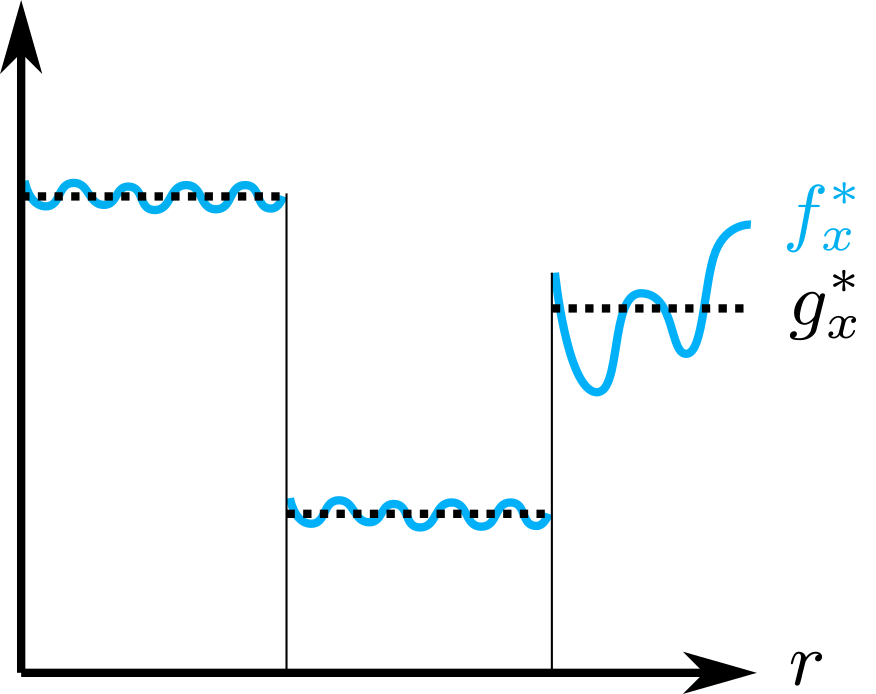}}\qquad
	\subfigure[]{\includegraphics{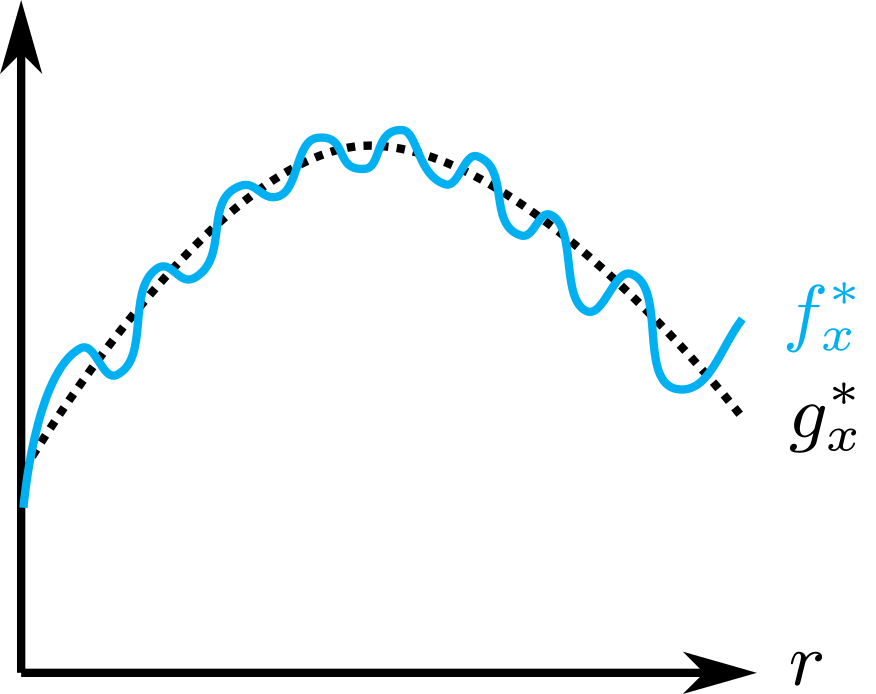}}
	\caption{Examples of dual functions $f_x^* : \R \to \R$ (solid blue lines) which are approximated by simpler functions $g_x^*$ (dotted black lines).}
	\label{fig:apx}
\end{figure}
\paragraph{Our contributions.} We observe that for many configuration problems, the dual functions can be closely approximated by ``simple'' functions, as in Figure~\ref{fig:apx}. This raises the question: can we exploit this structure to provide strong generalization guarantees?
We show that if the dual functions are approximated by simple functions under the $L^{\infty}$-norm (meaning the maximum distance between the functions is small), then we can provide strong generalization guarantees.
However, this is no longer true when the approximation only holds under the $L^p$-norm for $p < \infty$: we present a set of functions whose duals are well-approximated by the simple constant function $g(x) = \frac{1}{2}$ under the $L^p$-norm (meaning $\sqrt[p]{\int \left|f_x^*(\vec{r}) - \frac{1}{2}\right|^p \, d\vec{r}}$ is small), but which are not learnable.

We provide an algorithm that finds approximating simple functions in the following widely-applicable setting: the dual functions are piecewise-constant with a large number of pieces, but can be approximated by simpler piecewise-constant functions with few pieces, as in Figure~\ref{fig:apx_pwc_pwc}. This is the case in our integer programming experiments.

In our experiments, we demonstrate significant practical implications of our analysis. We configure CPLEX, one of the most widely-used integer programming solvers. Integer programming has diverse applications throughout science. Prior research has shown that the dual functions associated with various CPLEX parameters are piecewise constant and has provided generalization bounds that grow with the number of pieces~\citep{Balcan18:Learning}. However, the number of pieces can be so large that these bounds can be quite loose. We show that these dual functions can be approximated under the $L^{\infty}$-norm by simple functions (as in Figure~\ref{fig:apx_pwc_pwc}), so our theoretical results imply strong generalization guarantees.
In our experiments, we demonstrate that in order to obtain the same generalization bound, the training set size required under our analysis is up to 700 times smaller than that of \citet{Balcan18:Learning}. Improved sample complexity guarantees imply faster learning algorithms, since the learning algorithm needs to analyze fewer training instances.

\paragraph{Related research.}
In algorithm configuration,
several papers have provided generalization guarantees for specific algorithm families, including greedy algorithms~\citep{Gupta17:PAC, Balcan18:Dispersion}, clustering algorithms~\citep{Balcan17:Learning, Balcan18:Data, Balcan20:LearningToLink}, and integer programming algorithms~\citep{Balcan18:Learning}. In contrast, we provide general guarantees that apply to any configuration problem that satisfies a widely-applicable structure: the dual functions are approximately simple. A strength of our results is that they are not tied to any specific algorithm family, though we show that our guarantees can be empirically much stronger than the best-known bounds. \citet{Balcan19:How} show that if the dual functions are simple---for example, they are piecewise-constant with few pieces---then it is possible to provide strong generalization bounds. We observe, however, that often the dual functions themselves are not particularly simple, but can be approximated by simple functions. We exploit this  structure to provide more general guarantees. The analysis tools from prior research do not apply to this more general structure, so we require new, refined proof techniques.

Our guarantees are configuration-procedure-agnostic: no matter how one tunes the parameters using the training set, we bound the difference between the resulting parameter setting's performance on average over the training set and its expected performance on unseen instances.
A related line of research has provided learning-based algorithm configuration procedures with provable guarantees~\citep{Kleinberg17:Efficiency, Kleinberg19:Procrastinating,Weisz18:LEAPSANDBOUNDS, Weisz19:CapsAndRuns, Balcan20:Learning}. Unlike the results in this paper, their guarantees are not configuration-procedure-agnostic: they apply to the specific configuration procedures they propose. Moreover, their procedures only apply to finding configurations that minimize computational resource usage, such as runtime, whereas the guarantees in this paper apply to more general measures of algorithmic performance, such as solution quality.

A related line of research has studied integer programming algorithm configuration~\citep{Hutter09:ParamILS, Sabharwal12:Guiding, Sandholm13:Very-Large-Scale,He14:Learning, Balafrej15:Multi, Khalil16:Learning,Khalil17:Learning, DiLiberto16:Dash, Lodi17:Learning, Alvarez17:Machine, Kruber17:Learning, Balcan18:Learning}, as do we, though our results apply more generally. The results in these papers are primarily empirical, with the exception of the paper by \citet{Balcan18:Learning}, with which we compare extensively in Section~\ref{sec:IP}.

\section{Notation and background}\label{sec:prelim}
 We study functions that map an abstract domain $\cX$ to $[0,1]$. We  denote the set of all such functions as $[0,1]^{\cX}$. 
 The learning algorithms we analyze have sample access to an unknown distribution $\dist$ over examples $x \in \cX$ and aim to find a function $f \in \cF$ with small expected value $\E_{x \sim \dist}[f(x)]$.

\subsection{Problem definition}
We provide \emph{generalization guarantees}, which bound the difference between the expected value $\E_{x \sim \dist}[f(x)]$ of any function $f \in \cF$ and its empirical average value $\frac{1}{N}\sum_{i = 1}^N f\left(x_i\right)$ over a training set $x_1, \dots, x_N \sim \dist$. 
We focus on functions that are parameterized by a set of vectors $\cR \subseteq \R^d$. Given a vector $\vec{r} \in \cR$, we denote the corresponding function as $f_{\vec{r}} : \cX \to [0,1]$, and we define $\cF = \left\{f_{\vec{r}} \mid \vec{r} \in \cR\right\}$.

Generalization guarantees are particularly useful for analyzing the expected performance of empirical risk minimization learning algorithms for the following reason. Suppose we know that for any function $f \in \cF$, $\left|\frac{1}{N}\sum_{i = 1}^N f\left(x_i\right) - \E_{x \sim \dist}[f(x)]\right| \leq \epsilon$. Let $\hat{f}$ be the function in $\cF$ with smallest average value over the training set: $\hat{f} = \argmin_{f \in \cF}\sum_{i = 1}^N f\left(x_i\right)$. Then $\hat{f}$ has nearly optimal expected value: $\E_{x \sim \dist}\left[\hat{f}(x)\right] - \min_{f \in \cF} \E_{x \sim \dist}[f(x)] \leq 2\epsilon.$
 
 \subsection{Integer programming algorithm configuration}\label{sec:IP_intro}
 
We use integer programming algorithm configuration as a running example, though our results are much more general. An \emph{integer program (IP)} is defined by a matrix $A \in \R^{m \times n}$, a constraint vector $\vec{b} \in \R^m$, an objective vector $\vec{c}\in \R^n$, and a set of indices $I \subseteq [n]$. The goal is to find a vector $\vec{z} \in \R^n$ such that $\vec{c}\cdot \vec{z}$ is maximized, subject to the constraints that $A\vec{z} \leq \vec{b}$ and for every index $i \in I$,  $z_i \in \{0,1\}$.

In our experiments, we tune the parameters of branch-and-bound (B\&B)~\citep{Land60:Automatic}, the most widely-used algorithm for solving IPs. It is used under the hood by commercial solvers such as CPLEX and Gurobi. 
We provide a brief, high-level overview of B\&B, and refer the reader to the textbook by \citet{Nemhauser99:Integer} for more details. B\&B builds a search tree to solve an input IP $x$. At the tree's root is the original IP $x$. At each round, B\&B chooses a leaf of the search tree, which represents an IP $x'$. It does so using a \emph{node selection policy}; common choices include depth- and best-first search. Then, it chooses an index $i \in I$ using a \emph{variable selection policy}. It next \emph{branches} on $z_i$: it sets the left child of $x'$ to be that same integer program $x'$, but with the additional constraint that $z_i = 0$, and it sets the right child of $x'$ to be that same integer program, but with the additional constraint that $z_i = 1$. It solves both LP relaxations, and if either solution satisfies the integrality constraints of the original IP $x$, it constitutes a feasible solution to $x$. B\&B \emph{fathoms} a leaf---which means that it never will branch on that leaf---if it can guarantee that the optimal solution does not lie along that path. B\&B terminates when it has fathomed every leaf. At that point, we can guarantee that the best solution to $x$ found so far is optimal.
In our experiments, we tune the parameters of the variable selection policy, which we describe in more detail in Section~\ref{sec:IP}.

In this setting, $\cX$ is a set of IPs and the functions in $\cF$ are parameterized by CPLEX parameter vectors $\vec{r} \in \R^d$, denoted $\cF = \left\{f_{\vec{r}} \mid \vec{r} \in \R^d\right\}.$ In keeping with prior work~\citep{Balcan18:Learning}, $f_{\vec{r}}(x)$ equals the size of the B\&B tree CPLEX builds given the parameter setting $\vec{r}$ and input IP $x$, normalized to fall in $[0,1]$. The learning algorithms we study take as input a training set of IPs sampled from $\dist$ and return a parameter vector. Since our goal is to minimize tree size, ideally, the size of the trees CPLEX builds using that parameter setting should be small in expectation over $\dist$.

\section{Dual functions}\label{sec:dual}
Our goal is to provide generalization guarantees for the function class $\cF= \left\{f_{\vec{r}} \mid \vec{r} \in \cR\right\}$.
To do so, we use structure exhibited by the \emph{dual function class}. Every function in the dual class is defined by an element $x \in \cX$, denoted $f_x^* : \cR \to [0,1]$. Naturally, $f_x^*(\vec{r}) = f_{\vec{r}}(x)$. The dual class $\cF^* = \left\{f_x^* \mid x \in \cX\right\}$ is the set of all dual functions.

The dual functions are intuitive in our integer programming example. For any IP $x$, the dual function $f_x^*$ measures the size of the tree CPLEX builds (normalized to lie in the interval $[0,1]$) when given $x$ as input, as a function of the CPLEX parameters.
Duals are also straightforward in more abstract settings: if $\cX = \R^d$ and $\cF$ is the set of linear functions $f_{\vec{r}}(\vec{x}) = \vec{r} \cdot \vec{x}$, each dual function $f_{\vec{x}}^*(\vec{r}) = \vec{r} \cdot \vec{x}$ is also linear. When $\cF$ consists of the constant functions $f_{\vec{r}}(x) = \vec{r}$, each dual function is the identity function $f_x^*(\vec{r}) = \vec{r}$.

Prior research shows that when the dual functions are simple---for example, they are piecewise-constant with a small number of pieces---it is possible to provide strong generalization bounds~\citep{Balcan19:How}. 
In many settings, however, we find that the dual functions themselves are not simple, but are approximated by simple functions, as in Figure~\ref{fig:apx}. We formally define this concept as follows.
\begin{definition}[$(\gamma, p)$-approximate]
	Let $\cF = \left\{f_{\vec{r}} \mid \vec{r} \in \cR\right\}$ and $\cG = \left\{g_{\vec{r}} \mid \vec{r} \in \cR\right\}$ be two sets of functions mapping $\cX$ to $[0,1]$. We assume that all dual functions $f_x^*$ and $g_x^*$ are integrable over the domain $\cR$. We say that the dual class $\cG^*$ \emph{$(\gamma, p)$-approximates} the dual class $\cF^*$ if for every element $x$, the distance between the functions $f_x^*$ and $g_x^*$ is at most $\gamma$ under the $L^p$-norm. For $p \in [1, \infty)$, this means that $\norm{f_x^* - g_x^*}_p := \sqrt[p]{\int_{\cR} \left|f_x^*(\vec{r}) - g_x^*(\vec{r})\right|^p \, d \vec{r}} \leq \gamma$ and when $p = \infty$, this means that $\norm{f_x^* - g_x^*}_{\infty} :=\sup_{\vec{r} \in \cR} \left|f_x^*(\vec{r}) - g_x^*\left(\vec{r}\right)\right| \leq \gamma.$
\end{definition}

\section{Learnability and approximability}\label{sec:learnability}
In this section, we investigate the connection between learnability and approximability. In Section~\ref{sec:UC}, we prove that when the dual functions are approximable under the $L_{\infty}$-norm by simple functions, we can provide strong generalization bounds. In Section~\ref{sec:IP}, we empirically evaluate these improved guarantees in the context of integer programming. Finally, in Section~\ref{sec:not_learnable}, we prove that it is not possible to provide non-trivial generalization guarantees (in the worst case) when the norm under which the dual functions are approximable is the $L_p$-norm for $p < \infty$.

\subsection{Data-dependent generalization guarantees}\label{sec:UC}
We now show that if the dual class $\cF^*$ is $(\gamma, \infty)$-approximated by the dual of a ``simple'' function class $\cG$, we can provide strong generalization bounds for the class $\cF$.
There are many different tools for measuring how ``simple'' a function class is. We use \emph{Rademacher complexity}~\citep{Koltchinskii01:Rademacher}, which intuitively measures the extent to which functions in $\cF$ match random noise vectors $\vec{\sigma} \in \{-1,1\}^N$. 
\begin{definition}[Rademacher complexity]
The \emph{empirical Rademacher complexity} of a function class $\cF = \left\{f_{\vec{r}} \mid \vec{r} \in \cR\right\}$ given a set $\sample = \left\{x_1, \dots, x_N\right\} \subseteq \cX$ is \[\erad(\cF) = \frac{1}{N} \E_{\vec{\sigma} \sim \{-1,1\}^N} \left[\sup_{\vec{r} \in \cR} \sum_{i = 1}^N \sigma_i f_{\vec{r}}\left(x_i\right)\right],\] where each  $\sigma_i$ equals $-1$ or 1 with equal probability.
\end{definition}
The summation $\sum_{i = 1}^N \sigma_i f_{\vec{r}}\left(x_i\right)$  measures the correlation between the random noise vector $\vec{\sigma}$ and the vector $\left(f_{\vec{r}}\left(x_1\right), \dots, f_{\vec{r}}\left(x_N\right)\right)$. By taking the supremum over all parameter vectors $\vec{r} \in \cR$, we measure how well functions in the class $\cF$ correlate with $\vec{\sigma}$ over the sample $\sample$. Therefore, $\erad(\cF)$ measures how well functions in the class $\cF$ correlate with random noise on average over $\sample$. Rademacher complexity thus provides a way to measure the intrinsic complexity of $\cF$ because the more complex the class $\cF$ is, the better its functions can correlate with random noise.
For example, if the class $\cF$ consists of just a single function, $\erad(\cF) = 0$. At the other extreme, if $\cX = [0,1]$ and $\cF = [0,1]^{[0,1]}$ consists of all functions mapping $[0,1]$ to $[0,1]$, $\erad(\cF) = \frac{1}{2}$.

Classic learning-theoretic results provide guarantees based on Rademacher complexity, such as the following.
\begin{theorem}[e.g., \citet{Mohri12:Foundations}]\label{thm:rad_gen}
	For any $\delta \in (0,1)$, with probability $1-\delta$ over the draw of $N$ samples $\sample = \left\{x_1, \dots, x_N\right\} \sim \dist^N$, for all functions $f_{\vec{r}} \in \cF$, 
\[\left|\frac{1}{N} \sum_{i = 1}^N f_{\vec{r}}\left(x_i\right) - \E\left[f_{\vec{r}}(x)\right]\right|
= O \left(\erad(\cF) + \sqrt{\frac{1}{N}\log \frac{1}{\delta}}\right).\]
\end{theorem}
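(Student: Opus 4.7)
The plan is to follow the standard template for data-dependent Rademacher generalization bounds, which combines two applications of McDiarmid's bounded-differences inequality with the classical symmetrization argument. Since the only data-dependent ingredient on the right-hand side of the claim is $\erad(\cF)$, one application of McDiarmid will concentrate the uniform deviation around its mean, the other will concentrate the empirical Rademacher complexity around its mean, and the symmetrization step will link the two means.

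I would first define $\Phi(\sample) = \sup_{\vec{r} \in \cR}\bigl(\E_{x \sim \dist}[f_{\vec{r}}(x)] - \frac{1}{N}\sum_{i = 1}^N f_{\vec{r}}(x_i)\bigr)$. Because every $f_{\vec{r}}$ takes values in $[0,1]$, replacing a single $x_i$ perturbs $\Phi$ by at most $1/N$, so McDiarmid yields $\Phi(\sample) \le \E_{\sample}[\Phi(\sample)] + O(\sqrt{\log(1/\delta)/N})$ with high probability. Next I would bound $\E_{\sample}[\Phi(\sample)]$ by introducing an i.i.d.\ ghost sample $\sample' = \{x_1',\dots,x_N'\}$, rewriting the expectation $\E_{x \sim \dist}[f_{\vec{r}}(x)]$ as an average over $\sample'$, pulling the supremum inside the outer expectation, and inserting Rademacher signs: this step is legal because each difference $f_{\vec{r}}(x_i') - f_{\vec{r}}(x_i)$ is symmetric in distribution, so multiplying by independent $\sigma_i \in \{-1,+1\}$ leaves the joint law unchanged. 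Splitting the supremum of the signed difference into the sum of two suprema produces the familiar factor of two, giving $\E_{\sample}[\Phi(\sample)] \le 2\,\E_{\sample}[\erad(\cF)]$. Finally, because $\erad(\cF)$ itself is a bounded-differences function of $\sample$ (with constant $1/N$, again using $f_{\vec{r}} \in [0,1]$), a second application of McDiarmid lets me replace $\E_{\sample}[\erad(\cF)]$ by $\erad(\cF) + O(\sqrt{\log(1/\delta)/N})$. Chaining these three inequalities gives the one-sided version of the claim.

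The remaining obstacles are more bookkeeping than substance: handling the absolute value on the left-hand side, and tracking the constants across the high-probability bounds. For the absolute value I would rerun the entire argument on the negated class $\{-f_{\vec{r}} : \vec{r} \in \cR\}$, exploiting the fact that empirical Rademacher complexity is invariant under negation since $-\vec{\sigma}$ has the same distribution as $\vec{\sigma}$; a union bound that splits $\delta$ across the two sides and the two McDiarmid applications only affects constants absorbed into the big-$O$. No step is conceptually hard---the main work is verifying the bounded-differences constants and carefully writing out the symmetrization step---which is why this theorem is quoted as a black-box result from the textbook rather than proved from scratch.
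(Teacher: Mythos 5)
Your proposal is correct and is exactly the standard textbook argument (McDiarmid on the uniform deviation, ghost-sample symmetrization giving the factor of two, a second McDiarmid application to pass from the expected to the empirical Rademacher complexity, and negation plus a union bound for the absolute value), which is precisely the proof the paper implicitly relies on by citing \citet{Mohri12:Foundations} rather than proving the result itself. No gaps; the bounded-difference constants and the invariance of $\erad$ under negation are handled correctly.
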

Theorem~\ref{thm:rad_gen} is a \emph{generalization guarantee} because it measures the extent to which a function's empirical average value over the samples generalizes to its expected value.

Ideally, $\erad(\cF)$ converges to zero as the sample size $N$ grows so the bound in Theorem~\ref{thm:rad_gen} also converges to zero. If the class $\cF$ consists of just a single function, $\erad(\cF) = 0$, and Theorem~\ref{thm:rad_gen} recovers Hoeffding's bound. If, for example, $\cX = [0,1]$ and $\cF = [0,1]^{[0,1]}$, $\erad(\cF) = \frac{1}{2}$, and the bound from Theorem~\ref{thm:rad_gen} is meaningless.

We show that if the dual class $\cF^*$ is $(\gamma, \infty)$-approximated by the dual of a class $\cG$ with small Rademacher complexity, then the Rademacher complexity of $\cF$ is also small.
 The full proof of the following theorem in Appendix~\ref{app:UC}.

\begin{restatable}{theorem}{mainRad}\label{thm:rad}
	Let $\cF = \left\{f_{\vec{r}} \mid \vec{r} \in \cR\right\}$ and $\cG = \left\{g_{\vec{r}} \mid \vec{r} \in \cR\right\}$ consist of functions mapping $\cX$ to $[0,1]$. For any $\sample \subseteq \cX$, $\erad(\cF) \leq \erad(\cG) + \frac{1}{|\sample|} \sum_{x \in \sample}  \norm{f_{x}^* - g_{x}^*}_{\infty}.$
\end{restatable}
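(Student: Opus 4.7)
The plan is to mimic the classical ``approximation then complexity'' decomposition for Rademacher averages: add and subtract $g_{\vec{r}}(x_i)$ inside the empirical Rademacher sum, then bound the error term uniformly in $\vec{r}$ and in $\vec{\sigma}$ using the $L^\infty$ approximability assumption.

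Write $N = |\sample|$. Starting from the definition,
\[
\erad(\cF) = \frac{1}{N}\E_{\vec{\sigma}}\left[\sup_{\vec{r}\in\cR}\sum_{i=1}^N \sigma_i f_{\vec{r}}(x_i)\right]
= \frac{1}{N}\E_{\vec{\sigma}}\left[\sup_{\vec{r}\in\cR}\sum_{i=1}^N \sigma_i\bigl(g_{\vec{r}}(x_i) + (f_{\vec{r}}(x_i)-g_{\vec{r}}(x_i))\bigr)\right].
\]
The first step is to apply subadditivity of the supremum, splitting this into $\erad(\cG)$ plus an ``error'' Rademacher-style term involving the differences $f_{\vec{r}}(x_i)-g_{\vec{r}}(x_i)$.

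The second step is to bound the error term deterministically, independently of $\vec{\sigma}$. Since $\sigma_i \in \{-1,1\}$, for any $\vec{r}$ we have $\sum_i \sigma_i(f_{\vec{r}}(x_i)-g_{\vec{r}}(x_i)) \le \sum_i |f_{\vec{r}}(x_i)-g_{\vec{r}}(x_i)|$. Using the identification $f_{\vec{r}}(x_i) = f_{x_i}^*(\vec{r})$ and $g_{\vec{r}}(x_i) = g_{x_i}^*(\vec{r})$, each term is pointwise bounded by $\|f_{x_i}^* - g_{x_i}^*\|_\infty$. This upper bound is independent of $\vec{r}$, so it survives taking the supremum, and is independent of $\vec{\sigma}$, so it survives taking the expectation. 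Dividing by $N$ yields the second term in the statement.

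There is no real obstacle here; the only thing to be slightly careful about is that the passage from $\sum_i \sigma_i(f - g)$ to $\sum_i |f_{x_i}^* - g_{x_i}^*|_\infty$ must be done before taking the supremum over $\vec{r}$, since the $L^\infty$-bound is a uniform-in-$\vec{r}$ statement about each fixed $x_i$. One might worry about whether we need two-sided control, but because the Rademacher distribution is symmetric, the sum $\frac{1}{N}\E_{\vec{\sigma}}\sup_{\vec{r}}\sum_i \sigma_i f_{\vec{r}}(x_i)$ is automatically non-negative and equals its symmetric version, so the one-sided inequality above is enough. Combining the two steps gives the claimed bound $\erad(\cF) \le \erad(\cG) + \frac{1}{N}\sum_{i=1}^N \|f_{x_i}^* - g_{x_i}^*\|_\infty$.
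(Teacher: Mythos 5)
Your proof is correct and follows essentially the same route as the paper's: the key step in both is the pointwise bound $\sigma_i\left(f_{x_i}^*(\vec{r}) - g_{x_i}^*(\vec{r})\right) \leq \norm{f_{x_i}^* - g_{x_i}^*}_{\infty}$, which is uniform in $\vec{r}$ and $\vec{\sigma}$ and therefore passes through the supremum and the expectation. (The closing remark about symmetry of the Rademacher distribution is unnecessary --- the one-sided pointwise bound already suffices for the stated upper bound --- but it does no harm.)
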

\begin{proof}[Proof sketch]
	To prove this theorem, we use the fact that for any parameter vector $\vec{r} \in \cR$, any element $x \in \cX$, and any binary value $\sigma \in \{-1,1\}$, $\sigma f_{\vec{r}}(x) = \sigma f_x^*(\vec{r}) \leq \sigma g_x^*(\vec{r}) + \norm{f_x^* - g_x^*}_{\infty} = \sigma g_{\vec{r}}(x) + \norm{f_x^* - g_x^*}_{\infty}.$
\end{proof}

If the class $\cG^*$ $(\gamma, \infty)$-approximates the class $\cF^*$, then $\frac{1}{|\sample|} \sum_{x \in \sample}  \norm{f_{x}^* - g_{x}^*}_{\infty}$ is at most $\gamma$. If this term is smaller than $\gamma$ for most  sets $\sample \sim \dist^N$, then the bound on $\erad(\cF)$ in Theorem~\ref{thm:rad} will often be even better than $\erad(\cG) + \gamma$.

Theorems~\ref{thm:rad_gen} and \ref{thm:rad} imply that with probability $1-\delta$ over the draw of the set $\sample \sim \dist^N$, for all parameter vectors $\vec{r} \in \cR$, the difference between the empirical average value of $f_{\vec{r}}$ over $\sample$ and its expected value is at most $\tilde O\left(\frac{1}{N} \sum_{x \in \sample}  \norm{f_{x}^* - g_{x}^*}_{\infty} + \erad(\cG) + \sqrt{\frac{1}{N}}\right)$.
In our integer programming experiments, we show that this data-dependent generalization guarantee can be much tighter than the best-known worst-case guarantee.

\paragraph{Algorithm for finding approximating functions.} We provide a dynamic programming (DP) algorithm (Algorithm~\ref{alg:DP} in Appendix~\ref{app:IP}) for the widely-applicable case where the dual functions $f_x^*$ are piecewise constant with a large number of pieces. Given an integer $k$, the algorithm returns a piecewise-constant function $g_x^*$ with at most $k$ pieces such that $\norm{f_x^* - g_x^*}_{\infty}$ is minimized, as in Figure~\ref{fig:apx_pwc_pwc}. Letting $t$ be the number of pieces in the piecewise decomposition of $f_x^*$, the DP algorithm runs in $O\left(kt^2\right)$ time. As we describe in Section~\ref{sec:IP}, when $k$ and $\norm{f_x^* - g_x^*}_{\infty}$ are small, Theorem~\ref{thm:rad} implies strong guarantees. We use this DP algorithm in our integer programming experiments.

\paragraph{Structural risk minimization.}
Theorem~\ref{thm:rad} illustrates a fundamental tradeoff in machine learning. The simpler the class $\cG$, the smaller its Rademacher complexity, but---broadly speaking---the worse functions from its dual will be at approximating functions in $\cF^*$. In other words, the simpler $\cG$ is, the worse the approximation $\frac{1}{N} \sum_{x \in \sample}  \norm{f_{x}^* - g_{x}^*}_{\infty}$ will likely be. Therefore, there is a tradeoff between generalizability and approximability. It may not be \emph{a priori} clear how to balance this tradeoff. \emph{Structural risk minimization (SRM)} is a classic, well-studied approach for optimizing tradeoffs between complexity and generalizability which we use in our experiments.

Our SRM approach is based on the following corollary of Theorem~\ref{thm:rad}. Let $\cG_1, \cG_2, \cG_3, \dots$ be a countable sequence of function classes where each $\cG_j = \left\{g_{j,\vec{r}} \mid \vec{r} \in \cR\right\}$ is a set of functions mapping $\cX$ to $[0,1]$. We use the notation $g_{j, x}^*$ to denote the duals of the functions in $\cG_j$, so $g_{j, x}^*(\vec{r}) = g_{j, \vec{r}}(x)$.
\begin{restatable}{cor}{corSRM}\label{cor:SRM}
With probability $1-\delta$ over the draw of the set $\sample \sim \dist^N$, for all $\vec{r} \in \cR$ and all $j \geq 1$, \begin{equation}\left|\frac{1}{N} \sum_{x \in \sample} f_{\vec{r}}(x) - \E_{x \sim \dist} \left[f_{\vec{r}}(x)\right]\right|	= O\left(\frac{1}{N} \sum_{x \in \sample}  \norm{f_{x}^* - g_{j, x}^*}_{\infty} + \erad\left(\cG_j\right) + \sqrt{\frac{1}{N} \ln \frac{j}{\delta}}\right).\label{eq:SRM}\end{equation}
\end{restatable}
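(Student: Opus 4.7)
The plan is to obtain the corollary by applying Theorem~\ref{thm:rad_gen} together with Theorem~\ref{thm:rad} to each class $\cG_j$ in the sequence, with a carefully chosen per-class confidence level, and then combining the resulting high-probability statements via a countable union bound. This is a textbook structural risk minimization construction, but instantiated with the data-dependent bound from Theorem~\ref{thm:rad} rather than with a purely worst-case complexity measure.

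Concretely, I would first fix an index $j \geq 1$ and set $\delta_j = \frac{6\delta}{\pi^2 j^2}$, so that $\sum_{j \geq 1} \delta_j = \delta$. By Theorem~\ref{thm:rad_gen} applied to the class $\cF$ with confidence $1 - \delta_j$, we obtain that with probability at least $1 - \delta_j$ over $\sample \sim \dist^N$, for every $\vec{r} \in \cR$,
\[
\left|\frac{1}{N}\sum_{x \in \sample} f_{\vec{r}}(x) - \E_{x \sim \dist}\left[f_{\vec{r}}(x)\right]\right| = O\!\left(\erad(\cF) + \sqrt{\frac{1}{N} \log \frac{1}{\delta_j}}\right).
\]
I would then invoke Theorem~\ref{thm:rad} to replace $\erad(\cF)$ by $\erad(\cG_j) + \frac{1}{N}\sum_{x \in \sample} \norm{f_x^* - g_{j,x}^*}_\infty$. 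Substituting $\log(1/\delta_j) = \log(\pi^2 j^2 / (6\delta)) = O(\log(j/\delta))$ gives the desired bound for the fixed index $j$.

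Finally, I would take a union bound over all $j \geq 1$: since $\sum_j \delta_j = \delta$, with probability at least $1 - \delta$ the per-index bound holds simultaneously for every $j$ and every $\vec{r} \in \cR$, which is exactly the statement of Corollary~\ref{cor:SRM}. Neither Theorem~\ref{thm:rad_gen} nor Theorem~\ref{thm:rad} requires anything further about $\cG_j$, so no additional uniformity assumption is needed. The only subtle point, and the closest thing to an obstacle, is choosing the sequence $\{\delta_j\}$ so that it is summable to $\delta$ while keeping $\log(1/\delta_j) = O(\log(j/\delta))$; the choice $\delta_j \propto \delta / j^2$ is the standard one and works cleanly here.
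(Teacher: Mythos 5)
Your proposal is correct and follows essentially the same route as the paper's proof: fix $\delta_j \propto \delta/j^2$ so the confidences sum to $\delta$, apply Theorem~\ref{thm:rad_gen} at level $\delta_j$, use Theorem~\ref{thm:rad} (which holds deterministically for every sample) to replace $\erad(\cF)$ by $\erad(\cG_j)$ plus the average $L^\infty$ approximation error, and conclude with a countable union bound. Your observation that $\log(1/\delta_j) = O(\log(j/\delta))$ is exactly the step that yields the stated form of the bound.
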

The proof of this corollary is in Appendix~\ref{app:UC}.

In our experiments,
each dual class $\cG^*_j$ consists of piecewise-constant functions with at most $j$ pieces. This means that as $j$ grows, the class $\cG_j^*$ becomes more complex, or in other words, the Rademacher complexity $\erad\left(\cG_j\right)$ also grows. Meanwhile, the more pieces a piecewise-constant function $g_x^*$ has, the better it is able to approximate the dual function $f_x^*$. In other words, as $j$ grows, the approximation term $ \frac{1}{N} \sum_{x \in \sample}  \norm{f_{x}^* - g_{j, x}^*}_{\infty}$ shrinks. SRM is the process of finding the level $j$ in the nested hierarchy that minimizes the sum of these two terms, and therefore obtains the best generalization guarantee via Equation~\eqref{eq:SRM}.

\begin{remark}\label{remark:exp}
	We conclude by noting that the empirical average $ \frac{1}{N} \sum_{x \in \sample}  \norm{f_{x}^* - g_{j, x}^*}_{\infty}$ in Equation~\eqref{eq:SRM} can be replaced by the expectation $\E_{x \sim \dist}\left[\norm{f_x^* - g_{j,x}^*}_{\infty}\right]$. See Corollary~\ref{cor:SRM_exp} in Appendix~\ref{app:UC} for the proof.
	\end{remark}

\subsection{Improved integer programming guarantees}\label{sec:IP}
In this section, we demonstrate that our data-dependent generalization guarantees from Section~\ref{sec:UC} can be much tighter than worst-case generalization guarantees provided in prior research. We demonstrate these improvements in the context of integer programming algorithm configuration, which we introduced in Section~\ref{sec:IP_intro}. 
 Our formal model is the same as that of \citet{Balcan18:Learning}, who studied worst-case generalization guarantees.
Each element of the set $\cX$ is an IP. The set $\cR$ consists of CPLEX parameter settings. We assume there is an upper bound $\kappa$ on the size of the largest tree we allow B\&B to build before we terminate, as in prior research~\citep{Hutter09:ParamILS, Kleinberg17:Efficiency, Balcan18:Learning, Kleinberg19:Procrastinating}. In Appendix~\ref{app:IP}, we describe our methodology for choosing $\kappa$. Given a parameter setting $r$ and an IP $x$, we define $f_r(x)$ to be the size of the tree CPLEX builds, capped at $\kappa$, divided by $\kappa$ (this way, $f_r(x) \in [0,1]$). We define the set $\cF = \left\{f_r \mid r \in \cR\right\}$.
 
We tune the parameter of B\&B's variable selection policy (VSP). We described the purpose of VSPs in Section~\ref{sec:IP_intro}. We study \emph{score-based VSPs}, defined as follows.
	Let $\score$ be a function that takes as input a partial B\&B tree $\tree$, a leaf of $\tree$ representing an IP $x$, and an index $i \in [n]$, and returns a real-valued $\score(\tree, x, i)$. Let $V$ be the set of variables that have not been branched on along the path from the root of $\tree$ to $x$. A \emph{score-based VSP} branches on the variable $\argmax_{z_i \in V} \{\score(\tree, x, i)\}$ at the node $x$.

We study how to learn a high-performing convex combination of any two scoring rules. We focus on four scoring rules in our experiments.
 To define them, we first introduce some notation. For an IP $x$ with objective function $\vec{c} \cdot \vec{z}$, we denote an optimal solution to the LP relaxation of $x$ as $\breve{\vec{z}}_x = \left(\breve{z}_{x,1}, \dots \breve{z}_{x,n}\right)$. We also use the notation $\breve{c}_x = \vec{c} \cdot \breve{\vec{z}}_x$. Finally, we use the notation $x_i^+$ (resp., $x_i^-$) to denote the IP $x$ with the additional constraint that $z_i = 1$ (resp., $z_i = 0$).\footnote{If $x_i^+$ (resp., $x_i^-$) is infeasible, then we define $\breve{c}_x - \breve{c}_{x_i^+}$ (resp., $\breve{c}_x - \breve{c}_{x_i^-}$) to be some large number greater than $||\vec{c}||_1$.}
 
We study four scoring rules $\score_L$, $\score_S$, $\score_A$, and $\score_P$:
\begin{itemize}
	\item $\score_L(\tree, x, i) = \max\left\{\breve{c}_x - \breve{c}_{x_i^+}, \breve{c}_x - \breve{c}_{x_i^-}\right\}$. Under $\score_L$, B\&B branches on the variable leading to the \textbf{L}argest change in the LP objective value.
	\item $\score_S(\tree, x, i) = \min\left\{\breve{c}_x - \breve{c}_{x_i^+}, \breve{c}_x - \breve{c}_{x_i^-}\right\}$. Under $\score_S$, B\&B branches on the variable leading to the \textbf{S}mallest change.
	\item $\score_A(\tree, x, i) = \frac{1}{6}\score_L(\tree, x, i) + \frac{5}{6} \score_S(\tree, x, i)$. This is a scoring rule that \citet{Achterberg09:SCIP} recommended. It balances the optimistic approach to branching under $\score_L$ with the pessimistic approach under $\score_S$.
	\item $\score_P(\tree, x, i) = \max\left\{\breve{c}_x - \breve{c}_{x_i^+}, 10^{-6}\right\}  \cdot \max\left\{\breve{c}_x - \breve{c}_{x_i^-}, 10^{-6}\right\}$. This is known as the \emph{\textbf{P}roduct scoring rule}. Comparing $\breve{c}_x - \breve{c}_{x_i^-}$ and $\breve{c}_x - \breve{c}_{x_i^+}$ to $10^{-6}$ allows the algorithm to compare two variables even if $\breve{c}_x - \breve{c}_{x_i^-} = 0$ or $\breve{c}_x - \breve{c}_{x_i^+} = 0$. After all, suppose the scoring rule simply calculated the product $\left(\breve{c}_x - \breve{c}_{x_i^-}\right)\cdot \left(\breve{c}_x - \breve{c}_{x_i^+}\right)$ without comparing to $10^{-6}$. If $\breve{c}_x - \breve{c}_{x_i^-} = 0$, then the score equals 0, canceling out the value of $\breve{c}_x - \breve{c}_{x_i^+}$ and thus losing the information encoded by this difference.
\end{itemize}
 
Fix any two scoring rules $\score_1$ and $\score_2$. We define $f_r(x)$ to be the size of the tree B\&B builds (normalized to lie in $[0,1]$) when it uses the score-based VSP defined by $(1-r)\score_1 + r\score_2$. Our goal is to learn the best convex combination of the two scoring rules. When $\score_1 = \score_L$ and $\score_2 = \score_S$, prior research has proposed several alternative settings for the parameter $r$~\citep{Gauthier77:Experiments, Benichou71:Experiments, Beale79:Branch,Linderoth99:Computational,Achterberg09:SCIP}, though no one setting is optimal across all applications.
 \citet{Balcan18:Learning} prove the following lemma about the structure of the functions $f_x^*$.
 \begin{lemma}\label{lem:WC_IP}
 	For any IP $x$ with $n$ variables, the dual function $f_x^*$ is piecewise-constant with at most $n^{2(\kappa + 1)}$ pieces.
 \end{lemma}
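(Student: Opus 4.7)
The key structural observation is that, for every fixed partial tree $\tree'$, node $x'$ of $\tree'$, and index $i \in [n]$, the combined score $(1-r)\score_1(\tree', x', i) + r\score_2(\tree', x', i)$ is an affine function of $r$, since each $\score_k(\tree', x', i)$ is a constant depending only on the LP relaxations at $x'$ and its two children $x'^+_i, x'^-_i$. Consequently, the branching decision at $x'$---the $\argmax$ over $i$ of at most $n$ such affine functions---is a piecewise-constant function of $r$, with breakpoints contained in the set of at most $\binom{n}{2}$ pairwise score crossings.

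The approach is to enumerate every sub-IP that B\&B could conceivably encounter when running on $x$ for some $r \in [0,1]$, and to sum the pairwise score crossings across all of them. Any sub-IP appearing as a node of a B\&B tree of size at most $\kappa$ is obtained from $x$ by fixing at most $\kappa$ variables to $0$ or $1$, one fixing per ancestor edge on the path from the root. The number of such sub-IPs is therefore bounded by $\sum_{k=0}^{\kappa} \binom{n}{k} 2^k$, which for $n \geq 2$ is at most $n^{2\kappa}$. At each such sub-IP, the branching decision contributes at most $\binom{n}{2} \leq n^2$ critical values of $r$, so the total number of critical values over all possibly-reachable sub-IPs is at most $n^{2\kappa} \cdot n^2 = n^{2(\kappa+1)}$.

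To conclude, I would argue that within any open subinterval of $[0,1]$ that contains no critical value, the entire B\&B execution on $x$ is unchanged: every branching decision at every reachable sub-IP is constant on the subinterval, so (given a fixed node-selection policy and tie-breaking rule) the sequence of nodes visited, and hence the resulting tree size, are both constant. This partitions $[0,1]$ into at most $n^{2(\kappa+1)}$ regions on which $f_x^*$ is constant, yielding the claimed bound.

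The main obstacle is the careful bookkeeping required to enumerate all sub-IPs that could arise across \emph{different} values of $r$, as opposed to those visited in a single fixed execution: the counting must be uniform in $r$ so that no sub-IP reachable at some $r^\star$ is omitted. Once the set of possibly-reachable sub-IPs is over-counted in this $r$-independent way by $n^{2\kappa}$ and each is charged its pairwise-crossing budget of $n^2$, the remaining steps are routine.
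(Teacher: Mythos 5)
The paper does not actually prove this lemma; it is imported verbatim from \citet{Balcan18:Learning}, where the argument is organized as a recursive partition of the parameter space: one shows that the root's branching decision splits $[0,1]$ into at most $n^2$ intervals, that within each interval the children are fixed, and one then recurses on each node expanded, multiplying by a factor of $n^2$ per node of the size-$\kappa$ tree. Your argument reaches the same bound by a genuinely different (and arguably cleaner, non-inductive) route: you enumerate, uniformly in $r$, every sub-IP that could conceivably appear as a node of any size-$\kappa$ tree---i.e., every restriction of $x$ obtained by fixing at most $\kappa$ variables---charge each one its budget of at most $n(n-1)/2$ pairwise score crossings, and observe that on any interval avoiding all of these critical values the entire execution (branching, node selection, fathoming, hence tree size) is frozen. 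This ``global arrangement'' organization buys you a one-shot argument with no induction hypothesis about consistency of partial executions, at the cost of over-counting unreachable sub-IPs; the recursive version only ever partitions along nodes that are actually expanded, which is why it extends more readily to scoring rules that depend on the partial tree $\tree$ and not just on the current node. Two small caveats, neither fatal: your affineness claim relies on $\score_1(\tree,x',i)$ and $\score_2(\tree,x',i)$ being independent of $r$ at a fixed node, which holds for the path-independent rules considered here but is worth stating as a hypothesis; and the numerical estimate $\sum_{k=0}^{\kappa}\binom{n}{k}2^k \leq n^{2\kappa}$ is loose enough to fail for degenerate cases like $n=2,\kappa=1$, though the final bound still goes through there because the per-node crossing count is then far below $n^2$.
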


Lemma~\ref{lem:WC_IP} implies the following worst-case bound on $\erad\left(\cF\right)$. See Lemma~\ref{lem:PWC} in Appendix~\ref{app:IP} for the proof.
\begin{cor}\label{cor:WC_UC}
For any set $\sample \subseteq \cX$ of integer programs, $\erad\left(\cF\right) \leq \sqrt{\frac{2 \ln\left(|\sample|\left(n^{2(\kappa + 1)}-1\right) + 1\right)}{|\sample|}}.$
	\end{cor}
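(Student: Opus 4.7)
The plan is to reduce to a finite class via Massart's lemma, exploiting the fact that the parameter $r$ is one-dimensional (ranging over $[0,1]$, since we are learning a convex combination of two scoring rules). By Lemma~\ref{lem:WC_IP}, for each IP $x_i$ in the sample $\sample = \{x_1, \dots, x_N\}$, the dual function $f_{x_i}^*$ is piecewise constant on $\cR \subseteq \R$ with at most $n^{2(\kappa+1)}$ pieces, i.e., at most $n^{2(\kappa+1)}-1$ breakpoints.

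First I would take the common refinement of the $N$ partitions of $\cR$ induced by $f_{x_1}^*, \dots, f_{x_N}^*$. Because the domain is one-dimensional, pooling the breakpoints yields at most $N(n^{2(\kappa+1)}-1)$ breakpoints, and hence at most $T := N(n^{2(\kappa+1)}-1)+1$ sub-intervals of $\cR$. On each sub-interval, every function $f_{x_i}^*$ is constant, so the labeling vector $(f_r(x_1), \dots, f_r(x_N)) \in [0,1]^N$ takes the same value for all $r$ in that sub-interval. Therefore, picking one representative from each sub-interval produces a finite set $\cV \subseteq [0,1]^N$ with $|\cV| \leq T$ such that
\[
\sup_{r \in \cR} \sum_{i=1}^N \sigma_i f_r(x_i) = \max_{\vec{v} \in \cV} \sum_{i=1}^N \sigma_i v_i
\]
for every $\vec{\sigma} \in \{-1,1\}^N$.

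Next I would apply Massart's finite-class lemma to this finite set $\cV$. Since every $\vec{v} \in \cV$ has $\|\vec{v}\|_2 \leq \sqrt{N}$ (as each coordinate lies in $[0,1]$), Massart's lemma yields
\[
\erad(\cF) = \frac{1}{N}\E_{\vec{\sigma}}\Bigl[\max_{\vec{v} \in \cV}\sum_{i=1}^N \sigma_i v_i\Bigr] \leq \frac{1}{N}\sqrt{N}\sqrt{2\ln |\cV|} \leq \sqrt{\frac{2\ln T}{N}},
\]
which is exactly the claimed bound.

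The only real subtlety is the refinement count: one must use that $\cR$ is one-dimensional in this application so that merging the breakpoints of $N$ partitions produces at most $N(n^{2(\kappa+1)}-1)+1$ cells, rather than the far larger count one would obtain for a multi-dimensional parameter space via hyperplane-arrangement bounds. Everything else is routine: the piecewise-constant structure from Lemma~\ref{lem:WC_IP} gives the finite discretization, and Massart's lemma converts the cardinality bound into a Rademacher bound. I would expect the proof to be essentially a short lemma in Appendix~\ref{app:IP} (which the excerpt refers to as Lemma~\ref{lem:PWC}).
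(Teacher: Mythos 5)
Your proof is correct and takes essentially the same approach as the paper: the corollary is derived from Lemma~\ref{lem:PWC} in Appendix~\ref{app:IP}, which applies Massart's lemma to the finite set of labeling vectors $\left(f_{x_1}^*(r), \dots, f_{x_N}^*(r)\right)$, whose cardinality is bounded by $|\sample|\left(n^{2(\kappa+1)}-1\right)+1$ via exactly the one-dimensional breakpoint-pooling argument you describe.
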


This corollary and Theorem~\ref{thm:rad_gen} imply the following worst-case generalization bound: with probability $1-\delta$ over the draw of $N$ samples $\sample \sim \dist^N$, for all $r \in [0,1]$, $\left|\frac{1}{N} \sum_{x \in \sample} f_{r}(x) - \E_{x \sim \dist} \left[f_{r}(x)\right]\right|$ is bounded above by \begin{equation}2\sqrt{\frac{2 \ln(N\left(n^{2(\kappa + 1)}-1\right) + 1)}{N}} + 3\sqrt{\frac{1}{2N}\ln \frac{2}{\delta}}.\label{eq:WC_gen}\end{equation}
This worst-case bound can be large when $\kappa$ is large. We find that although the duals $f_x^*$ are piecewise-constant with many pieces, they can be approximated piecewise-constant functions with few pieces, as in Figure~\ref{fig:apx_pwc_pwc}. As a result, we improve over Equation~\eqref{eq:WC_gen} via Theorem~\ref{thm:rad}, our data-dependent bound.

To make use of Theorem~\ref{thm:rad}, we now formally define the function class whose dual $(\gamma, \infty)$-approximates $\cF^*$. We first define the dual class, then the primal class. To this end, fix some integer $j \geq 1$ and let $\cH_j$ be the set of all piecewise-constant functions mapping $[0,1]$ to $[0,1]$ with at most $j$ pieces. For every IP $x$, we define $g_{j,x}^* \in \argmin_{h \in \cH_j} \norm{f_x^* - h}_{\infty}$, breaking ties in some fixed but arbitrary manner. We define the dual class $\cG^*_j = \left\{g_{j,x}^* \mid x \in \cX\right\}$. Therefore, the dual class $\cG^*_j$ is consists of piecewise-constant functions with at most $j$ pieces. In keeping with the definition of primal and dual functions from Section~\ref{sec:dual}, for every parameter $r \in [0,1]$ and IP $x$, we define $g_{j,r}(x) = g_{j,x}^*(r)$. Finally, we define the primal class $\cG_j = \left\{g_{j,r} \mid r \in [0,1]\right\}.$

To apply our results from Section~\ref{sec:UC}, we must bound the Rademacher complexity of the set $\cG_j$. Doing so is simple due to the structure of the dual class $\cG^*_j$. The following lemma\footnote{This bound on $\erad\left(\cG_j\right)$ could potentially be optimized even further using a data-dependent approach, such as the one summarized by Theorem E.3 in the paper by \citet{Balcan18:Learning}.} is a corollary of Lemma~\ref{lem:PWC} in Appendix~\ref{app:IP}.
\begin{lemma}\label{lem:IP_PWC}
For any set $\sample \subseteq \cX$ of integer programs, $\erad\left(\cG_j\right) \leq \sqrt{\frac{2 \ln(|\sample|(j-1) + 1)}{|\sample|}}.$
	\end{lemma}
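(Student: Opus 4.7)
The plan is to derive the lemma as an immediate corollary of Lemma~\ref{lem:PWC} in Appendix~\ref{app:IP}, instantiated with the piecewise-constant parameter equal to $j$, so I will sketch why that general bound applies to $\cG_j$.

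The key structural observation is that for any fixed sample $\sample = \{x_1, \dots, x_N\}$, the vector-valued map $r \mapsto (g_{j,x_1}^*(r), \dots, g_{j,x_N}^*(r))$ is itself piecewise constant on $[0,1]$. By construction each dual $g_{j,x_i}^*$ is piecewise constant with at most $j$ pieces, i.e.\ at most $j-1$ breakpoints in $[0,1]$. Taking the union over $i = 1, \dots, N$ produces at most $N(j-1)$ breakpoints, which partition $[0,1]$ into at most $N(j-1)+1$ subintervals; on each subinterval the whole vector $(g_{j,r}(x_1), \dots, g_{j,r}(x_N))$ is constant. Hence the behavior set $A_\sample = \{(g_{j,r}(x_1), \dots, g_{j,r}(x_N)) : r \in [0,1]\} \subseteq [0,1]^N$ has cardinality at most $N(j-1)+1$.

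Next I would invoke Massart's finite class lemma. Since each coordinate of every vector in $A_\sample$ lies in $[0,1]$, we have $\max_{a \in A_\sample} \|a\|_2 \leq \sqrt{N}$. Massart's lemma then yields
\[
\erad(\cG_j) \;=\; \frac{1}{N} \E_{\vec{\sigma}} \left[\sup_{a \in A_\sample} \langle \vec{\sigma}, a\rangle \right] \;\leq\; \frac{\sqrt{N}\sqrt{2 \ln |A_\sample|}}{N} \;\leq\; \sqrt{\frac{2\ln(N(j-1)+1)}{N}},
\]
which is exactly the claimed bound with $N = |\sample|$.

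There is no serious obstacle here: the only nontrivial work is the union-of-breakpoints count, and that is exactly the hypothesis under which Lemma~\ref{lem:PWC} is stated. The proof therefore reduces to verifying that each $g_{j,x}^*$ has at most $j$ constant pieces---which is immediate from the definition of $\cG_j$---and then quoting Lemma~\ref{lem:PWC} with $t = j$.
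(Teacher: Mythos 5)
Your proof is correct and follows essentially the same route as the paper: the paper derives Lemma~\ref{lem:IP_PWC} as a direct instantiation of Lemma~\ref{lem:PWC}, whose proof likewise bounds the cardinality of the behavior set $\left\{\left(g_{j,r}(x_1),\dots,g_{j,r}(x_N)\right) : r\right\}$ by $N(j-1)+1$ and applies Massart's lemma. Your explicit union-of-breakpoints count is exactly the (implicit) counting step in the paper's argument, so there is nothing to add.
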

This lemma together with Remark~\ref{remark:exp} and Corollary~\ref{cor:SRM} imply that with probability $1-\delta$ over $\sample  \sim \dist^N$, for all parameters $r \in [0,1]$ and $j \geq 1$, $\left|\frac{1}{N} \sum_{x \in \sample} f_{r}(x) - \E_{x \sim \dist} \left[f_{r}(x)\right]\right|$ is upper-bounded by the minimum of Equation~\eqref{eq:WC_gen} and
\begin{equation}2\E_{x \sim \dist}\left[\norm{f_x^* - g_{j,x}^*}_{\infty}\right] + 2\sqrt{\frac{2 \ln(N(j-1) + 1)}{N}} + \sqrt{\frac{2}{N}\ln \frac{2(\pi j)^2}{3\delta}}.\label{eq:DD_gen}\end{equation}
As $j$ grows, $\erad\left(\cG_j\right)$ grows, but the dual class $\cG^*_j$ is better able to approximate $\cF^*$. In our experiments, we optimize this tradeoff between generalizability and approximability.

\paragraph{Experiments.} We analyze distributions over IPs formulating the combinatorial auction winner determination problem under the OR-bidding language~\citep{Sandholm02:Algorithm}, which we generate using the Combinatorial
Auction Test Suite (CATS)~\citep{Leyton00:Toward}.
We use the ``arbitrary'' generator with 200 bids and 100 goods, resulting in IPs with 200 about variables, and the ``regions'' generator with 400
bids and 200 goods, resulting in IPs with 400 about variables.

We use the algorithm described in Appendix D.1 of the paper by \citet{Balcan18:Learning} to compute the functions $f_x^*$. It overrides the default
VSP of CPLEX 12.8.0.0 using the C API. We use Algorithm~\ref{alg:DP} in Appendix~\ref{app:IP} to compute the approximating duals. All experiments were run on a 64-core machine with 512 GB of RAM.
\begin{figure}[t]
	\centering
	\subfigure[Results on the CATS ``regions'' generator with $\score_1 = \score_L$ and $\score_2 = \score_S$.\label{fig:regions}]{\includegraphics[width=0.47\textwidth]{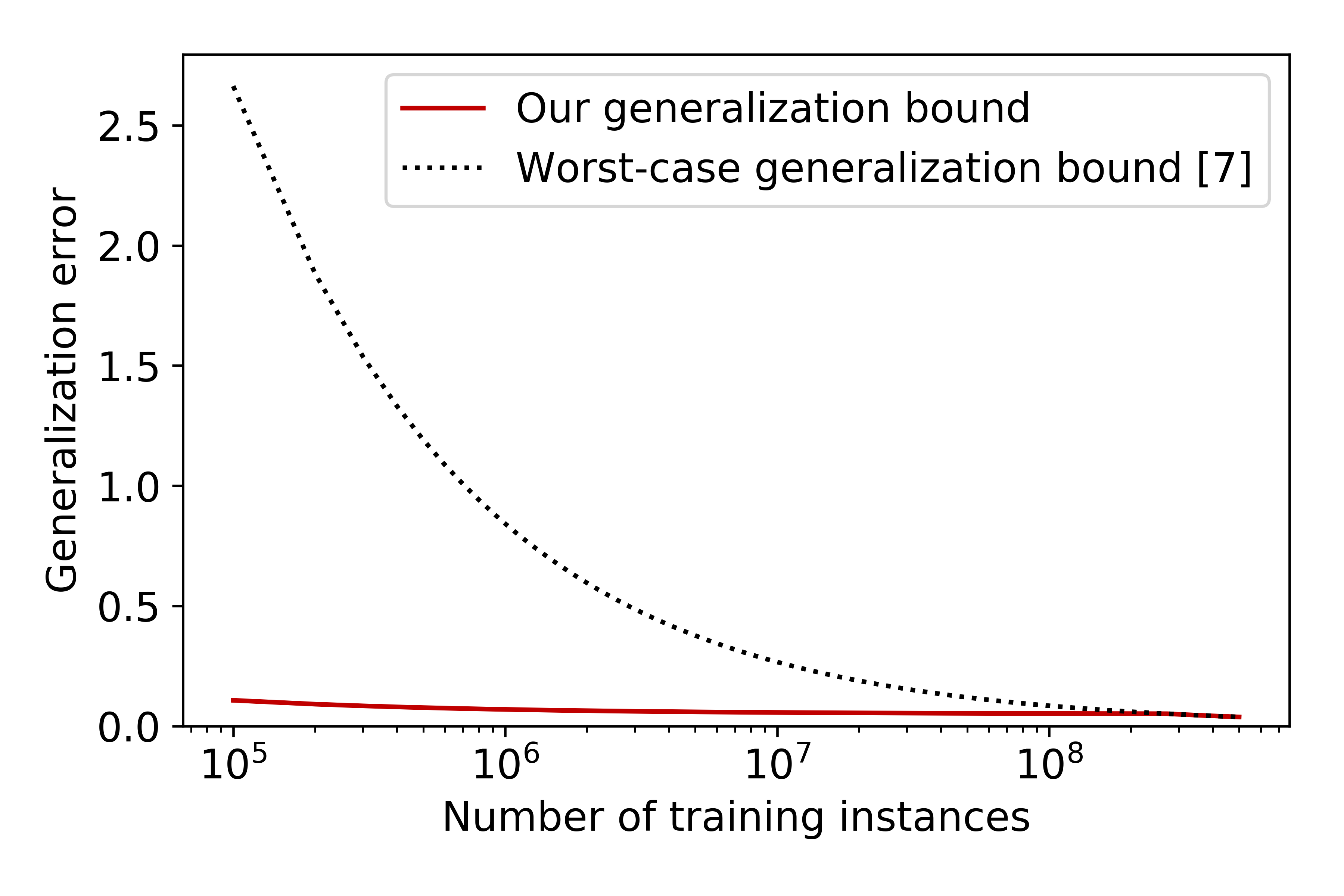}}\qquad
	\subfigure[Results on the CATS ``arbitrary'' generator with $\score_1 = \score_L$ and $\score_2 = \score_S$.\label{fig:arbitrary}]{\includegraphics[width=0.47\textwidth]{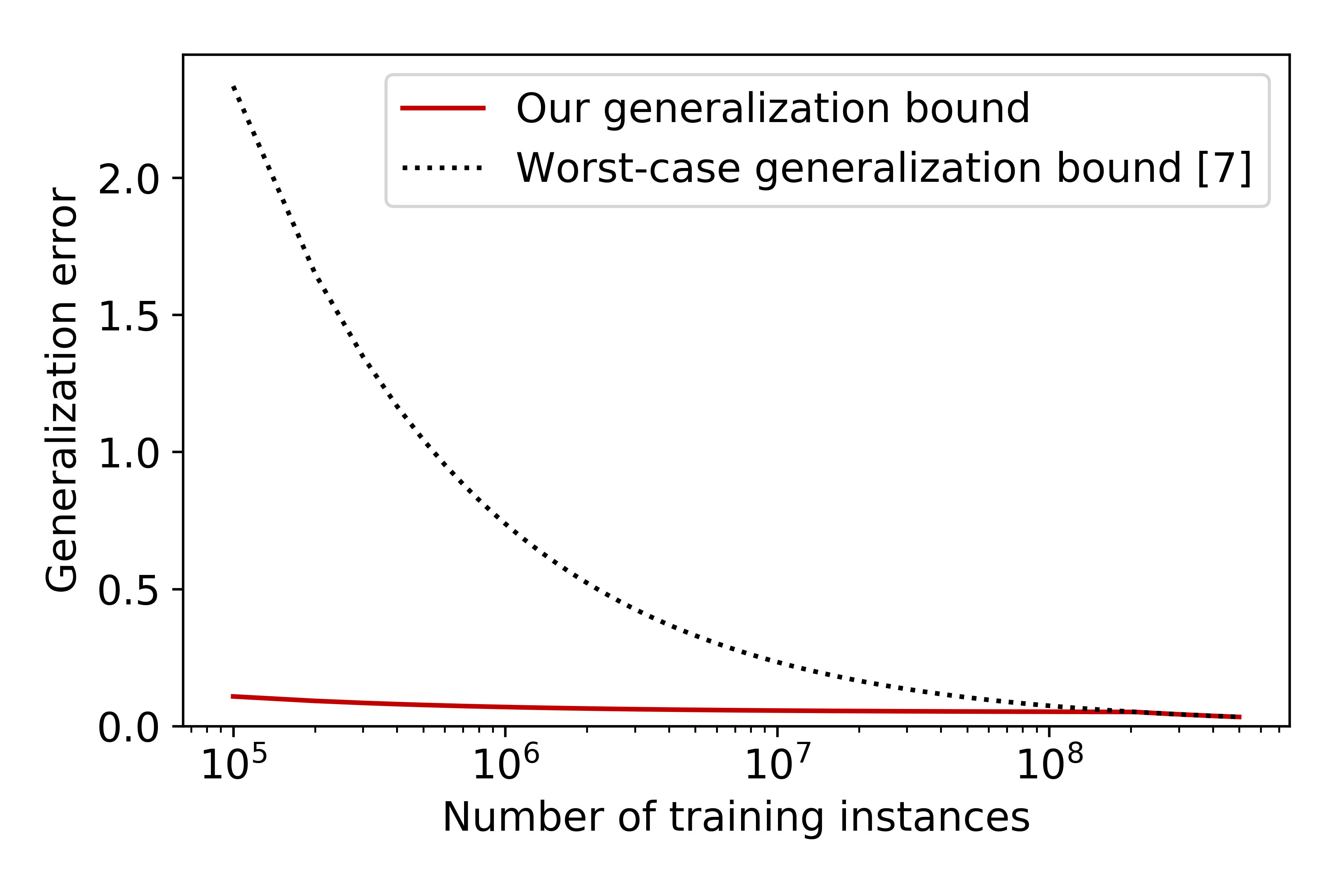}}
	\subfigure[{Results on the CATS ``arbitrary'' generator with $\score_1 = \score_P$ and $\score_2 = \score_A$.\label{fig:arbitrary_product}}]{\includegraphics[width=0.47\textwidth]{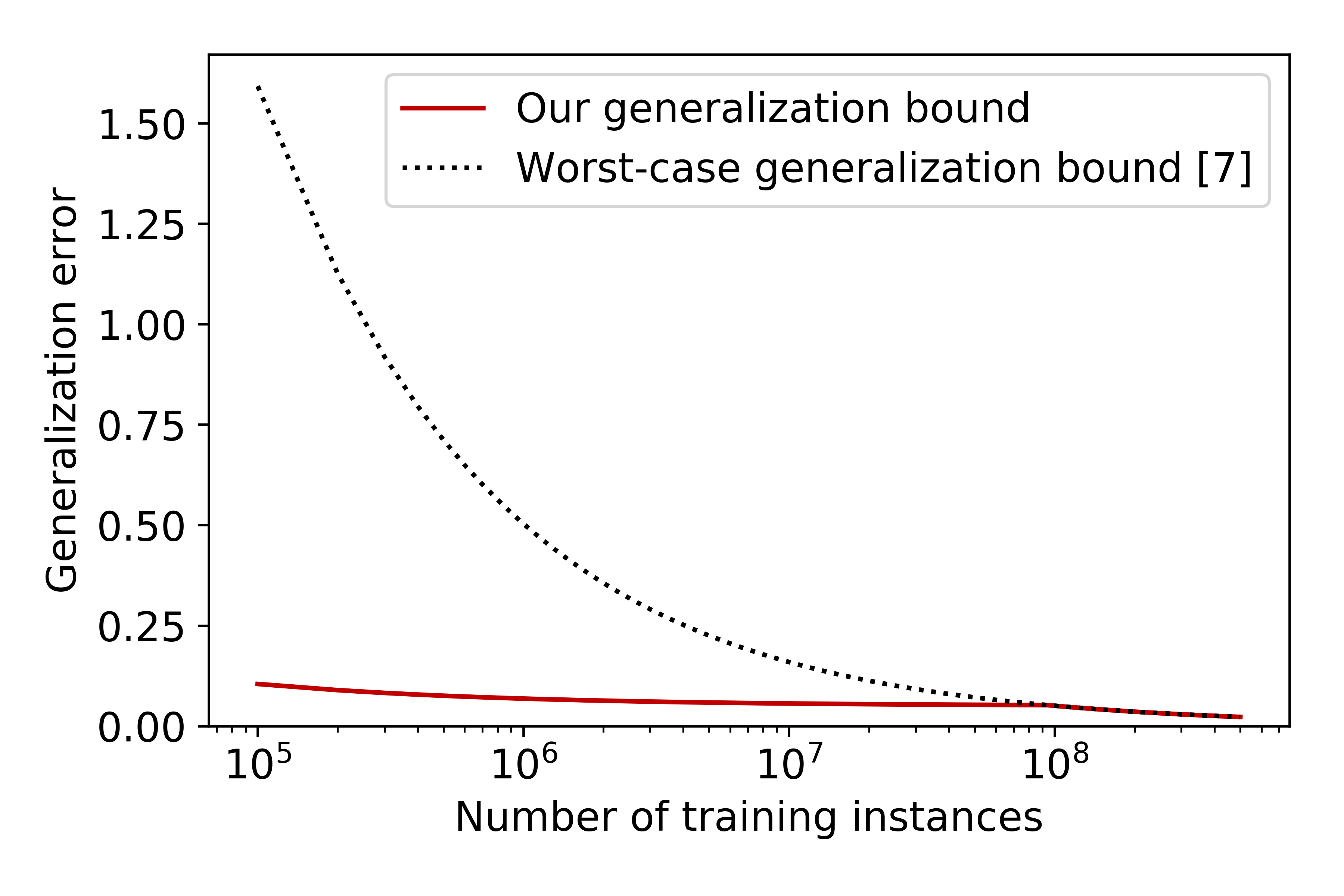}}
	\caption{Experiments where we compare our generalization bound to the worst-case bound by~\citet{Balcan18:Learning}. The red solid line is our generalization bound: the minimum of Equations~\eqref{eq:WC_gen} and \eqref{eq:estimate_gen} as a function of the number of training examples $N$. The black dotted line is the worst-case bound from Equation~\eqref{eq:WC_gen}.}
	\label{fig:experiments}
\end{figure}

In Figure~\ref{fig:experiments}, we select $\score_1, \score_2 \in \left\{\score_L, \score_S, \score_A, \score_P\right\}$ and compare the worst-case and data-dependent bounds.
First, we plot the worst-case bound from Equation~\eqref{eq:WC_gen}, with $\delta = 0.01$, as a function of the number of training examples $N$. This is the black, dotted line in Figure~\ref{fig:experiments}.

Next, we plot the data-dependent bound, which is the red, solid line in Figure~\ref{fig:experiments}. To calculate the data-dependent bound in Equation~\eqref{eq:DD_gen}, we have to estimate $\E_{x \sim \dist}\left[\norm{f_x^* - g_{j,x}^*}_{\infty}\right]$ for all $j \in [1600]$.\footnote{We choose the range $j \in [1600]$ because under these distributions, the functions $f_x^*$ generally have at most 1600 pieces.}
To do so, we draw $M = 6000$ IPs
$x_1, \dots, x_M$ from the distribution $\dist$. We estimate $\E_{x \sim \dist}\left[\norm{f_x^* - g_{j,x}^*}_{\infty}\right]$ via the empirical average $\frac{1}{M} \sum_{i = 1}^M \norm{f_{x_i}^* - g_{j,x_i}^*}_{\infty}$. A Hoeffding bound guarantees that with probability 0.995, for all $j \in [1600]$, \begin{equation}\E\left[\norm{f_x^* - g_{j,x}^*}_{\infty}\right]
\leq\frac{1}{M} \sum_{i = 1}^M \norm{f_{x_i}^* - g_{j,x_i}^*}_{\infty} + \frac{1}{40}.\label{eq:estimate}\end{equation} We prove this inequality in Lemma~\ref{lem:estimate}.
We thereby estimate our data-dependent bound Equation~\eqref{eq:DD_gen} using the following bound:
	\begin{equation}\min_{j \in [1600]}\left\{2\left(\frac{1}{M} \sum_{i = 1}^M \norm{f_{x_i}^* - g_{j,x_i}^*}_{\infty} + \frac{1}{40}\right) + 2\sqrt{\frac{2 \ln(N(j-1) + 1)}{N}} + \sqrt{\frac{2}{N}\ln \frac{(20\pi j)^2}{3}}\right\}.\label{eq:estimate_gen}\end{equation}
The only difference between Equations~\eqref{eq:DD_gen} and \eqref{eq:estimate_gen} is that Equation~\eqref{eq:DD_gen} relies on the left-hand-side of Equation~\eqref{eq:estimate} and Equation~\eqref{eq:estimate_gen} relies on the right-hand-side of Equation~\eqref{eq:estimate} and sets $\delta = 0.005$.\footnote{Like the worst-case bound, Equation~\eqref{eq:estimate_gen} holds with probability 0.99, because with probability 0.995, Equation~\eqref{eq:estimate} holds, and with probability 0.995, the bound from Equation~\eqref{eq:DD_gen} holds.}
 In Figure~\ref{fig:experiments}, the red solid line equals the minimum of Equations~\eqref{eq:WC_gen} and \eqref{eq:estimate_gen} as a function of the number of training examples $N$. 

In Figure~\ref{fig:experiments}, we see that our bound significantly beats the worst-case bound up until the point there are approximately 100,000,000 training instances. At this point, the worst-case guarantee is better than the data-dependent bound, which makes sense because it goes to zero as $N$ goes to infinity, whereas the term $\frac{1}{M} \sum_{i = 1}^M \norm{f_{x_i}^* - g_{j,x_i}^*}_{\infty} +\frac{1}{40}$ in our bound (Equation~\eqref{eq:estimate_gen}) is a constant.

Figure~\ref{fig:regions} also illustrates that even when there are only $10^5$ training instances, our bound provides a generalization guarantee of approximately 0.1. Meanwhile, $7 \cdot 10^7$ training instances are necessary to provide a generalization guarantee of 0.1 under the worst-case bound, so the sample complexity implied by our analysis is 700 times better. Similarly, in Figure~\ref{fig:arbitrary}, 
500 times fewer samples are required to obtain a generalization guarantee of 0.1 under our bound versus the worst-case bound. In Figure~\ref{fig:arbitrary_product}, 250 times fewer samples are required.
 
In this section, we approximated the dual functions $f_x^*$ with piecewise constant functions that have a small number of pieces --- say, $j$ pieces. We used SRM to find the value for $j$ which leads to the strongest bounds, as in Equation~\eqref{eq:estimate_gen}. In Appendix~\ref{app:experiments}, we compare against another baseline where we do not use SRM, but simply set $j$ to be the maximum number of pieces we observe over our training set. Of course, this bound is much tighter than the worst-case bound by \citet{Balcan18:Learning}, the baseline in  Figure~\ref{fig:experiments}. However, we still observe that for a target generalization error, the number of samples required according to our bound is up to four times smaller than the number of samples required by this baseline.

\subsection{Rademacher complexity lower bound}\label{sec:not_learnable}
In this section, we show that $(\gamma, p)$-approximability with $p < \infty$ does not necessarily imply strong generalization guarantees of the type we saw in Section~\ref{sec:UC}. We show that it is possible for a dual class $\cF^*$ to be well-approximated by the dual of a class $\cG$ with $\erad(\cG)  = 0$, yet for the primal $\cF$ to have high Rademacher complexity.

\begin{figure}\centering
	\includegraphics{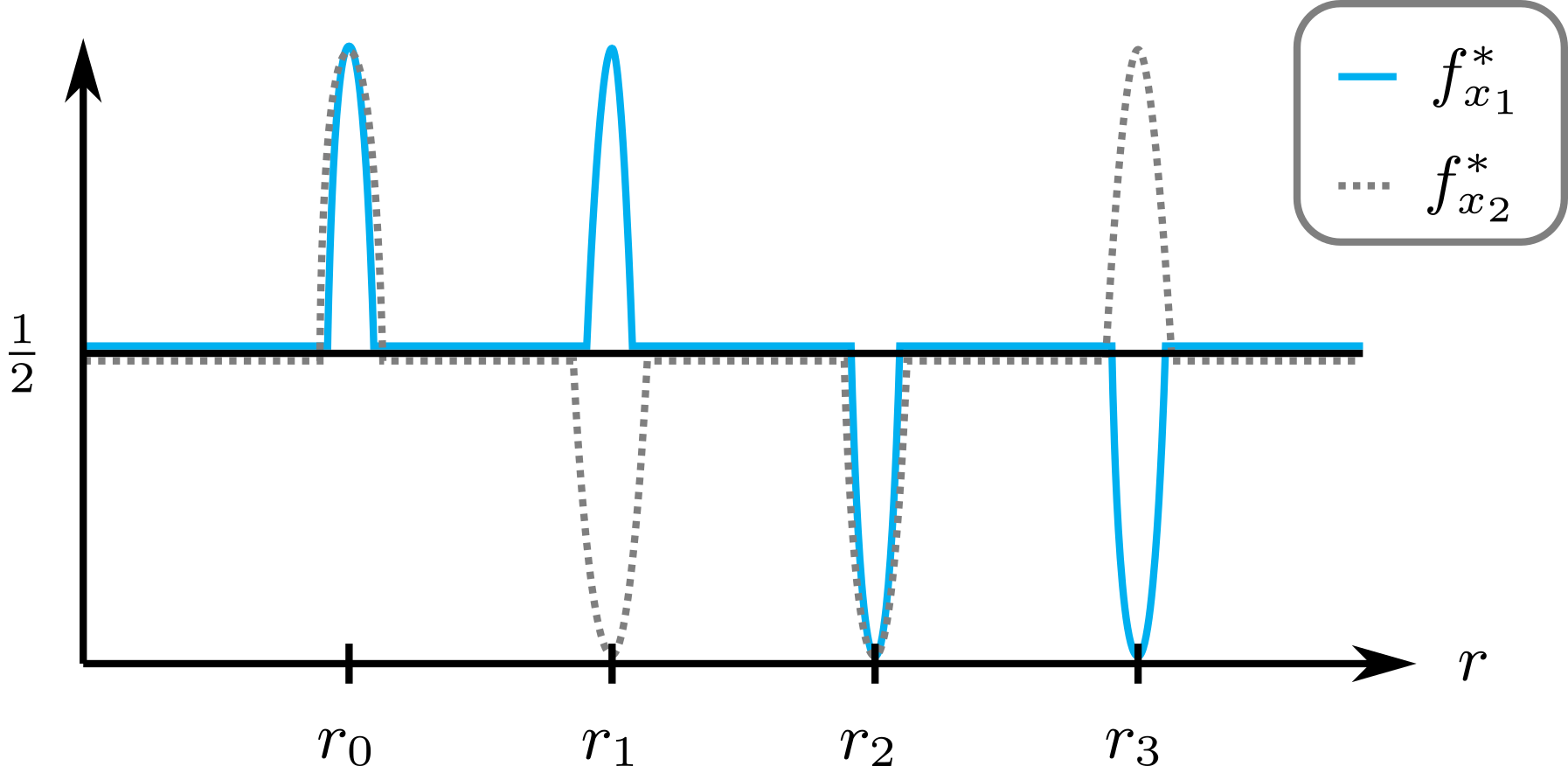}
	\caption{The dual functions $f_{x_1}^*$ and $f_{x_2}^*$ are well-approximated by the constant function $r \mapsto \frac{1}{2}$ under, for example, the $L^1$-norm because the integrals $\int_{\cR}\left|f_{x_i}^*(r) - \frac{1}{2}\right| \, dr$ are small; for most $r$, $f_{x_i}^*(r) = \frac{1}{2}$. The approximation is not strong under the $L^{\infty}$-norm, since $\max_{r \in \cR}\left|f_{x_i}^*(r) - \frac{1}{2}\right| = \frac{1}{2}$. The function class $\cF$ corresponding to these duals has a large Rademacher complexity.\label{fig:l_p}}
\end{figure}
\begin{figure}
	\includegraphics{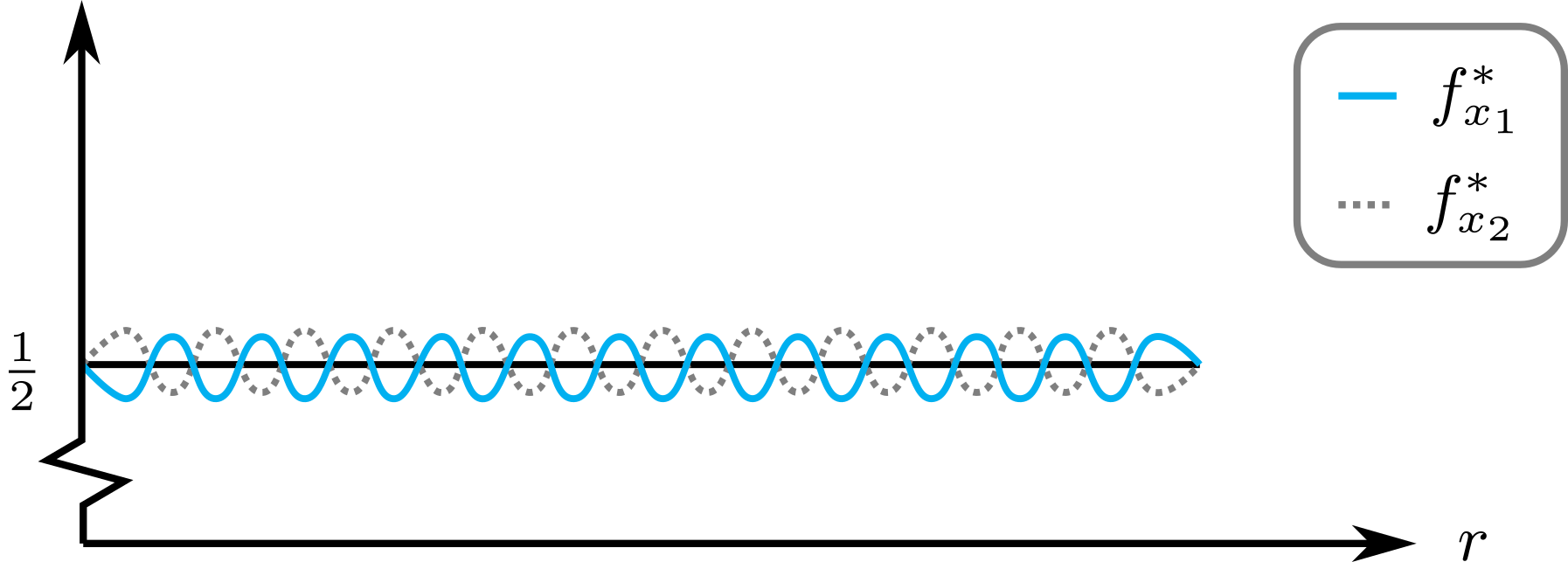}\centering
	\caption{The dual functions $f_{x_1}^*$ and $f_{x_2}^*$ are well-approximated by the constant function $r \mapsto \frac{1}{2}$ under the  $L^{\infty}$-norm since $\max_{r \in \cR}\left|f_{x_i}^*(r) - \frac{1}{2}\right|$ is small. The function class $\cF$ corresponding to these duals has a small Rademacher complexity.\label{fig:l_infty}}
\end{figure}
Figures~\ref{fig:l_p} and \ref{fig:l_infty} help explain why there is this sharp constrast between the $L^{\infty}$- and $L^p$-norms for $p < \infty$. Figure~\ref{fig:l_p} illustrates two dual functions $f_{x_1}^*$ (the blue solid line) and $f_{x_2}^*$ (the grey dotted line). Let $\cG$ be the extremely simple function class $\cG = \left\{g_r : r \in \R\right\}$ where $g_r(x) = \frac{1}{2}$ for every $x \in \cX$. It is easy to see that $\erad(\cG) = 0$ for any set $\sample$. Moreover, every dual function $g_x^*$ is also simple, because $g_x^*(r) = g_r(x) = \frac{1}{2}$. From Figure~\ref{fig:l_p}, we can see that the functions $f_{x_1}^*$ and $f_{x_2}^*$ are well approximated by the constant function $g_{x_1}^*(r) = g_{x_2}^*(r) = \frac{1}{2}$ under, for example, the $L^1$-norm because the integrals $\int_{\cR}\left|f_{x_i}^*(r) - \frac{1}{2}\right| \, dr$ are small. However, the approximation is not strong under the $L^{\infty}$-norm, since $\max_{r \in \cR}\left|f_{x_i}^*(r) - \frac{1}{2}\right| = \frac{1}{2}$ for $i \in \{1,2\}$.

Moreover, despite the fact that $\erad(\cG) = 0$, we have that $\erad(\cF) = \frac{1}{2}$ when $\sample = \left\{x_1, x_2\right\}$, which makes Theorem~\ref{thm:rad_gen} meaningless. At a high level, this is because when $\sigma_1 = 1$, we can ensure that $\sigma_1 f_r\left(x_1\right) = \sigma_1 f_{x_1}^*\left(r\right) = 1$ by choosing $r \in \left\{r_0, r_1\right\}$ and when $\sigma_1 = -1$, we can ensure that $\sigma_1 f_r\left(x_1\right) = 0$ by choosing $r \in \left\{r_2, r_3\right\}$. A similar argument holds for $\sigma_2$. In summary, $(\gamma, p)$-approximability for $p < \infty$ does not guarantee low Rademacher complexity.

Meanwhile, in Figure~\ref{fig:l_infty}, $g_{x_i}^*(r) = \frac{1}{2}$ and $f_{x_i}^*(r)$ are close for every parameter $r$. As a result, for any noise vector $\vec{\sigma}$, $\sup_{r \in \R} \left\{\sigma_1 f_{x_1}^*(r) + \sigma_2f_{x_2}^*(r)\right\}$ is close to $\sup_{r \in \R} \left\{\sigma_1 g_{x_1}^*(r) + \sigma_2g_{x_2}^*(r)\right\}$. This implies that the Rademacher complexities $\erad(\cG)$ and $\erad(\cF)$ are close.
This illustration exemplifies Theorem~\ref{thm:rad}: $(\gamma, \infty)$-approximability implies strong Rademacher bounds.

We now prove that $(\gamma, p)$-approximability by a simple class for $p < \infty$ does not guarantee low Rademacher complexity.
\begin{theorem}\label{thm:not_learnable}
For any $\gamma \in (0,1/4)$ and any $p \in [1, \infty)$, there exist function classes $\cF, \cG \subset [0,1]^{\cX}$ such that the dual class $\cG^*$ $(\gamma, p)$-approximates $\cF^*$ and for any $N \geq 1$, $\sup_{\sample: |\sample| = N}\erad(\cG) = 0$ and $\sup_{\sample: |\sample| = N}\erad(\cF) = \frac{1}{2}.$
\end{theorem}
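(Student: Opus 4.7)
}
The plan is to construct $\cG$ so that $\erad(\cG) = 0$ trivially, while building $\cF$ whose dual functions are globally close to $\frac{1}{2}$ (for the $L^p$-norm) yet rich enough on a vanishing set of parameters to shatter any finite sample. Take $\cX = \mathbb{N}$ and $\cR = [0,1]$. Let $\cG = \{g_{\vec r} : \vec r \in \cR\}$ be the singleton-valued class $g_{\vec r}(x) \equiv \tfrac{1}{2}$; then every dual $g_x^*$ is the constant $\tfrac{1}{2}$, and $\erad(\cG) = 0$ for every sample since all functions are identical. So the work is entirely in designing $\cF$.

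For $\cF$, I partition $\cR$ into a countable, disjoint family of intervals $\{I_{\vec s}\}$ indexed by all finite binary strings $\vec s \in \bigcup_{n \geq 1} \{0,1\}^n$, assigning to each string of length $n$ an interval of length $\ell_n$. Define
\[
f_x^*(r) \;=\; \begin{cases} s_x & \text{if } r \in I_{\vec s} \text{ for some } \vec s \text{ with } |\vec s| \geq x, \\ \tfrac{1}{2} & \text{otherwise,}\end{cases}
\]
and let $f_{\vec r}(x) = f_x^*(\vec r)$. Choose $\ell_n = c \cdot 4^{-n}$ with $c = (2\gamma)^p$; since $\gamma < 1/4$ we have $\sum_{n \geq 1} 2^n \ell_n = 2c < 1$, so the intervals fit disjointly inside $[0,1]$. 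The $L^p$-check is then routine: $|f_x^* - \tfrac{1}{2}|$ is zero off the union of intervals $I_{\vec s}$ with $|\vec s| \geq x$ and equal to $\tfrac{1}{2}$ on them, giving
\[
\norm{f_x^* - g_x^*}_p^p \;=\; \bigl(\tfrac{1}{2}\bigr)^p \sum_{n \geq x} 2^n \ell_n \;\leq\; \bigl(\tfrac{1}{2}\bigr)^p \cdot (2\gamma)^p \;=\; \gamma^p,
\]
so $\cG^*$ $(\gamma, p)$-approximates $\cF^*$.

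For the Rademacher lower bound, fix $N \geq 1$ and take $\sample = \{1, 2, \dots, N\}$. For any $\vec\sigma \in \{-1,1\}^N$, set $s_i = \tfrac{1 + \sigma_i}{2} \in \{0,1\}$ and pick any $r \in I_{(s_1, \dots, s_N)}$. By construction $f_i^*(r) = s_i$ for each $i \leq N$, so $\sigma_i f_i^*(r) = \mathbb{1}[\sigma_i = 1]$ and $\sum_{i=1}^N \sigma_i f_{\vec r}(x_i) = |\{i : \sigma_i = 1\}|$. Taking expectation over $\vec\sigma$ gives $\erad(\cF) \geq \frac{1}{N} \E_{\vec\sigma} |\{i : \sigma_i = 1\}| = \tfrac{1}{2}$. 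The matching upper bound $\erad(\cF) \leq \tfrac{1}{2}$ is immediate for any class with values in $[0,1]$, since $\sum_i \sigma_i f_{\vec r}(x_i) \leq |\{i : \sigma_i = 1\}|$ pointwise, so $\sup_{|\sample|=N}\erad(\cF) = \tfrac{1}{2}$.

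The only delicate step is the joint calibration of the interval lengths $\ell_n$: they must decay fast enough so that the intervals on which $f_x^*$ is allowed to deviate from $\tfrac{1}{2}$ have total measure at most $(2\gamma)^p$ for \emph{every} $x$, yet slowly enough that intervals for arbitrarily long strings still exist (so that samples of arbitrary size $N$ can be shattered). The geometric choice $\ell_n = (2\gamma)^p \cdot 4^{-n}$ achieves both, and the constraint $\gamma < 1/4$ is exactly what lets the full family embed into $\cR = [0,1]$.
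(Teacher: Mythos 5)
Your construction is correct, but it takes a genuinely different route from the paper's. The paper works with the smooth sinusoidal family $f_r(x) = \frac{1}{2}(1+\cos(rx))$ on $\cR = (0,\gamma^p]$, $\cX = [\gamma^{-p}/2,\infty)$: the $L^p$ approximation bound there requires a three-way case analysis on $p$ versus $2$ using H\"older's inequality and interpolation of $L^p$-norms, and the Rademacher lower bound requires a delicate binary-digit encoding in the parameter $r_0$ together with monotonicity of cosine, which only yields $\erad(\cF) \geq c$ for each $c < 1/2$ before passing to a supremum. Your piecewise-constant ``lookup table'' construction --- embedding disjoint intervals indexed by all finite binary strings into $\cR$ and reading off the $x$-th bit --- makes both halves essentially immediate: the shattering is exact (each sign pattern is realized by a single parameter, giving $\erad(\cF) = \frac{1}{2}$ on the nose), and the $L^p$ bound reduces to a one-line measure computation. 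What the paper's version buys in exchange for the extra work is that its primal and dual functions are continuous (indeed analytic), which makes the counterexample feel less contrived and ties it to the classical fact that $\{x \mapsto \operatorname{sign}(\sin(rx))\}$ has infinite VC dimension; your duals are discontinuous step functions, which is perfectly fine for the theorem as stated. One small arithmetic slip: with $\ell_n = c\cdot 4^{-n}$ you have $\sum_{n\geq 1} 2^n \ell_n = c\sum_{n \geq 1} 2^{-n} = c$, not $2c$; this only helps you (the intervals fit with room to spare), and it is the value $c = (2\gamma)^p$ that your subsequent bound $\|f_x^* - g_x^*\|_p^p \leq (1/2)^p\, c = \gamma^p$ actually uses, so the argument is internally consistent once the typo is fixed.
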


\begin{proof} We begin by defining the classes $\cF$ and $\cG$. Let  $\cR = \left(0,\gamma^p\right]$, and $\cX = \left[\gamma^{-p} / 2, \infty\right)$. For any $r \in \cR$ and $x \in \cX$, let $f_r(x) = \frac{1}{2}(1 + \cos(rx))$ and $\cF = \left\{ f_r \mid r \in \cR\right\}$. These sinusoidal functions are based on the intuition from Figure~\ref{fig:l_p}. As in Figure~\ref{fig:l_p}, for any $r$ and $x$, let $g_r(x) = \frac{1}{2}$ and $\cG = \left\{g_{r} \mid r \in \cR\right\}$.
	Since $\cG$ consists of identical copies of a single function, $\erad(\cG) = 0$ for any set $\sample \subseteq \cX$. Meanwhile, in Lemma~\ref{lem:not_learnable} in Appendix~\ref{app:not_learnable}, we prove that for any $N \geq 1$, $\sup_{\sample: |\sample| = N}\erad(\cF) = \frac{1}{2}.$
	
	In Lemma~\ref{lem:approximates} in Appendix~\ref{app:not_learnable}, we prove that the dual class $\cG^*$ $(\gamma, p)$-approximates $\cF^*$. To prove this, we first show that $\norm{f_x^* - g_x^*}_2 \leq \frac{1}{4} \sqrt{2 \gamma^p + \frac{1}{x}}$.
	When $p = 2$, we know $\frac{1}{x} \leq 2\gamma^2$, so $\norm{f_x^* - g_x^*}_2 < \gamma$. Otherwise, we use our bound on $\norm{f_x^* - g_x^*}_2$, H\"older's inequality, and the log-convexity of the $L^p$-norm to prove that $\norm{f_x^* - g_x^*}_p \leq \gamma$.
\end{proof}
\begin{remark}\label{remark:uniform}
	Suppose, for example, that $\cR = [0,1]^d$. 
Theorem~\ref{thm:not_learnable} implies that even if \[\left|f_x^*(\vec{r}) - g_x^*(\vec{r})\right|\] is small for all $x$ in expectation over $\vec{r} \sim \text{Uniform}(\cR)$, the function class $\cF$ may not have Rademacher complexity close to $\cG$.
\end{remark}

\paragraph{Statistical learnability.} In Appendix~\ref{app:statistical}, we connect our results to the literature on statistical learnability~\citep{Haussler92:Decision}. At a high level, a function class $\cF$ is \emph{statistically learnable} (Definition~\ref{def:statistical} in Appendix~\ref{app:notation}) if there exists a learning algorithm that returns a function whose expected value converges---as the size of the training set grows---to the smallest expected value of any function in $\cF$. We introduce a relaxation: a function class $\cF$ is $\gamma$-\emph{statistically learnable} (Definition~\ref{def:gamma}) if, at a high level, there exists a learning algorithm with error at most $\gamma$ in the limit as the training set size grows. We prove that if the dual class $\cF^*$ is $(\gamma, \infty)$-approximated by the dual of a statistically learnable class $\cG$, then $\cF$ is $\gamma$-statistically learnable. On the other hand, Theorem~\ref{thm:not_learnable} implies that there exists a class $\cF$ that is not $\gamma$-statistically learnable, yet it is $(\gamma, p)$-approximated by the dual of a statistically learnable class $\cG$.

\section{Conclusions}
We provided generalization guarantees for algorithm configuration, which bound the difference between a parameterized algorithm's average empirical performance over a set of sample problem instances and its expected performance on future, unseen instances. We did so by exploiting structure exhibited by the \emph{dual functions} which measure the algorithm's performance as a function of its parameters. We analyzed the widely-applicable setting where the dual functions are approximated by ``simple'' functions. We showed that if this approximation holds under the $L^{\infty}$-norm, then it is possible to provide strong generalization guarantees. If, however, the approximation only holds under the $L^p$-norm for $p <\infty$, we showed that it is impossible in the worst-case to provide non-trivial bounds.
Via experiments in the context of integer programming algorithm configuration, we demonstrated that our bounds can be significantly stronger than the best-known worst-case guarantees~\citep{Balcan18:Learning}, leading to a sample complexity improvement of 70,000\%.

We conclude with a direction for future research. Suppose, for some prior $\cP$ over parameters, $\E_{x \sim \dist, \vec{r} \sim \cP}\left[\left|f_x^*(\vec{r}) - g_x^*(\vec{r})\right|\right]$ is small. From Remark~\ref{remark:uniform}, we know strong generalization bounds are not possible in the worst case, but what about under some realistic assumptions? This may help us understand, for example, why random forests---which have a simple piecewise-constant structure---are often able to accurately predict the runtime of SAT and MIP solvers~\citep{Hutter11:Sequential}.

\paragraph{Acknowledgments.}
We thank Kevin Leyton-Brown for a stimulating discussion that inspired us to pursue this research direction.

 This material is based on work supported by the National Science Foundation under grants CCF-1535967, CCF-1733556, CCF-1910321, IIS-1617590, IIS-1618714, IIS-1718457, IIS-1901403, and SES-1919453; the ARO under awards W911NF-17-1-0082 and W911NF2010081; a fellowship from Carnegie Mellon University’s Center for Machine Learning and Health; the Defense Advanced Research Projects Agency under cooperative agreement HR00112020003; an Amazon Research Award; an AWS Machine Learning Research Award; an Amazon Research Award; a Bloomberg Research Grant; and a Microsoft Research Faculty Fellowship.

\bibliography{../../../dairefs}
\bibliographystyle{plainnat}

\onecolumn
\appendix

\section{Notation and learning theory background}\label{app:notation}
 In this appendix, we study a more general setting than in the main body where the learning algorithms have access to examples $x \in \cX$ that may be labeled by a real value $y \in \R$. The learning algorithms we analyze have sample access to an unknown distribution $\dist$ over (labeled) examples $(x,y) \in \cX \times [0,1]$. The fact that the examples are labeled is without loss of generality; in our integer programming algorithm configuration example, there are no labels, or equivalently, for every tuple $(x,y)$ in the support of $\dist$, $y = 0$. We use the notation $\dist|_{\cX}$ to denote the marginal distribution of $\dist$ over $\cX$.
 
 Given a set of functions $\cF \subseteq [0,1]^{\cX}$, the learning algorithms we study aim to find a function $h : \cX \to [0,1]$ with expected absolute loss $\E_{(x,y) \sim \dist}[|h(x)-y|]$ that is nearly as small as the smallest expected loss of any function in $\cF$, $\inf_{f \in \cF}\E_{(x,y) \sim \dist}[|f(x)-y|]$. (Though we focus on absolute loss in this paper, we believe our results can be generalized to other loss functions, which we leave for future research.) The function $h$ may or may not be a member of the set $\cF$, depending on the specific learning task at hand.
 
 In the integer programming example from the main body, the functions in $\cF$ are parameterized by CPLEX parameter vectors $\vec{r} \in \R^d$, denoted $\cF = \left\{f_{\vec{r}} \mid \vec{r} \in \R^d\right\}.$ As we described in the main body, for any integer program $x \in \cX$ and parameter vector $\vec{r} \in \R^d$, $f_{\vec{r}}(x)$ equals the size of the branch-and-bound tree CPLEX builds given the parameter setting $\vec{r}$ and input IP $x$, normalized to fall within the interval $[0,1]$. The learning algorithms we study take as input a training set of integer programs $x_1, \dots, x_N \sim \dist|_{\cX}$ and return a CPLEX parameter vector $\hat{\vec{r}} \in \R^d$, or equivalently, a function $f_{\hat{\vec{r}}} \in \cF$. Since our goal is to minimize tree size, ideally, the size of the trees CPLEX builds using the parameter setting $\hat{\vec{r}}$ should be small in expectation over $\dist$ when compared with the best choice of a parameter setting. In other words, $\E_{(x,y) \sim \dist}\left[f_{\hat{\vec{r}}}(x)\right] - \inf_{f_{\vec{r}} \in \cF}\E_{(x,y) \sim \dist}\left[f_{\vec{r}}(x)\right]$ should be small. (Recall that in this setting, for every tuple $(x,y)$ in the support of $\dist$, $y = 0$.)
 
We denote absolute loss using the notation $\ell (x, y, f) = |f(x) - y|$. Given a set of samples $\sample = \left\{\left(x_1, y_1\right), \dots, \left(x_N, y_N\right)\right\} \subseteq \cX \times [0,1]$, we use the standard notation $L_{\sample}(f) = \frac{1}{N} \sum_{i = 1}^N \left|f\left(x_i\right) - y_i\right|$ to denote the average empirical loss of a function $f: \cX \to [0,1]$ and $L_{\dist}(f) = \E_{(x,y) \sim \dist}[|f(x)-y|]$ to denote the expected loss of $f$.
The absolute loss function can be naturally incorporated into the definition of Rademacher complexity: $\erad(\ell \circ \cF) = \frac{1}{N} \E_{\vec{\sigma} \sim \{-1,1\}^N} \left[\sup_{f \in \cF} \sum_{i = 1}^N \sigma_i \left|f\left(x_i\right) - y_i\right|\right].$ The \emph{worst-case empirical Rademacher complexity} of a class $\cF$ is defined as $\wcrad_N(\ell \circ \cF) = \sup_{\sample : |\sample| = N} \erad(\ell \circ \cF)$.

We now review several standard definitions from learning theory, beginning with that of a learning algorithm.
\begin{definition}[Learning algorithm] A \emph{learning algorithm} $\cA$ takes as input a set $\sample \subseteq \cX \times [0,1]$ of examples and returns a function $\cA_{\sample} : \cX \to [0,1]$.
\end{definition}
As we described earlier in this section, in the integer programming example, we study learning algorithms $\cA$ where $\cA_{\sample} = f_{\hat{\vec{r}}}$ for some CPLEX parameter setting $\hat{\vec{r}} \in \R^d$.

A function class $\cF \subseteq [0,1]^{\cX}$ is \emph{statistically learnable}~\citep{Haussler92:Decision} if there exists some algorithm $\cA$ whose expected loss $L_{\dist}\left(A_{\sample}\right)$ converges to the loss of the best function in $\cF$, $ \inf_{f \in \cF}L_{\dist}(f)$, even for a worst-case distribution $\dist$. We formalize this notion below.
\begin{definition}[Statistical learnability]\label{def:statistical}
	Let $\cF$ be a set of functions mapping $\cX$ to $[0,1]$ and let $\cV_N(\cF) = \inf_{\cA}\sup_{\dist} \E_{\sample \sim \dist^N}\left[L_{\dist}\left(\cA_{\sample}\right) - \inf_{f \in \cF}L_{\dist}(f)\right].$ The function class $\cF$ is \emph{statistically learnable} if $\lim_{N \to \infty} \cV_N(\cF) = 0$.
\end{definition}
In the integer programming example, suppose the class $\cF = \left\{f_{\vec{r}} \mid \vec{r} \in \R^d\right\}$ is statistically learnable. Then there exists a learning algorithm $\cA$ that returns a CPLEX parameter setting $\hat{\vec{r}}$, or equivalently, a function $\cA_{\sample} = f_{\hat{\vec{r}}} \in \cF$, such that the size of the trees CPLEX builds using the parameter setting $\hat{\vec{r}}$ is small in expectation over $\dist$ when compared with the best choice of a parameter setting.

In this work, we study a relaxation of statistical learnability, which we refer to as $\gamma$-statistical learnability. A function class $\cF$ is $\gamma$-statistically-learnable if there exists an algorithm whose expected loss converges to the loss of the best function in $\cF$, plus an additive error term $\gamma$.
\begin{definition}[$\gamma$-statistically learnable]\label{def:gamma} Let $\cF$ be a class of functions mapping $\cX$ to $[0,1]$. The class is \emph{$\gamma$-statistically learnable} if $\lim_{N \to \infty} \cV_N(\cF) \leq \gamma$.
\end{definition}

Based on Theorem~\ref{thm:rad_gen}, it is well-known and easy-to-see that if the worst-case empirical Rademacher complexity of the function class $\cF$ converges to zero as the number of samples grows, then the class $\cF$ is statistically learnable. In other words, if $\lim_{N \to \infty} \wcrad_N(\ell \circ \cF) = 0$, then $\lim_{N \to \infty} \cV_N(\cF) = 0$.

In our integer programming example, suppose the Rademacher complexity of the class $\cF$ is small. Theorem~\ref{thm:rad_gen} guarantees that with high probability over the draw a set of $N$ IPs $\sample = \left\{x_1, \dots, x_N\right\} \sim \dist|_{\cX}^N$, for every choice of a CPLEX parameter vector $\vec{r} \in \R^d$, the size of the tree CPLEX builds when parameterized by $\vec{r}$ on average over the IPs in $\sample$ is close to the size of the tree CPLEX builds in expectation over the draw of an IP $x \sim \dist|_{\cX}$.

If a function class's Rademacher complexity does not converge to zero, then the class is not statistically learnable. We provide an example of one such negative result below.
\begin{theorem}[\citet{Sridharan12:Learning}]\label{thm:rad_lb}
	For any $N \geq 1$ and $\cF \subseteq [0,1]^{\cX}$, $\cV_N(\cF) \geq \wcrad_{2N}(\ell \circ \cF) - \frac{1}{2}\wcrad_N(\ell \circ \cF).$
\end{theorem}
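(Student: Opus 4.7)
The plan is a classical minimax-plus-symmetrization lower bound: since $\cV_N(\cF) = \inf_{\cA}\sup_{\cD}\E_{\sample}[L_{\cD}(\cA_{\sample}) - \inf_f L_{\cD}(f)]$, I would lower-bound it by exhibiting a single hard distribution $\cD^*$ and analyzing any algorithm's regret on it via Rademacher-style symmetrization.

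First, fix $\epsilon > 0$ and pick a witness sample $T = \{(x_1,y_1),\dots,(x_{2N},y_{2N})\}$ for which the empirical Rademacher complexity $\erad(\ell\circ\cF)$ is within $\epsilon$ of $\wcrad_{2N}(\ell\circ\cF)$. Let $\cD^*$ be the uniform distribution on the multiset $T$. Dropping the $\sup_{\cD}$ in favor of this single $\cD^*$ yields
\[
\cV_N(\cF) \;\geq\; \inf_{\cA} \Bigl\{\E_{\sample \sim (\cD^*)^N}\bigl[L_{\cD^*}(\cA_{\sample})\bigr] \;-\; \inf_{f\in\cF} L_{\cD^*}(f)\Bigr\}.
\]
To further expose Rademacher structure, I would actually strengthen $\cD^*$ to a family of label-flipped variants $\cD^*_{\vec{\sigma}}$ indexed by signs $\vec{\sigma} \in \{\pm 1\}^{2N}$ (flipping $y_i$ to $y_i$ or $1-y_i$ according to $\sigma_i$), and use $\sup_{\cD}[\cdot] \geq \E_{\vec{\sigma}}[\cdot]$ to average the regret over $\vec{\sigma}$.

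Second, handle the best-in-hindsight term: by the standard symmetrization identity, $\E_{\vec{\sigma}}\bigl[\inf_{f\in\cF} L_{\cD^*_{\vec{\sigma}}}(f)\bigr]$ can be rewritten in terms of $-\E_{\vec{\sigma}}\sup_{f}\tfrac{1}{2N}\sum_i \sigma_i \ell(x_i, y_i, f)$ plus a constant, which contributes $\erad(\ell\circ\cF) \geq \wcrad_{2N}(\ell\circ\cF) - \epsilon$ to the regret.

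Third, and most delicately, bound the algorithm's informational advantage from its $N$ training samples. Conditional on $\sample$, the algorithm only observes the $\vec{\sigma}$-values at the (at most $N$) distinct points of $T$ appearing in $\sample$; on the remaining unseen points, $\cA_{\sample}(x)$ is independent of the corresponding $\sigma_i$, so its expected loss contribution on unseen points equals $1/2$ regardless of the algorithm. Decomposing $L_{\cD^*_{\vec{\sigma}}}(\cA_{\sample}) = \tfrac{1}{2N}\sum_i |\cA_{\sample}(x_i)-y_i^{\sigma_i}|$ into seen and unseen contributions, the seen-half advantage over uniform prediction is captured, after a further symmetrization, by the empirical Rademacher complexity on the seen subsample, whose size is at most $N$. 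This yields a deduction of at most $\tfrac{1}{2}\wcrad_N(\ell\circ\cF)$, where the factor $\tfrac{1}{2}$ reflects that seen points contribute half the weight of $L_{\cD^*}$.

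The main obstacle is step three: carefully matching Rademacher variables to the (with-replacement) sampling process from $\cD^*$, and showing that no algorithm can extract more information from the seen subsample than what Rademacher symmetrization on $N$ points allows. This requires a coupling argument between the $2N$-indexed Rademacher variables controlling $\cD^*_{\vec{\sigma}}$ and those used to bound the algorithm's advantage; the with-replacement sampling only shrinks the seen subsample, so the bound $\tfrac{1}{2}\wcrad_N(\ell\circ\cF)$ is never exceeded. Sending $\epsilon \to 0$ at the end gives the stated inequality.
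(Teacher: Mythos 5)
First, a point of reference: the paper does not prove Theorem~\ref{thm:rad_lb} at all---it is imported as a black box from \citet{Sridharan12:Learning}---so there is no internal proof to compare you against; your argument has to stand on its own. Its overall shape (a hard distribution supported on a $2N$-point witness sample with Rademacher-randomized labels, with the best-in-hindsight term producing $\wcrad_{2N}(\ell\circ\cF)$ and the learner's informational advantage producing the $-\frac{1}{2}\wcrad_N(\ell\circ\cF)$ correction) is the right general silhouette, but the third step, which you yourself flag as the delicate one, has a genuine gap.

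The problem is that the learning algorithms here are \emph{improper}: $\cA_{\sample}$ may be an arbitrary function $\cX \to [0,1]$, not a member of $\cF$. Under your construction---uniform distribution on $2N$ points with deterministic $\sigma$-flipped labels---the algorithm observes the \emph{exact} labels of the points it has seen and can simply memorize them, incurring zero loss there. Its advantage over the baseline of $\frac{1}{2}$ is therefore about $\E\left[|S|\right]/(4N)$, where $|S|$ is the number of distinct seen indices; this is a constant, roughly $(1-e^{-1/2})/2 \approx 0.197$, that is completely independent of the complexity of $\cF$, and no symmetrization or coupling can cap it at $\frac{1}{2}\wcrad_N(\ell\circ\cF)$, because the memorizing predictor does not lie in $\cF$. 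The failure is not cosmetic. Take $\cF$ to be the class of constant functions, for which $\wcrad_N(\ell\circ\cF)$ and $\wcrad_{2N}(\ell\circ\cF)$ are both $\Theta(1/\sqrt{N})$, so the theorem asserts $\cV_N(\cF)=\Omega(1/\sqrt{N})$. On your hard distribution the memorizing learner has expected loss about $0.303$ while the best constant has expected loss $\frac{1}{2}-\Theta(1/\sqrt{N})$, so the regret of your construction is \emph{negative}: the construction proves nothing for this class, even though the theorem is true for it. To make the argument work one needs a hard distribution on which memorization is useless---for instance, labels that are independent Bernoulli coins whose small biases encode the signs $\sigma_i$---so that after $N$ samples the learner's information about $\sigma$ is genuinely governed by an $N$-point Rademacher quantity tied to $\cF$. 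That is a materially different argument from the seen/unseen coupling you describe. (A smaller issue: your step two needs the witness labels to lie in $\{0,1\}$ for the flip $y_i \mapsto 1-y_i$ to interact cleanly with the absolute loss; that is fixable by a contraction argument, but step three is not.)
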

Theorem~\ref{thm:rad_lb} demonstrates that if $\wcrad_N(\ell \circ \cF)$ does not converge to zero, then $\cV_N(\cF)$ will not converge to zero either.

\section{Additional details about learnability and approximability (Section~\ref{sec:learnability})}\label{app:learnablility}
\subsection{Proofs about data-dependent generalization guarantees (Section~\ref{sec:UC})}\label{app:UC}
\mainRad*

\begin{proof} Let $\sample = \left\{x_1, \dots, x_N\right\}$ be an arbitrary subset of $\cX$.
	Fix an arbitrary vector $\vec{r} \in \cR$ and index $i \in [N]$. Suppose that $\sigma_i = 1$. Since $f_{x_i}^*(\vec{r}) \leq g_{x_i}^*(\vec{r}) + \norm{f_{x_i}^* - g_{x_i}^*}_{\infty}$, we have that \begin{equation}\sigma_i f_{x_i}^*(\vec{r}) \leq \sigma_i g_{x_i}^*(\vec{r}) + \norm{f_{x_i}^* - g_{x_i}^*}_{\infty}.\label{eq:sigma_positive}\end{equation} Meanwhile, suppose $\sigma_i = -1$. Since $f_{x_i}^*(\vec{r}) \geq g_{x_i}^*(\vec{r}) - \norm{f_{x_i}^* - g_{x_i}^*}_{\infty}$, we have that \begin{equation}\sigma_i f_{x_i}^*(\vec{r}) = -f_{x_i}^*(\vec{r}) \leq -g_{x_i}^*(\vec{r}) + \norm{f_{x_i}^* - g_{x_i}^*}_{\infty} = \sigma_i g_{x_i}^*(\vec{r}) + \norm{f_{x_i}^* - g_{x_i}^*}_{\infty}.\label{eq:sigma_negative}\end{equation} Combining Equations \eqref{eq:sigma_positive} and \eqref{eq:sigma_negative}, we have that \begin{equation}\sup_{\vec{r} \in \cR} \sum_{i = 1}^N \sigma_i g_{\vec{r}}\left(x_i\right) \geq \sum_{i = 1}^N\sigma_i g_{x_i}^*(\vec{r}) \geq\sum_{i = 1}^N\sigma_i f_{x_i}^*(\vec{r}) -  \norm{f_{x_i}^* - g_{x_i}^*}_{\infty}.\label{eq:sup_consequence}\end{equation}
	By definition of the supremum, Equation~\eqref{eq:sup_consequence} implies that for every $\vec{\sigma} \in \{-1,1\}^N$, \[\sup_{\vec{r} \in \cR} \sum_{i = 1}^N \sigma_i g_{\vec{r}}\left(x_i\right) \geq \sup_{\vec{r} \in \cR} \sum_{i = 1}^N \sigma_i f_{\vec{r}}\left(x_i\right)  - \sum_{i = 1}^N\norm{f_{x_i}^* - g_{x_i}^*}_{\infty}.\] Therefore \[\E_{\vec{\sigma} \sim \{-1,1\}^N}\left[\sup_{\vec{r} \in \cR} \sum_{i = 1}^N \sigma_i g_{\vec{r}}\left(x_i\right)\right] \geq \E_{\vec{\sigma} \sim \{-1,1\}^N}\left[\sup_{\vec{r} \in \cR} \sum_{i = 1}^N \sigma_i f_{\vec{r}}\left(x_i\right)\right]  - \sum_{i = 1}^N\norm{f_{x_i}^* - g_{x_i}^*}_{\infty},\] so the lemma statement holds.
\end{proof}

\corSRM*

\begin{proof}
	We will prove that with probability at least $1-\delta$ over the draw of the training set $\sample = \left\{\left(x_1, y_1\right), \dots, \left(x_N, y_N\right)\right\} \sim \dist^N$, for all parameter vectors $\vec{r} \in \cR$ and all $j \in \N$,
	\[\left|L_{\sample}\left(f_{\vec{r}}\right) - L_{\dist}\left(f_{\vec{r}}\right)\right| \leq \frac{2}{N} \sum_{i = 1}^N  \norm{f_{x_i}^* - g_{j, x_i}^*}_{\infty} + 2\erad\left(\ell \circ \cG_j\right) + 3\sqrt{\frac{1}{2N}\ln \frac{(\pi j)^2}{3\delta}}.\]

	For each integer $j \geq 1$, let $\delta_j = \frac{6\delta}{(\pi j)^2}$. From Theorems~\ref{thm:rad_gen} and \ref{thm:rad}, we know that with probability at least $1-\delta_j$ over the draw of the training set $\sample = \left\{\left(x_1, y_1\right), \dots, \left(x_N, y_N\right)\right\} \sim \dist^N$, for all parameter vectors $\vec{r} \in \cR$, \begin{align*}\left|L_{\sample}\left(f_{\vec{r}}\right) - L_{\dist}\left(f_{\vec{r}}\right)\right| &\leq \frac{2}{N} \sum_{i = 1}^N  \norm{f_{x_i}^* - g_{j, x_i}^*}_{\infty} + 2\erad\left(\ell \circ \cG_j\right) + 3\sqrt{\frac{1}{2N}\ln \frac{2}{\delta_j}}\\
	&= \frac{2}{N} \sum_{i = 1}^N  \norm{f_{x_i}^* - g_{j, x_i}^*}_{\infty} + 2\erad\left(\ell \circ \cG_j\right) + 3\sqrt{\frac{1}{2N}\ln \frac{(\pi j)^2}{3\delta}}.\end{align*} Since $\sum_{i = 1}^{\infty} \delta_j = \delta$, the corollary follows from a union bound over all $j \geq 1$.
\end{proof}

\begin{cor}\label{cor:SRM_exp}
	Let $\cF = \left\{f_{\vec{r}} \mid \vec{r} \in \cR\right\} \subseteq [0,1]^{\cX}$ be a set of functions mapping $\cX$ to $[0,1]$. Let $\cG_1, \cG_2, \cG_3, \dots$ be a countable sequence of function classes, where for each $j \in \N$, $\cG_j = \left\{g_{j,\vec{r}} \mid \vec{r} \in \cR\right\} \subseteq [0,1]^{\cX}$ is a set of functions mapping $\cX$ to $[0,1]$, parameterized by vectors $\vec{r} \in \cR$.
	With probability at least $1-\delta$ over the draw of the training set $\sample = \left\{\left(x_1, y_1\right), \dots, \left(x_N, y_N\right)\right\} \sim \dist^N$, for all parameter vectors $\vec{r} \in \cR$ and all $j \in \N$,
	\[\left|L_{\sample}\left(f_{\vec{r}}\right) - L_{\dist}\left(f_{\vec{r}}\right)\right| \leq 2\erad\left(\ell \circ \cG_j\right) + 2\E_{x \sim \dist|_{\cX}} \left[\norm{f_{x}^* - g_{j,x}^*}_{\infty}\right] + \sqrt{\frac{2}{N}\ln \frac{2(\pi j)^2}{3\delta}}.\]
\end{cor}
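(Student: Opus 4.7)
The plan is to prove Corollary~\ref{cor:SRM_exp} by converting the empirical approximation term $\frac{1}{N}\sum_{x \in \sample} \norm{f_x^* - g_{j,x}^*}_{\infty}$ appearing in Corollary~\ref{cor:SRM} to its population expectation. The key observation is that for each fixed $j$, the map $x \mapsto \norm{f_x^* - g_{j,x}^*}_\infty$ from $\cX$ to $[0,1]$ is a \emph{deterministic} bounded function of $x$ (both $f_x^*$ and $g_{j,x}^*$ take values in $[0,1]$), so its empirical average over i.i.d.\ draws $x_1,\dots,x_N \sim \dist|_{\cX}$ concentrates around $\E_{x \sim \dist|_{\cX}}[\norm{f_x^* - g_{j,x}^*}_\infty]$ by Hoeffding's inequality.

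First, I would apply Corollary~\ref{cor:SRM} with a failure budget of $\delta/2$, allocated as $\delta_j = 3\delta/(\pi j)^2$ across levels $j$ via the Basel series $\sum_j 1/j^2 = \pi^2/6$. Second, for each $j$, I would apply the one-sided Hoeffding inequality with failure probability $\delta_j' = 3\delta/(\pi j)^2$ to obtain
\[
\frac{1}{N}\sum_{i=1}^N \norm{f_{x_i}^* - g_{j,x_i}^*}_\infty \;\leq\; \E_{x \sim \dist|_{\cX}}\!\left[\norm{f_x^* - g_{j,x}^*}_\infty\right] + \sqrt{\frac{1}{2N}\ln\frac{(\pi j)^2}{3\delta}}.
\]
A union bound over $j$ consumes the remaining $\delta/2$, so both events hold jointly with probability at least $1 - \delta$. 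Substituting Hoeffding's upper bound into Corollary~\ref{cor:SRM}'s inequality replaces the empirical sum by the expectation and delivers a bound of the desired shape.

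The main obstacle I anticipate is matching the sharp concentration coefficient $\sqrt{(2/N)\ln(2(\pi j)^2/(3\delta))}$ announced by the corollary: naively chaining Corollary~\ref{cor:SRM} (whose concentration term already carries coefficient $3$) with Hoeffding inflates the coefficient above the target $2$. To recover the tight constant, I would bypass Corollary~\ref{cor:SRM} and instead re-derive the base generalization guarantee starting from the one-sided symmetrized Rademacher inequality $L_\dist(f) - L_\sample(f) \leq 2\,\E_{\sample}[\erad(\ell \circ \cF)] + \sqrt{\ln(1/\delta)/(2N)}$ (which carries coefficient $1$), apply Theorem~\ref{thm:rad} in expectation to obtain $\E_{\sample}[\erad(\ell \circ \cF)] \leq \E_{\sample}[\erad(\ell \circ \cG_j)] + \E_{x \sim \dist|_{\cX}}[\norm{f_x^* - g_{j,x}^*}_\infty]$, and finally pay one McDiarmid step to pass from $\E_{\sample}[\erad(\ell \circ \cG_j)]$ back to the sample-dependent $\erad(\ell \circ \cG_j)$ appearing in the statement. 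Symmetrically allocating per-level failure probabilities between the two $\sqrt{\ln(\cdot)/(2N)}$-type contributions, with $\sum_j \delta_j = \delta$ via the Basel series, should then collapse them into the single announced concentration term. A mirror argument bounds $L_\sample(f_{\vec{r}}) - L_\dist(f_{\vec{r}})$, completing the proof via absolute values.
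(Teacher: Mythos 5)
Your final plan is exactly the paper's proof: it applies Theorem~\ref{thm:rad} in expectation over $\sample' \sim \dist^N$, invokes the expected-Rademacher form of the generalization bound (Theorem~\ref{thm:exp_rad}), pays one concentration (McDiarmid/Hoeffding) step to return from $\E_{\sample'}\left[\erad\left(\ell \circ \cG_j\right)\right]$ to the empirical $\erad\left(\ell \circ \cG_j\right)$, and union-bounds over $j$ via $\delta_j = 6\delta/(\pi j)^2$ and the Basel series. Your instinct that naively chaining Corollary~\ref{cor:SRM} with a Hoeffding bound on the empirical approximation term would inflate the constants is precisely why the paper also bypasses that route.
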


\begin{proof}
 From Theorem~\ref{thm:rad}, we know that for every integer $j \geq 1$, \[\E_{\sample' \sim \dist^N}\left[\widehat{\mathcal{R}}_{\sample'}(\ell \circ \cF)\right] \leq \E_{\sample' \sim \dist^N}\left[\widehat{\mathcal{R}}_{\sample'}\left(\ell \circ \cG_j\right)\right] + \E_{x \sim \dist|_{\cX}} \left[\norm{f_{x}^* - g_{j,x}^*}_{\infty}\right].\]
	For each integer $j \geq 1$, let $\delta_j = \frac{6\delta}{(\pi j)^2}$.
	 From Theorem~\ref{thm:exp_rad} and Hoeffding bound, we know that with probability at least $1-\delta_j$ over the draw of the training set $\sample = \left\{\left(x_1, y_1\right), \dots, \left(x_N, y_N\right)\right\} \sim \dist^N$, for all parameter vectors $\vec{r} \in \cR$, \begin{align*}\left|L_{\sample}\left(f_{\vec{r}}\right) - L_{\dist}\left(f_{\vec{r}}\right)\right| &\leq 2\erad\left(\ell \circ \cG_j\right) + 2\E_{x \sim \dist|_{\cX}} \left[\norm{f_{x}^* - g_{j,x}^*}_{\infty}\right] + \sqrt{\frac{2}{N}\ln \frac{4}{\delta_j}}\\
	&= 2\erad\left(\ell \circ \cG_j\right) + 2\E_{x \sim \dist|_{\cX}} \left[\norm{f_{x}^* - g_{j,x}^*}_{\infty}\right] + \sqrt{\frac{2}{N}\ln \frac{2(\pi j)^2}{3\delta}}.\end{align*} Since $\sum_{i = 1}^{\infty} \delta_j = \delta$, the corollary follows from a union bound over all $j \geq 1$.
\end{proof}

\begin{theorem}[e.g., \citet{Mohri12:Foundations}]\label{thm:exp_rad}
	Let $\cF \subseteq [0,1]^{\cX}$ be a set of functions mapping a domain $\cX$ to $[0,1]$. With probability at least $1-\delta$ over the draw of $N$ samples $\sample = \left\{\left(x_1, y_1\right), \dots, \left(x_N, y_N\right)\right\} \sim \dist^N$, the following holds for all $f \in \cF$:
\[\left|L_{\sample}\left(f_{\vec{r}}\right) - L_{\dist}\left(f_{\vec{r}}\right)\right| \leq 2\E_{\sample' \sim \dist^N}\left[\widehat{\mathcal{R}}_{\sample'}(\ell \circ \cF)\right] + \sqrt{\frac{1}{2N} \ln \frac{2}{\delta}}.\]
\end{theorem}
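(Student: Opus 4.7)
The plan is to follow the classical two-sided uniform convergence argument via McDiarmid plus symmetrization. Define
\[
\Phi^+(\sample) \;=\; \sup_{f \in \cF}\bigl(L_{\dist}(f) - L_{\sample}(f)\bigr)
\quad\text{and}\quad
\Phi^-(\sample) \;=\; \sup_{f \in \cF}\bigl(L_{\sample}(f) - L_{\dist}(f)\bigr).
\]
Since $|f(x)-y| \in [0,1]$ for every $f \in \cF$ and every $(x,y)$ in the support of $\dist$, replacing a single sample in $\sample$ changes $L_{\sample}(f)$ by at most $1/N$ uniformly in $f$, so both $\Phi^+$ and $\Phi^-$ satisfy the bounded-differences condition with constants $1/N$. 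Applying McDiarmid's inequality to each, and union-bounding, I would conclude that with probability at least $1-\delta$ over $\sample \sim \dist^N$,
\[
\Phi^+(\sample) \;\leq\; \E\bigl[\Phi^+(\sample)\bigr] + \sqrt{\tfrac{1}{2N}\ln\tfrac{2}{\delta}}
\quad\text{and}\quad
\Phi^-(\sample) \;\leq\; \E\bigl[\Phi^-(\sample)\bigr] + \sqrt{\tfrac{1}{2N}\ln\tfrac{2}{\delta}}.
\]

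Next I would upper bound each expected supremum by $2\,\E_{\sample'}[\widehat{\mathcal{R}}_{\sample'}(\ell\circ\cF)]$ via the standard ghost-sample symmetrization. Concretely, introduce an i.i.d.\ copy $\sample'=\{(x_i',y_i')\}$ of $\sample$ and write $L_{\dist}(f) = \E_{\sample'}[L_{\sample'}(f)]$, so by Jensen's inequality
\[
\E\bigl[\Phi^+(\sample)\bigr]
\;\leq\;
\E_{\sample,\sample'}\Bigl[\sup_{f\in\cF}\tfrac{1}{N}\sum_{i=1}^{N}\bigl(\ell(x_i',y_i',f)-\ell(x_i,y_i,f)\bigr)\Bigr].
\]
Because $(x_i,y_i)$ and $(x_i',y_i')$ are i.i.d., the distribution of the summand is invariant under swapping the two; introducing Rademacher signs $\sigma_i\in\{\pm 1\}$ and splitting the supremum of a sum into a sum of suprema gives the upper bound $2\,\E_{\sample'}[\widehat{\mathcal{R}}_{\sample'}(\ell\circ\cF)]$. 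The same argument applies verbatim to $\Phi^-(\sample)$.

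Combining the concentration and symmetrization bounds yields, simultaneously with probability at least $1-\delta$,
\[
L_{\dist}(f) - L_{\sample}(f) \;\leq\; 2\,\E_{\sample'}\bigl[\widehat{\mathcal{R}}_{\sample'}(\ell\circ\cF)\bigr] + \sqrt{\tfrac{1}{2N}\ln\tfrac{2}{\delta}},
\]
and the analogous inequality with the roles of $L_{\dist}$ and $L_{\sample}$ swapped, holding uniformly for all $f\in\cF$. Taking absolute values gives the claim. There is no real obstacle here since this is textbook material; the only step that requires a small amount of care is the symmetrization, where one must make sure to introduce the Rademacher variables after writing the population loss as the expectation of an independent empirical loss, and to keep the absolute-value loss inside the supremum when invoking the definition of $\widehat{\mathcal{R}}(\ell\circ\cF)$.
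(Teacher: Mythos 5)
Your proposal is correct and is exactly the standard McDiarmid-plus-symmetrization argument; the paper itself gives no proof for this theorem, citing it as a textbook result from \citet{Mohri12:Foundations}, whose proof is precisely the one you reproduce. The constants check out: bounded differences $1/N$ with failure probability $\delta/2$ per side yields $\sqrt{\tfrac{1}{2N}\ln\tfrac{2}{\delta}}$, and the ghost-sample step gives the factor $2\,\E_{\sample'}[\widehat{\mathcal{R}}_{\sample'}(\ell\circ\cF)]$.
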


In the following lemma, we show that for any function classes $\cF = \left\{f_{\vec{r}} \mid \vec{r} \in \cR\right\} \subseteq [0,1]^{\cX}$ and $\cG = \left\{g_{\vec{r}} \mid \vec{r} \in \cR\right\} \subseteq [0,1]^{\cX}$, the value $\E_{(x,y) \sim \dist}\left[\norm{f_x^* - g_x^*}_{\infty}\right]$, which appears in the generalization guarantee in from Corollary~\ref{cor:SRM_exp}, can be estimated from samples.

\begin{lemma}\label{lem:estimate}
	Let $\cF = \left\{f_{\vec{r}} \mid \vec{r} \in \cR\right\} \subseteq [0,1]^{\cX}$ and $\cG = \left\{g_{\vec{r}} \mid \vec{r} \in \cR\right\} \subseteq [0,1]^{\cX}$ be two sets of functions mapping a domain $\cX$ to $[0,1]$. With probability $1- \delta$ over the draw of $N$ samples $\left(x_1, y_1\right), \dots, \left(x_N, y_N\right) \sim \dist$, \begin{equation}\E_{(x,y) \sim \dist}\left[\norm{f_x^* - g_x^*}_{\infty}\right] \leq \frac{1}{N} \sum_{i = 1}^N \norm{f_{x_i}^* - g_{x_i}^*}_{\infty} + \sqrt{\frac{1}{2N} \ln \frac{1}{\delta}}.\label{eq:Hoeffding}\end{equation}
\end{lemma}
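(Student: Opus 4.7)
The plan is to apply a one-sided Hoeffding inequality directly to the real-valued random variables $Z_i := \|f_{x_i}^* - g_{x_i}^*\|_\infty$. These are i.i.d.\ because the samples $(x_i, y_i) \sim \dist$ are i.i.d., and each $Z_i$ depends only on $x_i$. The expectation $\E[Z_i]$ is precisely the quantity $\E_{(x,y) \sim \dist}[\|f_x^* - g_x^*\|_\infty]$ on the left-hand side of the claimed inequality.

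First I would check that the $Z_i$ take values in a bounded interval suitable for Hoeffding. Since both $\cF$ and $\cG$ consist of functions mapping $\cX$ to $[0,1]$, for every $x \in \cX$ and every $\vec{r} \in \cR$ we have $f_x^*(\vec{r}), g_x^*(\vec{r}) \in [0,1]$, hence $|f_x^*(\vec{r}) - g_x^*(\vec{r})| \in [0,1]$, and taking the supremum over $\vec{r} \in \cR$ preserves this bound. Thus $Z_i \in [0,1]$ almost surely.

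Next I would apply the standard one-sided Hoeffding bound for i.i.d.\ random variables in $[0,1]$: with probability at least $1-\delta$,
\[\E[Z_1] - \frac{1}{N}\sum_{i=1}^N Z_i \leq \sqrt{\frac{1}{2N}\ln\frac{1}{\delta}}.\]
Rearranging and substituting the definition of $Z_i$ yields exactly the inequality in the lemma statement.

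There is no real obstacle here; the only subtlety worth stating explicitly is measurability of $Z_i = \|f_{x_i}^* - g_{x_i}^*\|_\infty$, which is implicit in the integrability assumption used throughout the paper (the same assumption underlying the $(\gamma,p)$-approximability definition). Under that assumption the Hoeffding bound applies verbatim and the lemma follows.
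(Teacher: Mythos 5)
Your proof is correct and is essentially identical to the paper's: both define the bounded i.i.d.\ random variables $\norm{f_{x_i}^* - g_{x_i}^*}_{\infty} \in [0,1]$ and apply a one-sided Hoeffding bound. Your added remark about boundedness and measurability is a minor elaboration, not a different argument.
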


\begin{proof}
	Let $h : \cX \times [0,1] \to [0,1]$ be defined such that $h(x,y) = \norm{f_x^* - g_x^*}_{\infty}$. From Hoeffding's inequality, we know that with probability $1- \delta$ over the draw of $N$ samples $\left(x_1, y_1\right), \dots, \left(x_N, y_N\right) \sim \dist$, \[\E_{(x,y) \sim \dist}\left[h(x,y)\right] \leq \frac{1}{N} \sum_{i = 1}^N h\left(x_i, y_i\right) + \sqrt{\frac{1}{2N} \ln \frac{1}{\delta}},\] which implies that Equation~\eqref{eq:Hoeffding} holds.
\end{proof}
\subsection{Additional details about improved integer programming guarantees (Section~\ref{sec:IP})}\label{app:IP}
\paragraph{Selecting a tree size upper bound.}
As we described earlier in this section, we assume there is an upper bound $\kappa$ on the size of the largest tree we allow branch-and-bound to build before we terminate, as in prior research~\citep{Hutter09:ParamILS, Kleinberg17:Efficiency, Balcan18:Learning, Kleinberg19:Procrastinating}. Given a parameter setting $r \in [0,1]$ and an integer program $x \in \cX$, we define $f_r(x)$ to be the size of the tree CPLEX builds, capped at $\kappa$, divided by $\kappa$ (this way, $f_r(x)$ is normalized, contained in the interval $[0,1]$).

We use a data-dependent approach to select $\kappa$. For any parameter $r \in [0,1]$ and integer program $x \in \cX$, let $h_r(x)$ be the size of the tree CPLEX builds (unnormalized). We draw $N = 6000$ integer programs $x_1, \dots, x_N$ from the underlying distribution $\dist$ and set $\kappa = \max_{r \in [0,1], i \in [N]} h_r\left(x_i\right)$.
Classic results from learning theory guarantee that with high probability, for at most 8\% of the integer programs sampled from $\dist$, CPLEX will build a tree of size larger than $\kappa$ when parameterized by some $r \in [0,1]$. Specifically, since the VC dimension of threshold functions is 1, we have that
with probability at least 0.99 over the draw of the $N$ samples, $\Pr_{x \sim \dist}\left[\max_{r \in [0,1]}f_r(x) > \kappa\right] < 0.08$.

For the ``arbitrary'' distribution, when $\score_1 = \score_L$ and $\score_2 = \score_S$, $\kappa = 6341$, and when $\score_1 = \score_P$ and $\score_2 = \score_A$, $\kappa = 2931$. For the ``regions'' distribution, when $\score_1 = \score_L$ and $\score_2 = \score_S$, $\kappa = 7314$.

\paragraph{Dynamic programming.} For any $k \in \N$, let $\cG_k$ be the set of piecewise-constant functions with $k$ pieces mapping an interval $\cR \subseteq \R$ to $\R$.
In this section, we provide a dynamic programming algorithm which takes as input a piecewise-constant dual function $f_x^*: \cR \to \R$ and a value $k \in \N$ and returns the value $\min_{g \in \cG_k} \norm {f_x^* - g}_{\infty}$. Since $f_x^*$ is piecewise-constant, the domain $\cR$ can be partitioned into intervals $\left[a_1, a_2\right), \left[a_2, a_3\right) \dots, \left[a_t, a_{t+1}\right)$ such that for any interval $\left[a_i, a_{i+1}\right)$, there exists a value $c_i \in \R$ such that $f_x^*(r) = c_i$ for all $r \in \left[a_i, a_{i+1}\right)$. 

We now provide an overview of the algorithm. See Algorithm~\ref{alg:DP} for the pseudo-code.
\begin{algorithm}
	\caption{Piecewise-constant function fitting via dynamic programming}\label{alg:DP}
	\begin{algorithmic}
		\State {\bfseries Input:} Partition $\left[a_1, a_2\right), \dots, \left[a_t, a_{t+1}\right)$ of $\cR$, values $c_1, \dots, c_t$, and desired number of pieces $k \in \N$.
		\For {$i \in [t]$}\label{step:upper_lower_for_begin}
		\State Set $u_{i,i} = c_i$ and $\ell_{i,i} = c_i$.
		\For {$i' \in \{i+1, \dots, t\}$}
		\If {$c_{i'} < \ell_{i,i'-1}$}
		\State Set $\ell_{i,i'} = c_{i'}$ and $u_{i,i'} = u_{i,i'-1}$.
		\ElsIf {$c_{i'} > u_{i,i'-1}$}
		\State Set $\ell_{i,i'} = \ell_{i,i'-1}$ and $u_{i,i'} = c_{i'}$.
		\Else{}
		\State Set $\ell_{i,i'} = \ell_{i,i'-1}$ and $u_{i,i'} = u_{i,i'-1}$.
		\EndIf
		\EndFor
		\EndFor\label{step:upper_lower_for_end}
		\For {$i \in [t]$}\label{step:DP_begin}
		\State Set $C(i, 1) = \frac{u_{1,i} - \ell_{1,i}}{2}$
		\EndFor
		\For {$j \in \{2, \dots, k\}$}
		\For {$i \in [t]$}
		\State Set $C(i,j) = \min \left\{C(i,1), \min_{i' \in [i-1]} \left\{C(i', j-1) + \frac{u_{i'+1,i} - \ell_{i'+1,i}}{2}\right\}\right\}$
		\EndFor
		\EndFor\label{step:DP_end}
		\State {\bfseries Output:} $C(t,k)$.
	\end{algorithmic}
\end{algorithm}

The algorithm takes as input the partition $\left[a_1, a_2\right), \dots, \left[a_t, a_{t+1}\right)$ of the parameter space $\cR$ and values $c_1, \dots, c_t$ such that for any interval $\left[a_i, a_{i+1}\right)$, $f_x^*(r) = c_i$ for all $r \in \left[a_i, a_{i+1}\right)$. The algorithm begins by calculating upper and lower bounds on the value of the function $f_x^*$ across various subsets of its domain. In particular, for each $i,i' \in [t]$ such that $i \leq i'$, the algorithm calculates the lower bound $\ell_{i,i'} = \min\left\{c_i, c_{i+1}, \dots, c_{i'}\right\}$ and the upper bound $u_{i,i'} = \max\left\{c_i, c_{i+1}, \dots, c_{i'}\right\}$. Algorithm~\ref{alg:DP} performs these calculations in $O(t^2)$ time.

Next, for each $i \in [t]$ and $j \in [k]$, the algorithm calculates a value $C(i,j)$ which equals the smallest $\ell_{\infty}$ norm between any piecewise constant function with $j$ pieces and the function $f_x^*$ when restricted to the interval $\left[a_1, a_{i+1}\right)$. Since $\cR = \left[a_1, a_{t+1}\right)$, we have that $C(t,k)$---the value our algorithm returns---equals $\min_{g \in \cG_k} \norm {f_x^* - g}_{\infty}$, as claimed. For all $i \in [t]$, $C(i,1) = \frac{u_{1,i} - \ell_{1,i}}{2}$ and for all $j \geq 2$, \[C(i,j) = \min \left\{C(i,1), \min_{i' \in [i-1]} \left\{C(i', j-1) + \frac{u_{i'+1,i} - \ell_{i'+1,i}}{2}\right\}\right\}.\]  Algorithm~\ref{alg:DP} performs these calculations in $O(kt^2)$ time.

\paragraph{Additional lemmas.}

\begin{lemma}\label{lem:PWC}
	Let $\cG = \left\{g_r \mid r \in \R\right\} \subseteq [0,1]^{\cX}$ be a set of functions mapping a set $\cX$ to $[0,1]$ parameterized by a single real value $r \in \R$. Suppose that every function $g_x^* \in \cG^* \subseteq [0,1]^{\R}$ is piecewise-constant with at most $j$ pieces. Then for any set $\sample = \left\{x_1, \dots, x_N\right\} \subseteq \cX$, \[\erad\left(\cG\right) = \frac{1}{N} \E_{\vec{\sigma} \sim \{-1,1\}^N} \left[\sup_{r \in \R} \sum_{i = 1}^N \sigma_i g_r\left(x_i\right)\right] \leq \sqrt{\frac{2 \ln(N(j-1) + 1)}{N}}.\]
\end{lemma}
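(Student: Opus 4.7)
The plan is to reduce the Rademacher complexity computation to a supremum over a finite set, and then apply Massart's finite lemma. The key observation is that although the supremum ranges over the uncountable set $\R$, the sample only "sees" finitely many distinct behaviors: since each dual $g_{x_i}^*$ is piecewise-constant with at most $j$ pieces, it has at most $j-1$ breakpoints in $\R$.

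First, I would collect all the breakpoints of the $N$ dual functions $g_{x_1}^*, \dots, g_{x_N}^*$. There are at most $N(j-1)$ such breakpoints in total, which partition $\R$ into at most $N(j-1)+1$ intervals. On any such interval $I$, every $g_{x_i}^*$ is constant, and hence the profile vector $\vec{v}(r) = (g_r(x_1), \dots, g_r(x_N)) = (g_{x_1}^*(r), \dots, g_{x_N}^*(r))$ does not change as $r$ varies over $I$. Therefore the set $V = \{\vec{v}(r) : r \in \R\} \subseteq [0,1]^N$ has cardinality at most $N(j-1)+1$, and
\[\erad(\cG) = \frac{1}{N} \E_{\vec{\sigma}} \left[\sup_{r \in \R} \sum_{i=1}^N \sigma_i v_i(r)\right] = \frac{1}{N} \E_{\vec{\sigma}} \left[\sup_{\vec{v} \in V} \sum_{i=1}^N \sigma_i v_i\right].\]

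Next, I would apply Massart's finite lemma, which states that for any finite set $V \subseteq \R^N$,
\[\E_{\vec{\sigma} \sim \{-1,1\}^N} \left[\sup_{\vec{v} \in V} \sum_{i=1}^N \sigma_i v_i\right] \leq \left(\max_{\vec{v} \in V} \|\vec{v}\|_2\right) \sqrt{2 \ln |V|}.\] Since every coordinate of every $\vec{v} \in V$ lies in $[0,1]$, we have $\|\vec{v}\|_2 \leq \sqrt{N}$, and $|V| \leq N(j-1)+1$. Combining these with the previous display yields
\[\erad(\cG) \leq \frac{\sqrt{N}\sqrt{2\ln(N(j-1)+1)}}{N} = \sqrt{\frac{2\ln(N(j-1)+1)}{N}},\] which is the claimed bound.

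There is no real obstacle here; the argument is almost mechanical once one notices that piecewise-constant structure of the duals collapses the supremum over $\R$ to a supremum over a set of size at most $N(j-1)+1$. The only care needed is the counting step: ensuring that breakpoints are aggregated across all $N$ duals (not per-function) so that the profile vector is genuinely constant between consecutive breakpoints, which is what makes $|V| \leq N(j-1)+1$ rather than something larger.
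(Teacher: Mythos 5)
Your proof is correct and follows essentially the same route as the paper's: both reduce the supremum over $r \in \R$ to a supremum over the finite set of profile vectors, bound its cardinality by $N(j-1)+1$ via the aggregated breakpoints of the $N$ duals, and conclude with Massart's lemma. Your write-up just makes the breakpoint-counting step more explicit and uses the general $\ell_2$-radius form of Massart rather than the $[0,1]^N$-specialized statement the paper cites.
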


\begin{proof}
	We will use Massart's lemma (Lemma~\ref{lem:massart}) to prove this lemma. Let $A \subseteq [0,1]^N$ be the following set of vectors:
	\[A = \left\{\begin{pmatrix}
	g_{r}\left(x_1\right)\\
	\vdots\\
	g_{r}\left(x_N\right)
	\end{pmatrix} : r \in \R\right\}.\] By definition of the dual class, \[A = \left\{\begin{pmatrix}
	g_{x_1}^*\left(r\right)\\
	\vdots\\
	g_{x_N}^*\left(r\right)
	\end{pmatrix} : r \in \R\right\}.\] Since each function $g_{x_i}^*$ is piecewise-constant with at most $j$ pieces, $|A| \leq N(j-1) + 1$. The lemma statement therefore follows from Massart's lemma.
\end{proof}

\begin{lemma}[\citet{Massart00:Some}]\label{lem:massart}
	Let $A \subseteq [0,1]^N$ be a finite set of vectors. Then \[\frac{1}{N} \E_{\vec{\sigma} \sim \{-1,1\}^N} \left[\sup_{\vec{a} \in A} \sum_{i = 1}^N \sigma_i a_i\right] \leq \sqrt{\frac{2 \ln |A|}{N}}.\]
\end{lemma}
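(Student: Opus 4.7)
The plan is to invoke the classical exponential-moment (Chernoff-style) argument for bounding the expected supremum of a finite collection of sub-Gaussian random variables. For each fixed $\vec{a} \in A$, define the Rademacher sum $S_{\vec{a}} := \sum_{i=1}^N \sigma_i a_i$. As a first step I would verify that $S_{\vec{a}}$ is sub-Gaussian with variance proxy $\|\vec{a}\|_2^2$: each summand $\sigma_i a_i$ takes values in $[-a_i, a_i]$ and has mean zero, so Hoeffding's lemma gives $\E[e^{t \sigma_i a_i}] \leq e^{t^2 a_i^2/2}$ for every $t > 0$, and independence of the $\sigma_i$ then yields $\E[e^{t S_{\vec{a}}}] \leq \exp(t^2 \|\vec{a}\|_2^2 / 2)$.

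Next I would apply Jensen's inequality with the convex map $x \mapsto e^{tx}$, together with a union bound inside the expectation, to obtain, for every $t > 0$,
\[\exp\!\left(t\, \E\bigl[\sup_{\vec{a} \in A} S_{\vec{a}}\bigr]\right) \leq \E\!\left[\sup_{\vec{a} \in A} e^{t S_{\vec{a}}}\right] \leq \sum_{\vec{a} \in A} \E\bigl[e^{t S_{\vec{a}}}\bigr] \leq |A|\, e^{t^2 R^2 / 2},\]
where $R := \max_{\vec{a} \in A} \|\vec{a}\|_2$. Taking logarithms of both sides and dividing by $t$ yields $\E[\sup_{\vec{a}} S_{\vec{a}}] \leq \frac{\ln |A|}{t} + \frac{t R^2}{2}$. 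Optimizing over $t$ by setting $t = \sqrt{2 \ln |A|}/R$ produces $\E[\sup_{\vec{a}} S_{\vec{a}}] \leq R\sqrt{2 \ln |A|}$.

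Finally, since $A \subseteq [0,1]^N$ we have $R = \max_{\vec{a} \in A} \|\vec{a}\|_2 \leq \sqrt{N}$, so dividing the previous bound by $N$ delivers the desired inequality $\frac{1}{N}\E[\sup_{\vec{a}} S_{\vec{a}}] \leq \sqrt{2 \ln |A| / N}$. There is no substantial obstacle here: the sub-Gaussian moment-generating bound for Rademacher sums follows immediately from Hoeffding's lemma, the symmetrization-style Jensen/union-bound chain is standard, and the only minor care point is verifying that the optimal choice of $t$ gives the clean constant $\sqrt{2}$, which is a one-line calculus exercise.
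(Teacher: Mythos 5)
Your proof is correct. The paper does not prove this lemma at all---it is quoted as a known result of \citet{Massart00:Some}---and your argument is the standard one: sub-Gaussianity of the Rademacher sums via Hoeffding's lemma, the Jensen/union-bound chain on the moment generating function, and optimization over $t$, with the final step $\max_{\vec{a} \in A}\norm{\vec{a}}_2 \leq \sqrt{N}$ using $A \subseteq [0,1]^N$. The only (trivial) care point is the degenerate case $|A| = 1$, where the optimizing $t$ is undefined but the bound holds since both sides are zero.
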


\subsubsection{Additional experiments}\label{app:experiments}

In our experiments from Section~\ref{sec:IP}, we approximated the dual functions $f_x^*$ with piecewise constant functions that have a small number of pieces --- say, $j$ pieces. We used SRM to find the value for $j$ which leads to the strongest bounds, as in Equation~\eqref{eq:estimate_gen}. In this section, we compare against another baseline where we do not use SRM, but simply set $j$ to be the maximum number of pieces we observe over our training set. Of course, this bound is much tighter than the worst-case bound by \citet{Balcan18:Learning}, the baseline in  Figures~\ref{fig:regions}-\ref{fig:arbitrary_product}. However, we still observe that for a target generalization error, the number of samples required according to our bound is up to 4.5 times smaller than the number of samples required by this baseline.

For each of the three experimental setups from  Figures~\ref{fig:regions}-\ref{fig:arbitrary_product}, we draw $M = 6000$ IPs
$x_1, \dots, x_M$ from the distribution $\dist$. We compute the piecewise-constant dual functions $f_{x_1}^*, \dots, f_{x_M}^*$ and find the maximum number of pieces $j^*$ across these $M$ functions. We summarize our findings below:
\begin{itemize}
	\item When using the CATS ``arbitrary'' generator with $\score_1 = \score_L$ and $\score_2 = \score_S$, the maximum number of pieces is $j^* = 2214$.
		\item When using the CATS ``arbitrary'' generator with $\score_1 = \score_P$ and $\score_2 = \score_A$, the maximum number of pieces is $j^* = 296$.
				\item When using the CATS ``regions'' generator with $\score_1 = \score_L$ and $\score_2 = \score_S$, the maximum number of pieces is $j^* = 2224$.
\end{itemize}

Since there is a piecewise-constant function $g_{j^*,x_i}^*$ with at most $j^*$ pieces that exactly equals each dual function $f_{x_i}^*$, a Hoeffding bound guarantees that with probability 0.995, $\E_{x \sim \dist}\left[\norm{f_x^* - g_{j^*,x}^*}_{\infty}\right] \leq 0.023$. Therefore, from Theorem~\ref{thm:rad_gen}, Theorem~\ref{thm:rad}, Remark~\ref{remark:exp}, and Theorem~\ref{lem:IP_PWC}, we know that with probability $0.99$ over the draw of $N$ samples $\sample \sim \dist^N$, for all $r \in [0,1]$, \begin{equation}\left|\frac{1}{N} \sum_{x \in \sample} f_{r}(x) - \E_{x \sim \dist} \left[f_{r}(x)\right]\right| \leq 2\left(0.023 + \sqrt{\frac{2 \ln(N(j^*-1) + 1)}{N}}\right) + 3\sqrt{\frac{1}{2N}\ln \frac{2}{0.005}}.\label{eq:baseline}\end{equation}
\begin{figure}[t]
	\centering
	\subfigure[Results on the CATS ``regions'' generator with $\score_1 = \score_L$ and $\score_2 = \score_S$.\label{fig:regions_app}]{\includegraphics[width=0.47\textwidth]{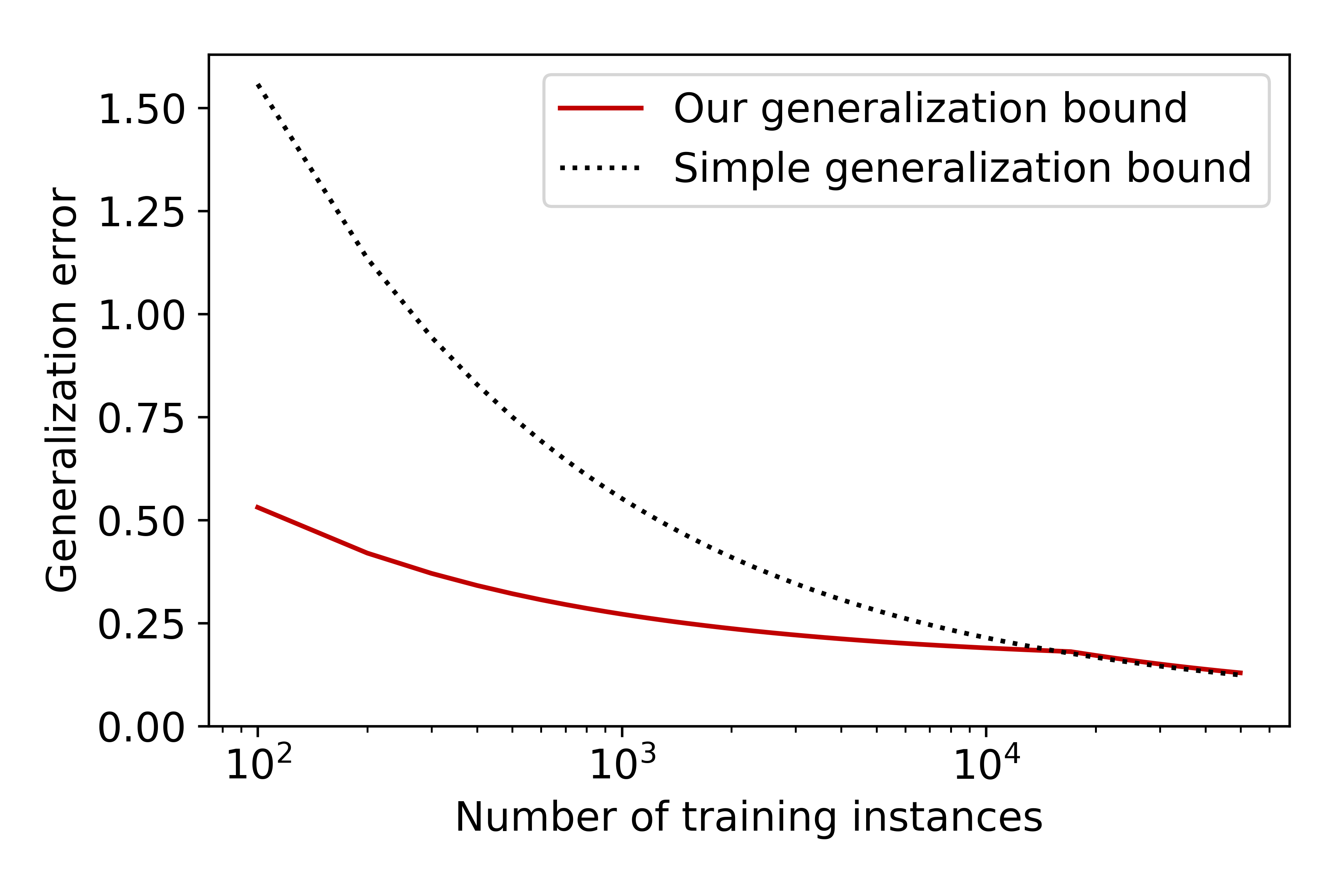}}\qquad
	\subfigure[Results on the CATS ``arbitrary'' generator with $\score_1 = \score_L$ and $\score_2 = \score_S$.\label{fig:arbitrary_app}]{\includegraphics[width=0.47\textwidth]{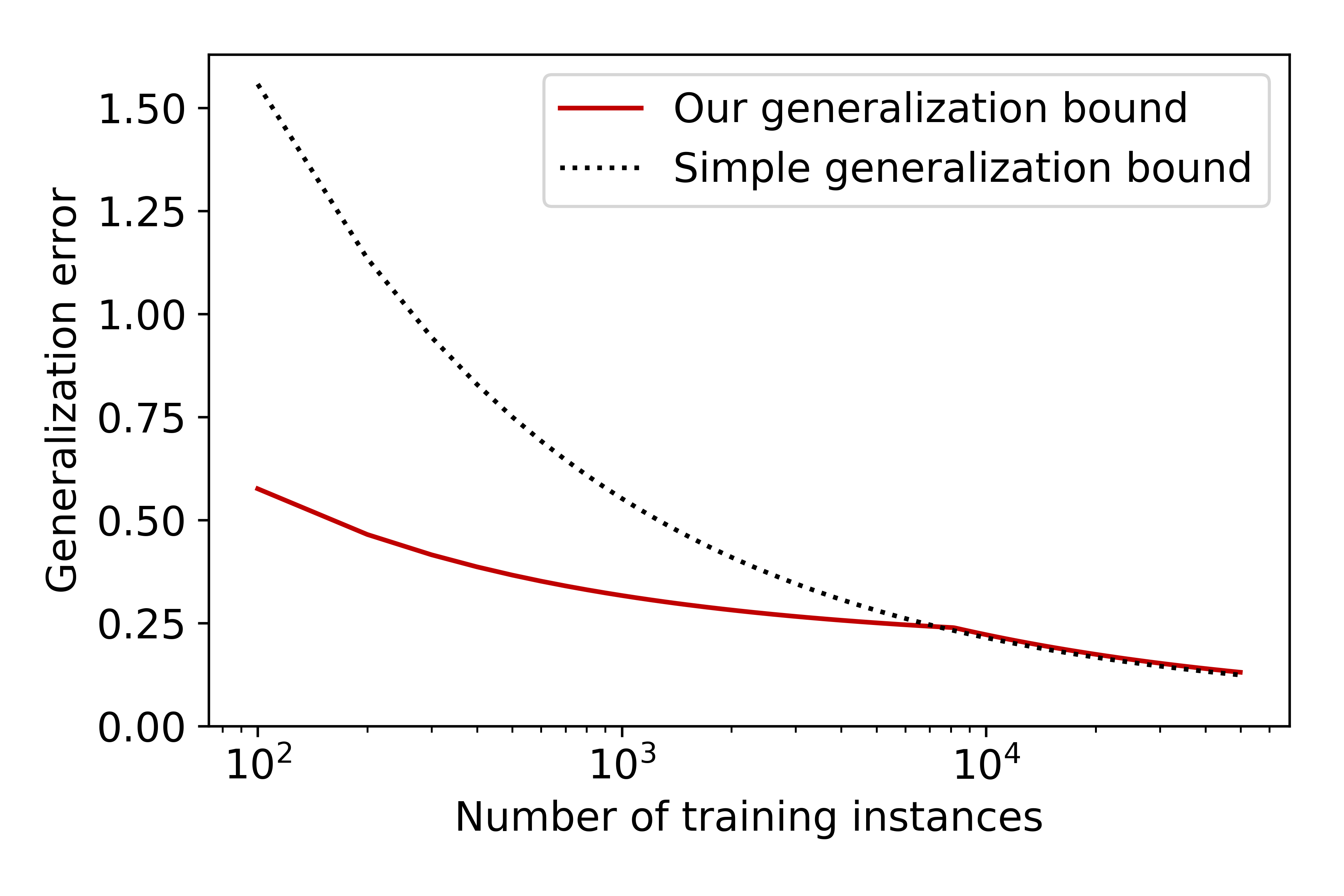}}
	\subfigure[{Results on the CATS ``arbitrary'' generator with $\score_1 = \score_P$ and $\score_2 = \score_A$.\label{fig:arbitrary_product_app}}]{\includegraphics[width=0.47\textwidth]{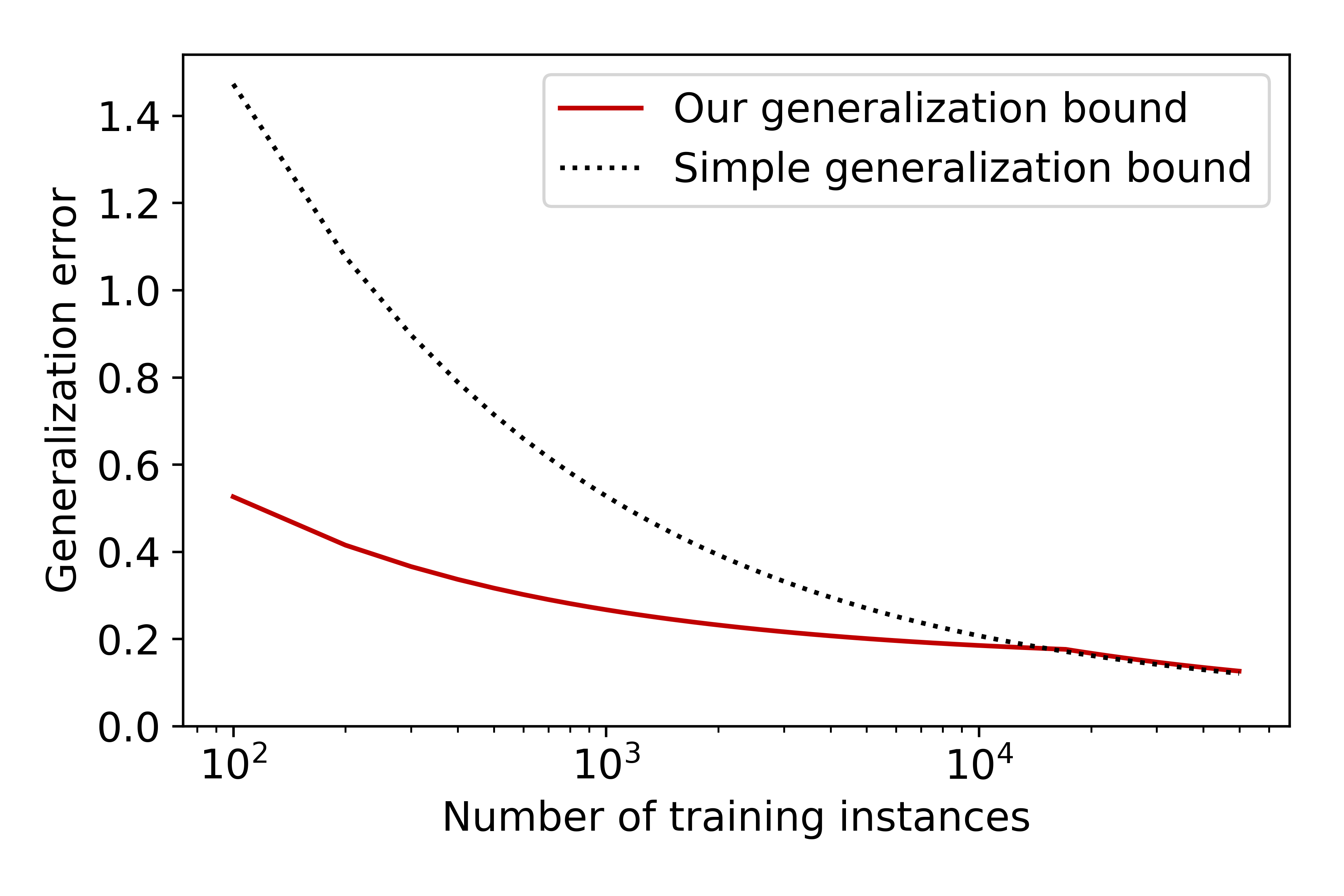}}
	\caption{Experiments where we compare our generalization bound to a simple baseline described in Section~\ref{app:experiments}. The red solid line is our generalization bound: the minimum of Equations~\eqref{eq:WC_gen} and \eqref{eq:estimate_gen} as a function of the number of training examples $N$. The black dotted line is the simple baseline from Equation~\eqref{eq:baseline}.}
	\label{fig:experiments_app}
\end{figure}
This is the black dotted line in Figure~\ref{fig:experiments_app}. The red solid line is our generalization bound, as we described in Section~\ref{sec:IP}: the minimum of Equations~\eqref{eq:WC_gen} and \eqref{eq:estimate_gen}.

In Figure~\ref{fig:experiments_app}, we see that our bound significantly beats this simple baseline up until the point there are approximately 10,000 training instances, at which point they are approximately equal. These experiments demonstrate that for a target generalization error, the number of samples required according to our bound is up to 4.5 times smaller than the number of samples required by this baseline. In Figure~\ref{fig:regions_app}, to get a generalization error of 0.25, 1500 samples are sufficient our approach and 6700 samples are sufficient using the baseline, so we see a 4.6x improvement. In Figure~\ref{fig:arbitrary_app}, to get a generalization error of 0.3, 1400 samples are sufficient our approach and 4300 samples are sufficient using the baseline, so we see a 3.07x improvement. Finally, in Figure~\ref{fig:arbitrary_product_app}, to get a generalization error of 0.25, 1400 samples are sufficient our approach and 6100 samples are sufficient using the baseline, so we see a 4.35x improvement.
\subsection{Proofs about Rademacher complexity lower bound (Section~\ref{sec:not_learnable})}\label{app:not_learnable}
\begin{theorem}[H\"{o}lder's inequality]\label{thm:holder}
	Let $p_0$ and $p_1$ be two values in $[1, \infty]$ such that $\frac{1}{p_0} + \frac{1}{p_1} = 1$. Then for all functions $u$ and $w$, $\norm{uw}_1 \leq \norm{u}_{p_0} \norm{w}_{p_1}$.
\end{theorem}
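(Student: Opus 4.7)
The plan is to reduce the general case to pointwise Young's inequality after a normalization, handling the endpoint and degenerate cases separately. First I would dispose of the trivial situations: if $\norm{u}_{p_0} = 0$ or $\norm{w}_{p_1} = 0$, then $uw$ vanishes almost everywhere and both sides are zero; if either norm is infinite and the other is nonzero, the right-hand side is $+\infty$ and the inequality holds vacuously. I would also dispatch the endpoint pair $p_0 = 1, p_1 = \infty$ (and its symmetric partner) directly: pointwise $|u(x)w(x)| \leq |u(x)|\, \norm{w}_{\infty}$, and integrating gives $\norm{uw}_1 \leq \norm{u}_1 \norm{w}_{\infty}$.

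The core case is $1 < p_0, p_1 < \infty$ with $\tfrac{1}{p_0} + \tfrac{1}{p_1} = 1$. The main tool is Young's inequality: for any $a, b \geq 0$, $ab \leq \tfrac{a^{p_0}}{p_0} + \tfrac{b^{p_1}}{p_1}$. This follows in one line from the concavity of $\log$ applied with weights $\tfrac{1}{p_0}, \tfrac{1}{p_1}$ to the points $a^{p_0}, b^{p_1}$, i.e., from the weighted AM--GM inequality.

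Next I would normalize by setting $\tilde u = u / \norm{u}_{p_0}$ and $\tilde w = w / \norm{w}_{p_1}$, so that $\norm{\tilde u}_{p_0} = \norm{\tilde w}_{p_1} = 1$. Applying Young's inequality pointwise to $a = |\tilde u(x)|$ and $b = |\tilde w(x)|$ and then integrating over the underlying measure space yields
\[
\int |\tilde u \, \tilde w| \;\leq\; \frac{1}{p_0} \int |\tilde u|^{p_0} + \frac{1}{p_1} \int |\tilde w|^{p_1} \;=\; \frac{1}{p_0} + \frac{1}{p_1} \;=\; 1.
\]
Multiplying through by $\norm{u}_{p_0} \norm{w}_{p_1}$ recovers the claim $\norm{uw}_1 \leq \norm{u}_{p_0} \norm{w}_{p_1}$.

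There is no real obstacle here: the proof is entirely classical, and the only ``content'' lies in Young's inequality, which reduces to convexity of $\exp$. The main care points are bookkeeping---verifying that the normalization is legitimate when the norms are finite and positive, and separating out the endpoint cases before invoking Young's inequality, which requires $p_0, p_1 \in (1, \infty)$.
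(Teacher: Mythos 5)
Your proof is correct: the normalization-plus-pointwise-Young argument, with the degenerate cases ($\norm{u}_{p_0}=0$ or a norm infinite) and the endpoint pair $p_0=1,\,p_1=\infty$ handled separately, is the standard and complete proof of H\"older's inequality. The paper states this result only as classical background (Theorem~\ref{thm:holder}) and gives no proof of its own, so there is nothing to compare against; your write-up fills that gap correctly.
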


\begin{theorem}[Interpolation]\label{thm:interpolation}
	Let $p$ and $q$ be two values in $(0, \infty]$ and let $\theta$ be a value in $(0,1)$. Let $p_{\theta}$ be defined such that $\frac{1}{p_{\theta}} = \frac{\theta}{p_1} + \frac{1 - \theta}{p_0}$. Then for all functions $u$, $\norm{f}_{p_{\theta}} \leq \norm{f}_{p_1}^\theta \norm{f}_{p_0}^{1 - \theta}$.
\end{theorem}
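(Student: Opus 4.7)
The plan is to derive this log-convexity inequality as a direct consequence of H\"older's inequality (Theorem~\ref{thm:holder}), which is already stated in the excerpt. The key observation is that we can split the exponent $p_\theta$ into a convex combination in such a way that, after multiplying $|f|^{p_\theta}$ as a product of two factors, H\"older's inequality with appropriately chosen conjugate exponents produces exactly the two $L^p$ norms on the right-hand side.

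Concretely, I would first treat the finite case $p_0, p_1 \in [1, \infty)$. Write
\[
|f|^{p_\theta} \;=\; |f|^{\theta p_\theta}\cdot |f|^{(1-\theta) p_\theta},
\]
and apply H\"older with conjugate exponents $\alpha = p_1/(\theta p_\theta)$ and $\beta = p_0/((1-\theta)p_\theta)$. The defining relation $1/p_\theta = \theta/p_1 + (1-\theta)/p_0$ is exactly what ensures $1/\alpha + 1/\beta = 1$, so the application of Theorem~\ref{thm:holder} is legal. The H\"older step yields
\[
\int |f|^{p_\theta} \;\leq\; \left(\int |f|^{p_1}\right)^{\theta p_\theta/p_1}\!\left(\int |f|^{p_0}\right)^{(1-\theta)p_\theta/p_0},
\]
and taking $p_\theta$-th roots gives the desired inequality $\|f\|_{p_\theta} \leq \|f\|_{p_1}^\theta \|f\|_{p_0}^{1-\theta}$. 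This is really the entire content of the argument; the only ``computation'' is verifying that the chosen $\alpha,\beta$ are conjugate, which is algebraically immediate from the definition of $p_\theta$.

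Second, I would handle the boundary case in which one of the two exponents is $\infty$, say $p_1 = \infty$ (the case $p_0=\infty$ is symmetric, and if both are $\infty$ there is nothing to prove). Here $p_\theta = p_0/(1-\theta)$, and rather than invoking H\"older one simply pulls the essential supremum out:
\[
\int |f|^{p_\theta} \;=\; \int |f|^{\theta p_\theta}\,|f|^{(1-\theta)p_\theta}\;\leq\; \|f\|_\infty^{\theta p_\theta}\int |f|^{p_0},
\]
and taking $p_\theta$-th roots (using $p_0/p_\theta = 1-\theta$) gives $\|f\|_{p_\theta}\leq \|f\|_\infty^\theta \|f\|_{p_0}^{1-\theta}$, matching the stated inequality with the convention $\|f\|_{p_1}=\|f\|_\infty$.

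The main obstacle is essentially just the bookkeeping: one has to pick the conjugate pair $(\alpha,\beta)$ correctly so that the exponents of $|f|$ after applying H\"older collapse cleanly to $p_0$ and $p_1$. I would also briefly note degenerate cases (e.g.\ one of $\|f\|_{p_0},\|f\|_{p_1}$ is zero or infinite) to confirm the inequality trivially holds, so that the statement is valid for all functions $f$ for which the right-hand side is well defined.
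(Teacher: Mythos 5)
Your proof is correct. Note that the paper itself offers no proof of Theorem~\ref{thm:interpolation}: it is stated as standard background (alongside H\"older's inequality) and is invoked only once, in Lemma~\ref{lem:approximates}, in the boundary case $p_0 = 2$, $p_1 = \infty$, $\theta = 1 - \frac{2}{p}$. Your derivation --- splitting $|f|^{p_\theta} = |f|^{\theta p_\theta}\,|f|^{(1-\theta)p_\theta}$ and applying H\"older with the conjugate pair $\alpha = p_1/(\theta p_\theta)$, $\beta = p_0/((1-\theta)p_\theta)$, whose conjugacy is exactly the defining relation for $p_\theta$ --- is the standard textbook argument, and your separate treatment of the $p_1 = \infty$ case (pulling out the essential supremum) is the version the paper actually uses. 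One small remark: since $1/\alpha + 1/\beta = 1$ with both terms positive forces $\alpha, \beta \geq 1$, your H\"older step is legitimate for all $p_0, p_1 \in (0,\infty)$, not just $[1,\infty)$, so your restriction to the ``finite case $p_0, p_1 \in [1,\infty)$'' is unnecessarily cautious. You also silently (and correctly) repaired the typos in the statement, which introduces $p$ and $q$ but then uses $p_0$ and $p_1$, and quantifies over functions $u$ but bounds norms of $f$.
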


\begin{lemma}\label{lem:approximates}
	For any $\gamma \in \left(0, \frac{1}{4}\right)$ and $p \in [1, \infty)$, let $\cF$ and $\cG$ be the function classes defined in Theorem~\ref{thm:not_learnable}.
	The dual class $\cG^*$ $(\gamma, p)$-approximates the dual class $\cF^*$.
\end{lemma}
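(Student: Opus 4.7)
My plan is to directly compute (or bound) $\lVert f_x^* - g_x^* \rVert_p$ for an arbitrary $x \in \cX = [\gamma^{-p}/2, \infty)$ and show it is at most $\gamma$. The starting observation is that
\[
f_x^*(r) - g_x^*(r) = \tfrac{1}{2}(1 + \cos(rx)) - \tfrac{1}{2} = \tfrac{1}{2}\cos(rx),
\]
so the whole analysis reduces to bounding $L^p$-norms of $\tfrac{1}{2}\cos(rx)$ over the interval $\cR = (0, \gamma^p]$.

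First I would handle the $L^2$-norm exactly, since it is the cleanest case. Using the identity $\cos^2(u) = \tfrac{1}{2}(1 + \cos(2u))$,
\[
\lVert f_x^* - g_x^* \rVert_2^2 = \tfrac{1}{4}\int_0^{\gamma^p} \cos^2(rx)\, dr = \tfrac{1}{8}\Bigl(\gamma^p + \tfrac{\sin(2\gamma^p x)}{2x}\Bigr) \le \tfrac{1}{8}\Bigl(\gamma^p + \tfrac{1}{2x}\Bigr),
\]
which matches the sketch's bound $\lVert f_x^* - g_x^* \rVert_2 \le \tfrac{1}{4}\sqrt{2\gamma^p + 1/x}$. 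The whole point of the constraint $x \ge \gamma^{-p}/2$ is precisely that $1/x \le 2\gamma^p$, so this collapses to
\[
\lVert f_x^* - g_x^* \rVert_2 \le \tfrac{1}{4}\sqrt{4\gamma^p} = \tfrac{\gamma^{p/2}}{2}.
\]
In the special case $p = 2$ this is already $\gamma/2 \le \gamma$, proving the lemma on the nose.

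For $p \ne 2$, I would treat the two sides separately. For $1 \le p < 2$, H\"older's inequality (Theorem~\ref{thm:holder}) applied to $|f|^p \cdot 1$ with exponents $2/p$ and $2/(2-p)$ gives the standard monotonicity on a finite-measure space: $\lVert h \rVert_p \le \lVert h \rVert_2 \cdot |\cR|^{1/p - 1/2}$. Plugging in $|\cR| = \gamma^p$,
\[
\lVert f_x^* - g_x^* \rVert_p \le \tfrac{\gamma^{p/2}}{2} \cdot (\gamma^p)^{1/p - 1/2} = \tfrac{\gamma^{p/2}}{2} \cdot \gamma^{1 - p/2} = \tfrac{\gamma}{2} \le \gamma.
\]
For $p > 2$, H\"older goes the wrong way, so I would invoke log-convexity of $L^p$-norms (Theorem~\ref{thm:interpolation}) with endpoints $p_0 = 2$, $p_1 = \infty$, and $\theta = 1 - 2/p$, together with the trivial pointwise bound $\lVert f_x^* - g_x^* \rVert_\infty \le 1/2$:
\[
\lVert f_x^* - g_x^* \rVert_p \le \lVert f_x^* - g_x^* \rVert_\infty^{1 - 2/p} \lVert f_x^* - g_x^* \rVert_2^{2/p} \le \bigl(\tfrac{1}{2}\bigr)^{1 - 2/p}\bigl(\tfrac{\gamma^{p/2}}{2}\bigr)^{2/p} = \tfrac{\gamma}{2} \le \gamma.
\]

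No single step is a real obstacle; the conceptual content is entirely in the choice of $\cR$ and $\cX$ made back in Theorem~\ref{thm:not_learnable}, which is calibrated so that both $|\cR|^{1/p-1/2}$ (for $p < 2$) and $\lVert \cdot \rVert_\infty^{1-2/p}$ (for $p > 2$) combine with $\lVert \cdot \rVert_2 \le \gamma^{p/2}/2$ to produce exactly $\gamma/2$. The mildly tricky step is writing H\"older cleanly enough to extract the factor $|\cR|^{1/p - 1/2}$ without confusion about exponents; after that, the bookkeeping is a one-line substitution in each of the two cases.
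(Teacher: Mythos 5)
Your proposal is correct and follows essentially the same route as the paper's proof: an exact computation of the $L^2$-norm of $\tfrac{1}{2}\cos(rx)$ over $(0,\gamma^p]$, H\"older's inequality for $1 \le p < 2$, and log-convexity interpolation against the $L^\infty$-norm for $p > 2$. The only (cosmetic) difference is that you collapse the $L^2$ bound to $\gamma^{p/2}/2$ up front using $1/x \le 2\gamma^p$, whereas the paper carries the $2\gamma^p + 1/x$ term into each case before applying that constraint.
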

\begin{proof} For ease of notation, let $t = \gamma^p$, $a = \frac{1}{2\gamma^p}$, $\cR = (0,t]$, and $\cX = \left[\frac{1}{2\gamma^p}, \infty\right)$. Throughout this proof, we will use the following inequality: \begin{equation}\norm{f_x^* - g_x^*}_2 = \sqrt{\int_0^t \left(f_x^*(r) - g_x^*(r)\right)^2 \, dr}
	= \sqrt{\int_0^t \left(\frac{1}{2}\cos(rx)\right)^2 \, dr} = \frac{1}{4} \sqrt{2 t + \frac{\sin(2 t x)}{x}} \leq \frac{1}{4} \sqrt{2 t + \frac{1}{x}}\label{eq:approximates}.\end{equation}
	
	First, suppose $p = 2$. Since $t = \gamma^2$ and $\frac{1}{x} \leq 2\gamma^2$, Equation~\eqref{eq:approximates} implies that $\norm{f_x^* - g_x^*}_2 \leq \frac{1}{4} \sqrt{4\gamma^2} < \gamma$.
	
	Next, suppose $p < 2$. We know that \begin{equation}\norm{\left(f_x^* - g_x^*\right)^p}_1 = \int_0^t \left|\left(f_x^*(r) - g_x^*(r)\right)^p\right| \, dr = \int_0^t \left|f_x^*(r) - g_x^*(r)\right|^p \, dr = \norm{f_x^* - g_x^*}_p^p.\label{eq:l1}\end{equation} From Equation~\eqref{eq:l1} and H\"older's inequality (Theorem~\ref{thm:holder}) with $u = \left(f_x^* - g_x^*\right)^p$, $w$ the constant function $w: r \mapsto 1$, $p_0 = \frac{2}{p}$, and $p_1 = \frac{2}{2-p}$, we have that
	\begin{align*}
	\norm{f_x^* - g_x^*}_p^p &= \norm{\left(f_x^* - g_x^*\right)^p}_1\\
	&\leq \norm{w}_{\frac{2}{2-p}}\norm{\left(f_x^* - g_x^*\right)^p}_{\frac{2}{p}}\\
	&= \left(\int_0^t \, dr\right)^{\frac{2-p}{2}}\norm{\left(f_x^* - g_x^*\right)^p}_{\frac{2}{p}}\\
	&= t^{\frac{2-p}{2}}\norm{\left(f_x^* - g_x^*\right)^p}_{\frac{2}{p}}\\
	&= t^{\frac{2-p}{2}}\left(\int_0^t\left(f_x^*(r) - g_x^*(r)\right)^2 \, dr\right)^{\frac{p}{2}}\\
	&= t^{\frac{2-p}{2}}\norm{f_x^* - g_x^*}_2^p.
	\end{align*} Therefore,
	\begin{align*}\norm{f_x^* - g_x^*}_p &\leq t^{\frac{1}{p} - \frac{1}{2}} \norm{f_x^* - g_x^*}_2\\
	&\leq \frac{t^{\frac{1}{p} - \frac{1}{2}}}{4} \sqrt{2 t + \frac{1}{x}} &(\text{Equation~\eqref{eq:approximates}})\\
	&= \frac{t^{\frac{1}{p}}}{4} \sqrt{2 + \frac{1}{xt}}\\
	&= \frac{\gamma}{4} \sqrt{2 + \frac{1}{x\gamma^p}} &\left(t = \gamma^p\right)\\
	&< \gamma, &\left(x \geq \frac{1}{2\gamma^p}\right)\end{align*} 
	
	Finally, suppose $p > 2$. Let $\theta =1 - \frac{2}{p}$, $p_0 = 2$, and $p_1 = \infty$. By Theorem~\ref{thm:interpolation}, \begin{align*}\norm{f_x^* - g_x^*}_p &\leq \norm{f_x^* - g_x^*}_2^{1 - \theta}\\
	&= \norm{f_x^* - g_x^*}_2^{\frac{2}{p}}\\
	&\leq \sqrt[p]{\frac{t}{8} + \frac{1}{16x}}&(\text{Equation~\eqref{eq:approximates}})\\
	&= \sqrt[p]{\frac{\gamma^p}{8} + \frac{1}{16x}}&\left(t = \gamma^p\right)\\
	&\leq \sqrt[p]{\frac{\gamma^p}{4}}&\left(x \geq \frac{1}{2\gamma^p}\right)\\
	&< \gamma.\end{align*} Therefore, for all $p \in [1, \infty)$ and all $x \in \cX$, $\norm{f_x^* - g_x^*}_p \leq \gamma$, so the dual class $\cG^*$ $(\gamma, p)$-approximates the dual class $\cF^*$.
\end{proof}

\begin{lemma}\label{lem:not_learnable}
	For any $\gamma \in \left(0, \frac{1}{4}\right)$ and $p \in [1, \infty)$, let $\cF = \left\{ f_r \mid r \in \left(0, \gamma^p\right]\right\}$ be a class of functions with domain $\left[\frac{1}{2\gamma^p}, \infty\right)$ such that for all $r \in \left(0, \gamma^p\right]$ and $x \in \left[\frac{1}{2\gamma^p}, \infty\right)$, $f_r(x) = \frac{1}{2}(1 + \cos(rx))$. 
	For every $N \geq 1$, $\wcrad_N(\ell \circ \cF) = \frac{1}{2}$.
\end{lemma}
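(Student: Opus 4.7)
The plan is to establish matching bounds $\wcrad_N(\ell \circ \cF) \leq \frac{1}{2}$ and $\wcrad_N(\ell \circ \cF) \geq \frac{1}{2}$. For the upper bound, let $\sample$ be any sample of size $N$. Since $|f_r(x_i) - y_i| \in [0,1]$, we have $\sigma_i |f_r(x_i) - y_i| \leq \mathbb{1}[\sigma_i = 1]$; summing over $i$ and taking the supremum over $r$ yields $\sup_{r \in (0,\gamma^p]} \sum_{i=1}^N \sigma_i |f_r(x_i) - y_i| \leq |\{i : \sigma_i = 1\}|$, whose expectation over $\vec{\sigma} \sim \{-1,1\}^N$ is $N/2$. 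Hence $\erad(\ell \circ \cF) \leq \frac{1}{2}$ for every sample, so $\wcrad_N(\ell \circ \cF) \leq \frac{1}{2}$.

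For the lower bound, I fix an arbitrary $\epsilon > 0$ and exhibit a sample with $\erad(\ell \circ \cF) \geq \frac{1}{2} - \epsilon$. I set every label $y_i = 0$, so $|f_r(x_i) - y_i| = \frac{1}{2}(1 + \cos(r x_i))$. Factoring out the constant-in-$r$ piece and using $\E_{\vec{\sigma}}[\sum_i \sigma_i] = 0$ gives
\[\erad(\ell \circ \cF) = \frac{1}{2N} \E_{\vec{\sigma}}\!\left[\sup_{r \in (0,\gamma^p]} \sum_{i=1}^N \sigma_i \cos(r x_i)\right].\]
It therefore suffices to pick $x_1, \dots, x_N \in [1/(2\gamma^p), \infty)$ so that for every $\vec{\sigma} \in \{-1,1\}^N$, the inner supremum is at least $N(1 - 2\epsilon)$; taking expectations then delivers the desired inequality.

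To produce such $x_i$'s, I would set $x_i = M \alpha_i$, where $\alpha_1, \dots, \alpha_N$ are rationally independent reals (e.g., $\alpha_i = \sqrt{p_i}$ for distinct primes $p_i$) and $M$ is large enough that each $x_i \geq 1/(2\gamma^p)$. By Kronecker's simultaneous approximation theorem (quantitative Weyl equidistribution), as $M$ grows the orbit $\{(r x_1 \bmod 2\pi, \dots, r x_N \bmod 2\pi) : r \in (0, \gamma^p]\}$ becomes $\delta$-dense in the torus $[0, 2\pi)^N$ for any prescribed $\delta > 0$. Consequently, for each $\vec{\sigma}$ I can select $r_{\vec{\sigma}} \in (0, \gamma^p]$ with $r_{\vec{\sigma}} x_i \bmod 2\pi$ within $\delta$ of $0$ when $\sigma_i = 1$ and within $\delta$ of $\pi$ when $\sigma_i = -1$; a Taylor expansion of cosine then gives $\sigma_i \cos(r_{\vec{\sigma}} x_i) \geq 1 - \delta^2/2$, and summing over $i$ while taking $\delta$ small enough that $\delta^2 \leq 4\epsilon$ yields the needed bound. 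The main obstacle is exactly this equidistribution step: once the $x_i$'s are fixed, the orbit is a one-dimensional curve in an $N$-dimensional torus and is never literally dense. The workaround is to exploit the unboundedness of $\cX$ — rescaling by a large factor $M$ forces the curve to wrap around the torus arbitrarily many times, which yields $\delta$-density with $\delta$ as small as we wish. Combined with the upper bound, this proves $\wcrad_N(\ell \circ \cF) = \frac{1}{2}$.
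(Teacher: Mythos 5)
Your proof is correct, and your lower-bound construction is genuinely different from the paper's. The upper bound ($\wcrad_N(\ell \circ \cF) \leq \frac{1}{2}$ via $\sigma_i\left|f_r\left(x_i\right)-y_i\right| \leq \mathbf{1}_{\{\sigma_i = 1\}}$) coincides with the paper's Claim~\ref{claim:rad_upper}. For the lower bound, the paper takes the classical ``infinite VC dimension of $\sign(\sin(rx))$'' route: it places the points on a geometric sequence $x_i = \alpha^{-i}/\left(2\gamma^p\right)$ with $\alpha$ a power of $\frac{1}{2}$, and for each sign pattern writes down an explicit parameter $r_0 = 2\pi\gamma^p\left(\sum_j \alpha^j b_j + \alpha^{N+1}\right)$ whose ``digits'' encode the pattern; the terms with $j < i$ then vanish modulo $2\pi$ and the tail is controlled by taking $\alpha$ small, yielding $\erad(\ell\circ\cF) \geq c$ for every $c < \frac{1}{2}$. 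You instead take $x_i = M\alpha_i$ with rationally independent frequencies and invoke equidistribution of the irrational linear flow on the torus to find, for each $\vec{\sigma}$, a single $r_{\vec{\sigma}}$ steering every phase $r_{\vec{\sigma}}x_i$ near $0$ or $\pi$ as needed. Your reduction $\erad(\ell\circ\cF) = \frac{1}{2N}\E_{\vec{\sigma}}\bigl[\sup_r \sum_i \sigma_i\cos\left(rx_i\right)\bigr]$ is right, and you correctly identified and resolved the one real obstacle: a bounded parameter range only traces a finite orbit segment, so you need the \emph{quantitative} statement that some fixed time horizon $T_{\delta}$ suffices for $\delta$-density (this follows from minimality/unique ergodicity of the flow, plus compactness to get a uniform $T_\delta$), and then the unboundedness of $\cX$ lets you absorb $T_\delta$ into the rescaling $M$. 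The trade-off: the paper's argument is entirely elementary and fully explicit (a closed-form $r_0$ for each $\vec{\sigma}$), whereas yours leans on a nontrivial ergodic-theoretic input but is conceptually cleaner and makes transparent that the only structural features being exploited are the bounded parameter range versus the unbounded domain.
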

\begin{proof}
	This proof is similar to the proof that the VC-dimension of the function class \[\left\{x \mapsto \sign(\sin(rx)) \mid r \in \R\right\} \subseteq \{-1,1\}^{\R}\] is infinite (see, for example, Lemma 7.2 in the textbook by \citet{Anthony09:Neural}).
	To prove this lemma, we will show that for every $c \in (0, 1/2)$, $\wcrad_N(\ell \circ \cF) \geq c$ (Claim~\ref{claim:rad_lower}). We also show that $\wcrad_N(\ell \circ \cF) \leq \frac{1}{2}$ (Claim~\ref{claim:rad_upper}). Therefore, the lemma statement follows.
	
	\begin{claim}\label{claim:rad_lower}
		For every $c \in (0, 1/2)$, $\wcrad_N(\ell \circ \cF) \geq c$.
	\end{claim}
	\begin{proof}[Proof of Claim~\ref{claim:rad_lower}] Let $N$ be an arbitrary positive integer. We begin by defining several variables that we will use throughout this proof. Let $\cR = \left(0, \gamma^p\right]$ and let $\alpha$ be any positive power of $\frac{1}{2}$ smaller than $\min\left\{\frac{1}{2\pi + 1}, \frac{\arccos(2c)}{\pi + \arccos(2c)}\right\}$. Since $2c \in (0, 1)$, $\frac{\arccos(2c)}{\pi + \arccos(2c)}$ is well-defined. Also, since $\alpha \leq \frac{\arccos(2c)}{\pi + \arccos(2c)}$, we have that $\frac{\pi\alpha}{1 - \alpha} \leq \arccos(2c) < \frac{\pi}{2}$. Finally, since the function $\cos$ is decreasing on the interval $[0, \pi/2]$, we have that $\frac{1}{2}\cos \frac{\pi \alpha}{1 - \alpha} \geq c$. Let $x_i = \frac{\alpha^{-i}}{2 \gamma^p}$ and $y_i = 0$ for $i \in [N]$. Since $\alpha < 1$, we have that $x_i \geq \frac{1}{2 \gamma^p}$, so each $x_i$ is an element of the domain $\left[\frac{1}{2\gamma^p}, \infty\right)$ of the functions in $\cF$.
		
		We will show that for every assignment of the variables $\sigma_1, \dots, \sigma_N \in \{-1,1\}$, there exists a parameter $r_0 \in \left(0, \gamma^p\right]$ such that \[\frac{1}{N} \sup_{r \in \left(0, \gamma^p\right]}\sum_{i = 1}^N \sigma_i f_r\left(x_i\right) \geq \frac{1}{N} \sum_{i = 1}^N \sigma_i f_{r_0}\left(x_i\right)= \frac{1}{2N} \sum_{i = 1}^N \sigma_i \left(1 + \cos\left(r_0x_i\right)\right)\geq c + \frac{1}{2} \sum_{i = 1}^N \sigma_i.\] This means that when $\sample = \left\{\left(x_1, y_1\right), \dots, \left(x_N, y_N\right)\right\}$, \begin{align*}\wcrad_N(\ell \circ \cF) &\geq \erad(\ell \circ \cF)\\
		&= \frac{1}{N} \E_{\vec{\sigma}}\left[\sup_{r \in \cR}\sum_{i = 1}^N \sigma_i \left|f_r\left(x_i\right) - y_i\right|\right]\\
		&= \frac{1}{N} \E_{\vec{\sigma}}\left[\sup_{r \in \cR}\sum_{i = 1}^N \sigma_i \left|f_r\left(x_i\right)\right|\right] &\left(y_i = 0\right)\\
		&= \frac{1}{N} \E_{\vec{\sigma}}\left[\sup_{r \in \cR}\sum_{i = 1}^N \sigma_i f_r\left(x_i\right)\right] &\left(f_r\left(x_i\right) \geq 0\right)\\
		&\geq c + \frac{1}{2} \E_{\vec{\sigma}}\left[\sum_{i = 1}^N \sigma_i\right]\\
		&= c.\end{align*}
		
		To this end, given an assignment of the variables $\sigma_1, \dots, \sigma_N \in \{-1,1\}$, let $(b_1, \dots, b_N) \in \{0,1\}^N$ be defined such that \[b_i = \begin{cases} 0 &\text{if } \sigma_i = 1\\
		1 &\text{otherwise} \end{cases}\] and let \[r_0 = 2\pi\gamma^p \left(\sum_{j = 1}^N \alpha^jb_j + \alpha^{N+1}\right).\] Since $0 < r_0 < 2\pi\gamma^p \sum_{j = 1}^{\infty} \alpha^j = \frac{2\pi\gamma^p \alpha}{1 - \alpha} \leq \gamma^p$, $r_0$ is an element of the parameter space $\left(0, \gamma^p\right]$. The inequality $\frac{2\pi\gamma^p \alpha}{1 - \alpha} \leq \gamma^p$ holds because $\alpha \leq \frac{1}{2\pi + 1}$, so $\frac{2\pi\alpha}{1 - \alpha} \leq 1$.
		
		Next, we evaluate $f_{r_0}(x_i) = \frac{1}{2}\left(1 + \cos(r_0x_i)\right)$:
		\begin{align}
		\frac{1}{2}\left(1 + \cos(r_0x_i)\right) &= \frac{1}{2} + \frac{1}{2}\cos\left(2\pi\gamma^p \left(\sum_{j = 1}^N \alpha^jb_j + \alpha^{N+1}\right) \frac{\alpha^{-i}}{2 \gamma^p}\right)\nonumber\\
		&= \frac{1}{2} + \frac{1}{2}\cos\left(\pi\left(\sum_{j = 1}^N \alpha^jb_j + \alpha^{N+1}\right) \alpha^{-i}\right)\nonumber\\
		&= \frac{1}{2} + \frac{1}{2}\cos\left(\sum_{j = 1}^{i-1} \alpha^{j-i}\pi b_j + \pi b_i + \sum_{j = i+1}^N \alpha^{j-i}\pi b_j + \alpha^{N+1-i}\pi\right) \nonumber\\
		&= \frac{1}{2} + \frac{1}{2}\cos\left(\pi\left(b_i + \sum_{j = 1}^{N-i} \alpha^{j}b_{i+j} + \alpha^{N+1-i}\right)\right).\label{eq:cos}
		\end{align} The final equality holds because for every $j <i$, $\alpha^{j-i}$ is a positive power of 2, so $\alpha^{j-i}\pi b_j$ is a multiple of $2\pi$. We will use the following fact: since \[0 < \sum_{j = 1}^{N-i} \alpha^{j}b_{i+j} + \alpha^{N+1-i} \leq \sum_{j = 1}^{N-i+1} \alpha^{j} < \sum_{j = 1}^{\infty} \alpha^{j} = \frac{\alpha}{1 - \alpha},\] the argument of $\cos(\cdot)$ in Equation~\eqref{eq:cos} lies strictly between $\pi b_i$ and $\pi b_i + \frac{\pi\alpha}{1 - \alpha}.$
		
		Suppose $b_i = 0$. Since $\alpha \leq \frac{1}{2}$, we know that $\frac{\pi\alpha}{1 - \alpha} \leq \pi$. Therefore, $\cos(\cdot)$ is monotone decreasing on the interval $\left[0, \frac{\pi\alpha}{1 - \alpha}\right]$. Moreover, we know that $\frac{1}{2}\cos \frac{\pi\alpha}{1 - \alpha} \geq c$. Therefore, $f_{r_0}(x_i) = \frac{1}{2}\left(1 + \cos(r_0x_i)\right) \geq \frac{1}{2} + c$. Since $b_i = 0$, it must be that $\sigma_i = 1$, so $\sigma_i f_{r_0}(x_i) \geq c + \frac{1}{2} = c + \frac{\sigma_i}{2}$. Meanwhile, suppose $b_i = 1$. The function $\cos(\cdot)$ is monotone increasing on the interval $\left[\pi, \pi + \frac{\pi\alpha}{1 - \alpha}\right]$. Moreover, $\frac{1}{2}\cos\left(\pi + \frac{\pi\alpha}{1 - \alpha}\right) = -\frac{1}{2}\cos \frac{\pi\alpha}{1 - \alpha} \leq -c$. Therefore, $f_{r_0}(x_i) = \frac{1}{2}\left(1 + \cos(r_0x_i)\right) \leq \frac{1}{2} - c$. Since $b_i = 1$, it must be that $\sigma_i = -1$, so $\sigma_i f_{r_0}(x_i) \geq c - \frac{1}{2} = c + \frac{\sigma_i}{2}$. Since this is true for any $i \in [N]$, we have that \[\frac{1}{2N} \sum_{i = 1}^N \sigma_i \left(1 + \cos\left(r_0x_i\right)\right) \geq c + \frac{1}{2} \sum_{i = 1}^N \sigma_i,\] as claimed.
	\end{proof}
	We conclude this proof by showing that $\wcrad_N(\ell \circ \cF) \leq \frac{1}{2}$.
	\begin{claim}\label{claim:rad_upper}
		For any $N \geq 1$, $\wcrad_N(\ell \circ \cF) \leq \frac{1}{2}$.
	\end{claim}
	\begin{proof}[Proof of Claim~\ref{claim:rad_upper}] Let $\sample = \left\{\left(x_1, y_1\right), \dots, \left(x_N, y_N\right)\right\} \subset \left[\frac{1}{2\gamma^p}, \infty\right) \times [0,1]$ be an arbitrary set of points. For any assignment of the variables $\sigma_1, \dots, \sigma_N \in \{-1,1\}$, since $\left|f_r\left(x_i\right) - y_i\right| \in [0,1]$, \[\sup_{r \in \left(0, \gamma^p\right]}\sum_{i = 1}^N \sigma_i \left|f_r\left(x_i\right) - y_i\right| \leq \sum_{i = 1}^N \textbf{1}_{\left\{\sigma_i = 1\right\}}.\] Therefore, \[\wcrad_N(\ell \circ \cF) = \sup_{\left(x_1, y_1\right), \dots, \left(x_N, y_N\right)}\frac{1}{N} \E_{\vec{\sigma}}\left[\sup_{r \in \left(0, \gamma^p\right]}\sum_{i = 1}^N \sigma_i \left|f_r\left(x_i\right) - y_i\right|\right] \leq \frac{1}{N} \E_{\vec{\sigma}}\left[\sum_{i = 1}^N \textbf{1}_{\left\{\sigma_i = 1\right\}}\right] = \frac{1}{2},\] as claimed.
	\end{proof}
	Together, Claims~\ref{claim:rad_lower} and \ref{claim:rad_upper} imply that for every $N \geq 1$, $\wcrad_N(\ell \circ \cF) = \frac{1}{2}$.
\end{proof}
\subsection{Connection to statistical learnability}\label{app:statistical}
\begin{theorem}\label{thm:approximable}
	Let $\cF = \left\{f_{\vec{r}} \mid \vec{r} \in \cR\right\} \subseteq [0,1]^{\cX}$ and $\cG = \left\{g_{\vec{r}} \mid \vec{r} \in \cR\right\} \subseteq [0,1]^{\cX}$ be two sets of functions.
	Suppose the dual class $\cG^*$ $(\gamma, \infty)$-approximates the dual class $\cF^*$. If $\cG$ is statistically learnable, then $\cF$ is $\gamma$-statistically learnable.
\end{theorem}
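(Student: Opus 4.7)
The plan is to reuse the learning algorithm that witnesses statistical learnability of $\cG$ as the learning algorithm for $\cF$, and then to show that the only price paid is an additive $\gamma$ coming from the approximation. Concretely, let $\cA$ be an algorithm such that $\sup_{\dist} \E_{\sample \sim \dist^N}[L_{\dist}(\cA_{\sample}) - \inf_{g \in \cG} L_{\dist}(g)] \to 0$; we will use this same $\cA$ when bounding $\cV_N(\cF)$.

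The key step is the deterministic comparison
\[
\inf_{g \in \cG} L_{\dist}(g) \;\leq\; \inf_{f \in \cF} L_{\dist}(f) + \gamma,
\]
which I would prove as follows. Because $\cF$ and $\cG$ share the parameter set $\cR$, the $(\gamma,\infty)$-approximation hypothesis $\norm{f_x^* - g_x^*}_\infty \leq \gamma$ translates, via the identity $f_{\vec r}(x) = f_x^*(\vec r)$ and likewise for $g$, into the pointwise bound $|f_{\vec r}(x) - g_{\vec r}(x)| \leq \gamma$ for every $\vec r \in \cR$ and every $x \in \cX$. A one-line triangle inequality for absolute loss then gives $|g_{\vec r}(x) - y| \leq |f_{\vec r}(x) - y| + \gamma$, and taking expectations over $(x,y) \sim \dist$ yields $L_{\dist}(g_{\vec r}) \leq L_{\dist}(f_{\vec r}) + \gamma$ for every $\vec r$. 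Taking the infimum over $\vec r \in \cR$ on both sides (using $\{g_{\vec r}\} \subseteq \cG$ on the left and $\{f_{\vec r}\} = \cF$ on the right) gives the displayed inequality.

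With this inequality in hand, the rest is bookkeeping. For any distribution $\dist$ and any $N$, write the excess loss of $\cA$ with respect to $\cF$ as
\[
\E_{\sample \sim \dist^N}\bigl[L_{\dist}(\cA_{\sample})\bigr] - \inf_{f \in \cF} L_{\dist}(f)
= \Bigl(\E_{\sample \sim \dist^N}\bigl[L_{\dist}(\cA_{\sample})\bigr] - \inf_{g \in \cG} L_{\dist}(g)\Bigr) + \Bigl(\inf_{g \in \cG} L_{\dist}(g) - \inf_{f \in \cF} L_{\dist}(f)\Bigr).
\]
The first parenthesized term is at most $\sup_{\dist'} \E_{\sample \sim \dist'^N}[L_{\dist'}(\cA_{\sample}) - \inf_{g \in \cG} L_{\dist'}(g)]$, which tends to $0$ as $N \to \infty$ by statistical learnability of $\cG$; the second is at most $\gamma$ by the key step. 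Taking the supremum over $\dist$ and then the infimum over algorithms bounds $\cV_N(\cF)$ by the sum, so $\limsup_{N \to \infty} \cV_N(\cF) \leq \gamma$, which is exactly $\gamma$-statistical learnability.

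There is no real obstacle here beyond recognizing that the $L^\infty$ hypothesis is exactly what converts a supremum over $\cR$ (as in the dual) into a uniform pointwise bound on primals indexed by the same $\vec r$; this is what lets a single algorithm that competes with $\inf_{g \in \cG} L_{\dist}(g)$ also compete with $\inf_{f \in \cF} L_{\dist}(f)$ up to $\gamma$. An $L^p$ approximation for $p<\infty$ would not give a pointwise bound on $|f_{\vec r}(x) - g_{\vec r}(x)|$, which is consistent with the negative result of Theorem~\ref{thm:not_learnable}.
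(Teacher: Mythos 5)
Your proposal is correct and follows essentially the same route as the paper: both reduce the $(\gamma,\infty)$-approximation to the pointwise bound $\left|f_{\vec{r}}(x) - g_{\vec{r}}(x)\right| \leq \gamma$, use the triangle inequality for absolute loss to conclude $\inf_{\vec{r}} L_{\dist}\left(f_{\vec{r}}\right) \geq \inf_{\vec{r}} L_{\dist}\left(g_{\vec{r}}\right) - \gamma$, and then transfer the excess-risk guarantee to obtain $\cV_N(\cF) \leq \cV_N(\cG) + \gamma$. The only cosmetic difference is that you fix a near-optimal algorithm for $\cG$ up front while the paper argues for an arbitrary algorithm before taking the infimum, which amounts to the same bound.
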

\begin{proof}
	We will prove that for all integers $N \geq 1$, \[\cV_N(\cF) = \inf_{\cA}\sup_{\dist} \E_{\sample \sim \dist^N}\left[L_{\dist}(\cA_{\sample}) - \inf_{\vec{r} \in \cR}L_{\dist}\left(f_{\vec{r}}\right)\right] \leq \cV_N(\cG) + \gamma.\] Since $\lim_{N \to \infty} \cV_N(\cG) = 0$, this implies that $\lim_{N \to \infty} \cV_N(\cF) \leq \gamma$.
	
	To this end, fix an arbitrary learning algorithm $\bar{\cA}: (\cX \times [0,1])^N \to [0,1]^{\cX}$, distribution $\bar{\dist}$ over $\cX \times [0,1]$, element $\bar{x} \in \cX$, and parameter vector $\bar{\vec{r}} \in \cR$. Since the dual class $\cG^*$ $(\gamma, \infty)$-approximates the dual class $\cF^*$, we know that $\left|g_{\bar{x}}^*\left(\bar{\vec{r}}\right) - f_{\bar{x}}^*\left(\bar{\vec{r}}\right)\right| = \left|g_{\bar{\vec{r}}}\left(\bar{x}\right) - f_{\bar{\vec{r}}}\left(\bar{x}\right) \right| \leq \gamma.$ Since this inequality holds for all $\bar{x} \in \cX$, we also have that \begin{align*}L_{\bar{\dist}}\left(f_{\bar{\vec{r}}}\right) &= \E_{(x,y) \sim \bar{\dist}}\left[\left|f_{\bar{\vec{r}}}(x) - y\right|\right]\\
	&= \E_{(x,y) \sim \bar{\dist}}\left[\left|g_{\bar{\vec{r}}}(x) - y - \left(g_{\bar{\vec{r}}}(x) - f_{\bar{\vec{r}}}(x)\right)\right|\right]\\
	&\geq \E_{(x,y) \sim \bar{\dist}}\left[\left|g_{\bar{\vec{r}}}(x) - y\right| - \left|g_{\bar{\vec{r}}}(x) - f_{\bar{\vec{r}}}(x)\right|\right]\\
	&\geq L_{\bar{\dist}}\left(g_{\bar{\vec{r}}}\right) - \gamma\\
	&\geq \inf_{\vec{r} \in \cR}L_{\bar{\dist}}\left(g_{\vec{r}}\right) - \gamma.\end{align*} These inequalities holds for all parameter vectors $\bar{\vec{r}} \in \cR$, which implies that $\inf_{\vec{r} \in \cR}L_{\bar{\dist}}\left(f_{\vec{r}}\right) \geq \inf_{\vec{r} \in \cR}L_{\bar{\dist}}\left(g_{\vec{r}}\right) - \gamma.$ Therefore, \begin{align*}
	\E_{\sample \sim \bar{\dist}^N}\left[L_{\bar{\dist}}\left(\bar{\cA}_{\sample}\right) - \inf_{\vec{r} \in \cR}L_{\bar{\dist}}\left(f_{\vec{r}}\right)\right]&\leq \E_{\sample \sim \bar{\dist}^N}\left[L_{\bar{\dist}}\left(\bar{\cA}_{\sample}\right) - \inf_{\vec{r} \in \cR}L_{\bar{\dist}}\left(g_{\vec{r}}\right)\right] + \gamma\\
	&\leq \sup_{\dist}\E_{\sample \sim \dist^N}\left[L_{\dist}\left(\bar{\cA}_{\sample}\right) - \inf_{\vec{r} \in \cR}L_{\dist}\left(g_{\vec{r}}\right)\right] + \gamma.
	\end{align*}
	Since this inequality holds for every distribution $\dist$, we have that \[\sup_{\dist}\E_{\sample \sim \dist^N}\left[L_{\dist}\left(\bar{\cA}_{\sample}\right) - \inf_{\vec{r} \in \cR}L_{\dist}\left(f_{\vec{r}}\right)\right] \leq \sup_{\dist}\E_{\sample \sim \dist^N}\left[L_{\dist}\left(\bar{\cA}_{\sample}\right) - \inf_{\vec{r} \in \cR}L_{\dist}\left(g_{\vec{r}}\right)\right] + \gamma.\] Therefore, \[\cV_N(\cF) = \inf_{\cA}\sup_{\dist} \E_{\sample \sim \dist^N}\left[L_{\dist}\left(\cA_{\sample}\right) - \inf_{\vec{r} \in \cR}L_{\dist}\left(f_{\vec{r}}\right)\right]\leq \sup_{\dist}\E_{\sample \sim \dist^N}\left[L_{\dist}\left(\bar{\cA}_{\sample}\right) - \inf_{\vec{r} \in \cR}L_{\dist}\left(g_{\vec{r}}\right)\right] + \gamma.\] Finally, since this inequality holds for every learning algorithm $\bar{\cA}$, we have that 
	\[\cV_N(\cF) \leq \inf_{\cA}\sup_{\dist} \E_{\sample \sim \dist^N}\left[L_{\dist}\left(\cA_{\sample}\right) - \inf_{\vec{r} \in \cR}L_{\dist}\left(g_{\vec{r}}\right)\right] + \gamma = \cV_N(\cG) + \gamma,\] as claimed.
\end{proof}

However, this positive result, Theorem~\ref{thm:approximable}, fails to hold when $L^p$-norm defining the approximation guarantee is not the $L^{\infty}$-norm.

\begin{theorem}\label{thm:not_learnable_statistical}
	For any $\gamma \in (0,1/4)$ and any $p \in [1, \infty)$, there exist function classes $\cF, \cG \subset [0,1]^{\cX}$ with the following properties:
	\begin{enumerate}
		\item The dual class $\cG^*$ $(\gamma, p)$-approximates the dual $\cF^*$.
		\item The class $\cG$ is statistically learnable.
		\item The class $\cF$ is not $\gamma$-statistically learnable.
	\end{enumerate}
\end{theorem}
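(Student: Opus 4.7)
The plan is to reuse the same construction as in Theorem~\ref{thm:not_learnable}, since (1) the approximation claim is literally what that theorem gives, and (2) the machinery developed in Section~\ref{app:notation} translates a Rademacher complexity lower bound into a statistical (un)learnability statement essentially for free. Concretely, take $\cR = (0, \gamma^p]$, $\cX = [\gamma^{-p}/2, \infty)$, $f_r(x) = \frac{1}{2}(1 + \cos(rx))$ with $\cF = \{f_r : r \in \cR\}$, and $g_r(x) = \frac{1}{2}$ with $\cG = \{g_r : r \in \cR\}$.

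First I would dispatch property~(1) by invoking Lemma~\ref{lem:approximates}, which states that this exact $\cG^*$ $(\gamma, p)$-approximates this exact $\cF^*$. Next, for property~(2), observe that $\cG$ consists of identical copies of the single constant function $x \mapsto \frac{1}{2}$. Consequently, the learning algorithm $\cA$ that ignores its input sample and always returns this constant function satisfies $L_\dist(\cA_\sample) = \inf_{\vec{r} \in \cR} L_\dist(g_{\vec{r}})$ for every distribution $\dist$ over $\cX \times [0,1]$. Hence $\cV_N(\cG) = 0$ for all $N \geq 1$, so $\lim_{N \to \infty} \cV_N(\cG) = 0$ and $\cG$ is statistically learnable.

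The remaining work is property~(3). Here I would combine Lemma~\ref{lem:not_learnable}, which establishes $\wcrad_N(\ell \circ \cF) = \frac{1}{2}$ for every $N \geq 1$, with the lower bound of \citet{Sridharan12:Learning} reproduced as Theorem~\ref{thm:rad_lb}. Plugging in gives
\[\cV_N(\cF) \;\geq\; \wcrad_{2N}(\ell \circ \cF) - \tfrac{1}{2}\wcrad_N(\ell \circ \cF) \;=\; \tfrac{1}{2} - \tfrac{1}{4} \;=\; \tfrac{1}{4}\]
for every $N \geq 1$. Since the hypothesis $\gamma < \tfrac{1}{4}$ implies $\lim_{N \to \infty} \cV_N(\cF) \geq \tfrac{1}{4} > \gamma$, the class $\cF$ fails the $\gamma$-statistical learnability bound of Definition~\ref{def:gamma}.

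There is no real obstacle to this proof, since Lemmas~\ref{lem:approximates} and~\ref{lem:not_learnable} already carry the technical weight; the only thing to be careful about is making sure the strict inequality $\gamma < \tfrac{1}{4}$ is used correctly when contrasting $\lim_N \cV_N(\cF)$ with the tolerance $\gamma$ in Definition~\ref{def:gamma}, and noting that $\cG$ being a family of identical functions (rather than a singleton as a set) is irrelevant since learnability only depends on the image of the parameterization.
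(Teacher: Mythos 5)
Your proof is correct and matches the paper's argument essentially verbatim: the same sinusoidal construction, Lemma~\ref{lem:approximates} for the approximation, and Lemma~\ref{lem:not_learnable} combined with Theorem~\ref{thm:rad_lb} to conclude $\cV_N(\cF) \geq \frac{1}{4} > \gamma$. Your explicit verification of property~(2) via the trivial constant-output algorithm is a small addition the paper leaves implicit, but it is the obvious and intended justification.
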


\begin{proof} The function classes $\cF$ and $\cG$ are the same as those in Theorem~\ref{thm:not_learnable}. Let $t = \gamma^p$, $a = \gamma^{-p} / 2$, $\cR = (0,t]$, and $\cX = [a, \infty)$. For any $r \in \cR$ and $x \in \cX$, let $f_r(x) = \frac{1}{2}(1 + \cos(rx))$ and $\cF = \left\{ f_r \mid r \in \cR\right\}$. For any $r \in \cR$ and $x \in \cX$, let $g_r(x) = \frac{1}{2}$ and $\cG = \left\{g_{r} \mid r \in \cR\right\}$.
	
In Lemma~\ref{lem:approximates}, we prove that the dual class $\cG^*$ $(\gamma, p)$-approximates the dual class $\cF^*$. From Lemma~\ref{lem:not_learnable} in Appendix~\ref{app:learnablility}, we know that for every $N \geq 1$, $\wcrad_N(\ell \circ \cF) = \frac{1}{2}.$
	Therefore, by Theorem~\ref{thm:rad_lb}, $\cV_N(\cF) \geq \frac{1}{4} > \gamma$, so $\cF$ is not $\gamma$-statistically learnable.
\end{proof}
Theorem~\ref{thm:not_learnable_statistical} implies, for example, that even if every function $f_x^* \in \cF^*$ is close to the corresponding function $g_x^* \in \cG^*$ \emph{on average} over the parameter vectors $\vec{r} \in \cR$, the function class $\cF$ still may not be statistically learnable.

\end{document}